
\documentclass{article}

\usepackage{microtype}
\usepackage{graphicx}
\usepackage{subfigure}
\usepackage{booktabs} 
\usepackage{multirow}
\usepackage{multicol}
\usepackage{makecell}
\usepackage{wrapfig} 
\usepackage{amssymb}
\usepackage{array}
\usepackage{xcolor}
\usepackage{colortbl}
\usepackage[textsize=tiny]{todonotes}

\usepackage{hyperref}



\usepackage[accepted]{icml2025}

\usepackage{amsmath}
\usepackage{amssymb}
\usepackage{mathtools}
\usepackage{amsthm}
\usepackage{etoc} 
\usepackage[capitalize,noabbrev]{cleveref}

\theoremstyle{plain}
\newtheorem{theorem}{Theorem}[section]

\newtheorem{lemma}[theorem]{Lemma}

\theoremstyle{definition}
\newtheorem{definition}[theorem]{Definition}

\theoremstyle{remark}

\def\gN{{\mathcal{N}}}

\def\sR{{\mathbb{R}}}
\def\sE{{\mathbb{E}}}

\def\gP{{\mathcal{P}}}

\def\gD{{\mathcal{D}}}
\def\gE{{\mathcal{E}}}

\def\bw{{\mathbf w}}
\def\bu{{\mathbf u}}
\def\bv{{\mathbf v}}

\def\by{{\mathbf y}}
\def\bI{{\mathbf{I}}}

\def\bx{{\mathbf{x}}}
\def\bW{{\mathbf{W}}}
\def\beps{{\boldsymbol{\epsilon}}}

\def\bmu{{\boldsymbol{\mu}}}
\def\bSigma{{\boldsymbol{\Sigma}}}

\usepackage{enumitem}

\usepackage[textsize=tiny]{todonotes}

\icmltitlerunning{Can Diffusion Models Learn Hidden Inter-Feature Rules Behind Images?}

\begin{document}
\twocolumn[
\icmltitle{Can Diffusion Models Learn Hidden Inter-Feature Rules Behind Images?}



\icmlsetsymbol{equal}{*}

\begin{icmlauthorlist}
\icmlauthor{Yujin Han}{equal,yyy}
\icmlauthor{Andi Han}{equal,comp}
\icmlauthor{Wei Huang}{comp}
\icmlauthor{Chaochao Lu}{sch}
\icmlauthor{Difan Zou}{yyy}
\end{icmlauthorlist}

\icmlaffiliation{yyy}{The University of Hong Kong}
\icmlaffiliation{comp}{RIKEN AIP}
\icmlaffiliation{sch}{Shanghai AI Laboratory}
\emailauthor{Yujin Han}{yujinhan@connect.hku.hk}
\emailauthor{Andi Han}{andi.han@riken.jp}
\icmlcorrespondingauthor{Difan Zou}{dzou@cs.hku.hk}

\icmlkeywords{Diffusion Model, Deep Generative Model}

\vskip 0.3in
]


\printAffiliationsAndNotice{\icmlEqualContribution} 

\begin{abstract}
Despite the remarkable success of diffusion models (DMs) in data generation, they exhibit specific failure cases with unsatisfactory outputs. We focus on one such limitation: the ability of DMs to learn hidden rules between image features. Specifically, for image data with dependent features ($\bx$) and ($\by$) (e.g., the height of the sun ($\bx$) and the length of the shadow ($\by$)), we investigate whether DMs can accurately capture the inter-feature rule ($p(\by|\bx)$). Empirical evaluations on mainstream DMs (e.g., Stable Diffusion 3.5) reveal consistent failures, such as inconsistent lighting-shadow relationships and mismatched object-mirror reflections. Inspired by these findings, we design four synthetic tasks with strongly correlated features to assess DMs' rule-learning abilities. Extensive experiments show that while DMs can identify coarse-grained rules, they struggle with fine-grained ones. Our theoretical analysis demonstrates that DMs trained via denoising score matching (DSM) exhibit constant errors in learning hidden rules, as the DSM objective is not compatible with rule conformity. To mitigate this, we introduce a common technique - incorporating additional classifier guidance during sampling, which achieves (limited) improvements. Our analysis reveals that the subtle signals of fine-grained rules are challenging for the classifier to capture, providing insights for future exploration.
\end{abstract}
\section{Introduction}
\label{sec:intro}
Despite the remarkable capabilities demonstrated by diffusion models (DMs) in generating realistic images \cite{ho2020denoising,song2020score,vahdat2021score,dhariwal2021diffusion,karras2022elucidating,tian2024visual}, videos \cite{ho2022video,yu2024efficient,yuan2024instructvideo}, and audio \cite{liu2023audioldm,yang2024usee,lemercier2024diffusion}, they still encounter specific failures in synthesis quality, such as anatomically incorrect human poses \cite{borji2023qualitative,zhang2024diffbody,huang2024humannorm} and misalignment between generated content and prompts \cite{feng2022training,borji2023qualitative,chefer2023attend,liu2023discovering,lim2024addressing}, which could harm the reliability and applicability of DMs in real-world scenarios.

We focus on a specific type of failure with limited attention: the failure of DMs in learning hidden inter-feature rules behind images. Specifically, consider image data containing dependent feature pairs $(\bx,\by)$, such as the height of the sun ($\bx$) affecting the length of a pole's shadow ($\by$). Our investigation centers on whether DMs targeting the joint distribution $p(\bx, \by)$ can accurately capture the underlying relationships between $\bx$ and $\by$, effectively recovering the conditional distribution $p(\by|\bx)$. Theoretically, a diffusion model that perfectly estimates the joint distribution should naturally capture the conditional distribution, thereby learning the latent rules between features. However, in practice, numerous factors, such as non-negligible score function estimation errors, can cause the sampled joint distribution to deviate significantly from the true distribution \cite{chen2022sampling,chen2023improved,benton2024nearly}. How do these deviations propagate to inter-feature rule learning? This gap between theory and practice remains largely unexplored. 
\begin{figure*}
\setlength{\abovecaptionskip}{-1cm}
  \centering
  \includegraphics[width=1.\textwidth, height=0.34\textheight]{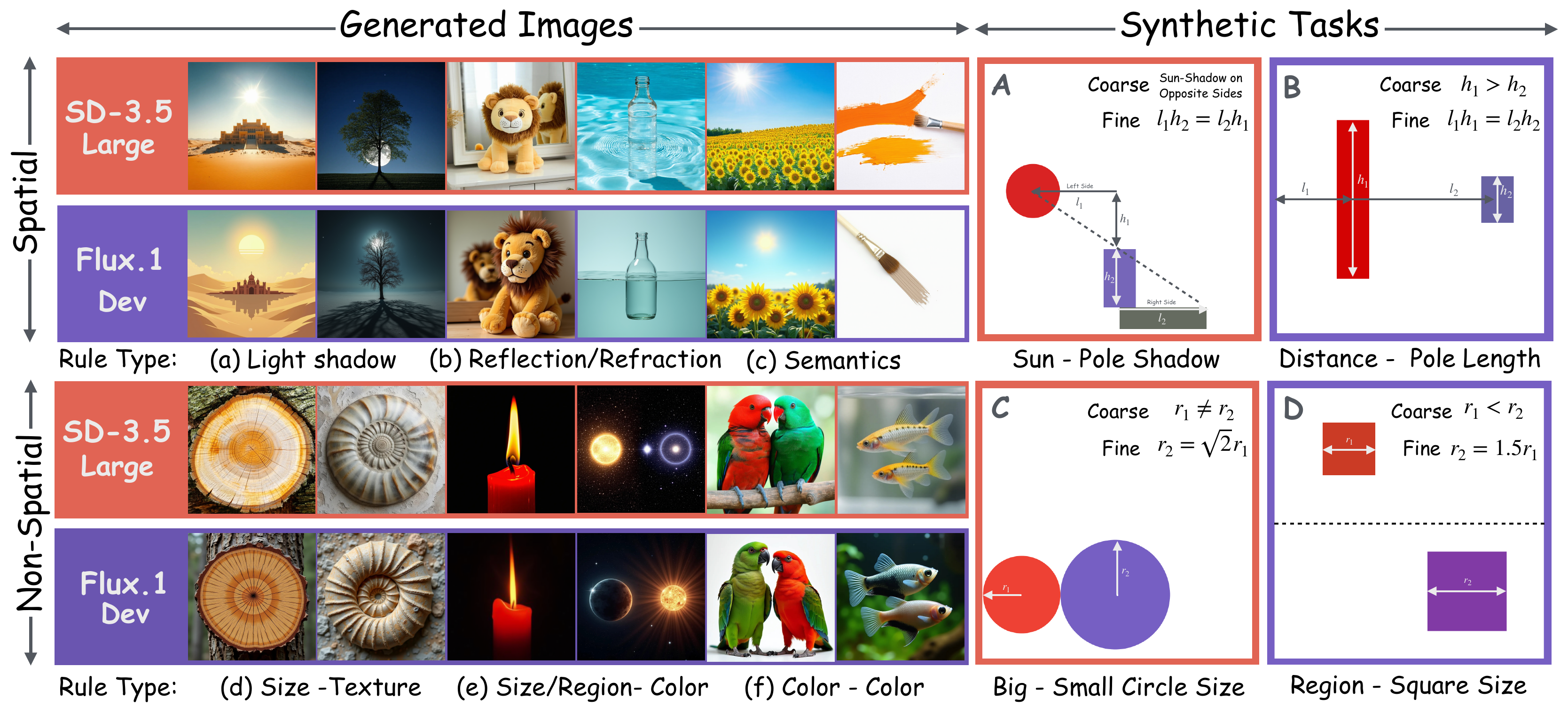}
\vspace*{-8mm}
  \caption{\textbf{Synthetic Tasks Inspired by Real-World Insights.} Based on whether inter-feature rules involve spatial dependencies, we categorize the failure cases into spatial and non-spatial rules. \textbf{Spatial rules} include: (a) Light-shadow, where evaluated DMs generate unreasonable multiple shadows or incorrect shadow flips; (b) Reflection/Refraction, showing incorrect mirror rules or missing refraction effects below water surface; (c) Semantics, such as inconsistencies between sunflower orientation and sun position, or brush and canvas colors. \textbf{Non-spatial rules} involve: (d) Size-Texture, like mismatches between tree diameter and growth rings; (e) Size/Region-Color, where evaluated models fail to capture burning candle's color variations and star size-color relationships (e.g., red giants and white dwarf); (f) Color-Color, as in Eclectus parrots' body-beak color correlations that DMs fail to maintain. \cref{app:Details and More Example on Real-Wold Hidden Inter-Feature Rules} provides detailed explanations for each case. These failures of mainstream DMs in handling real-world inter-feature rules inspire our design of four synthetic tasks.}
  \label{fig:real_synthetic}
  \vspace{-0.2cm}
\end{figure*}

Although existing studies have explored whether DMs can learn specific rules, they primarily focus on independent features, such as DMs' compositional capabilities \cite{okawa2024compositional,deschenauxgoing,wiedemer2024compositional}. Some works have investigated inter-feature dependencies in DMs, but the varying complexity of rules has led to contradictory findings. For example, DDPM has been reported to fail in generating images satisfying numerical equality constraints \cite{anonymous2025towards}, while succeeding in reasoning about shape patterns in RAVEN task \cite{wangdiffusion}. These inconsistencies highlight the need for a unified experimental setting that allows for adjustable rule difficulty, enabling an accurate evaluation of DMs' rule-learning capabilities. Moreover, existing studies rely heavily on empirical observations, lacking theoretical analysis to elucidate the limitations of DMs in rule conformity.

Our investigation into inter-feature rules begins with observing the limited ability of mainstream DMs (e.g., \texttt{SD-3.5 Large}, \texttt{Flux.1 Dev}) to capture real-world inter-feature rules, as illustrated in~\cref{fig:real_synthetic}, even though these models perform well on metrics such as FID  \footnote{\cref{app:Low FID and Worse Inter-Feature Leaning: A Gaussian Mixture Case} lists Mixture Gaussian as an example to demonstrate that low FID and incorrect inter-feature relationships in DMs' generations are not contradictory.}. Their errors in inter-feature relationships are evident in various scenarios, such as inconsistent relationships between sun positions and building shadows, mismatched reflections of toys in mirrors, and sunflowers failing to face the sun. Then, we carefully design four synthetic tasks to reflect real-world rule failures, ensuring the practical relevance of our findings. The rule of each task features two difficulty levels: coarse-grained rules (e.g., the sun and a pole's shadow should be on opposite sides) and fine-grained rules (e.g., the shadow's length as a precise function of the sun's height). This hierarchical, controllable framework enables a comprehensive evaluation of DMs' rule-learning capabilities. Next, through extensive experiments considering various factors including model architectures, training data size, and image resolution, we reach a consistent conclusion: \textit{DMs effectively learn coarse-grained rules but struggle with fine-grained ones}. 

Furthermore, we develop a rigorous theoretical analysis using a multi-patch data model with an inter-feature rule specified in terms of norm. We prove a constant error lower bound on learning the hidden rule via optimizing the DSM objective \cite{ho2020denoising} with a two-layer network. This demonstrates the incompatibility between learning joint distributions and identifying specific inter-feature rules.


Recognizing DMs' difficulty in learning inter-feature rules, we mitigate this issue by constructing contrastive pairs that satisfy either fine-grained or coarse-grained rules and then using them to train a classifier as additional guidance. While this strategy enhances rule-compliant sample generation, further improvements are still achievable. The in-depth analysis identifies that fine-grained rules exhibit weak signals, making accurate classifier training particularly challenging. We summarize our \textbf{key contributions} as follows:

\textit{Empirically}, inspired by mainstream DMs' struggles with real-world inter-feature rules, we innovatively create synthetic tasks with coarse/fine-grained rules to systematically assess DMs' rule learning ability in \cref{sec:real-world rule}.  Extensive experiments in \cref{sec:results} show that while DMs can learn coarse rules, their ability to grasp precise rules is limited.

\textit{Theoretically},
we rigorously analyze DMs on a synthetic multi-patch data distribution with a hidden norm dependency in \cref{sec:Theory}. 
We prove that the unconditional DDPM cannot learn the precise rule of norm constraint, which exhibits at least a constant error in approximating the desired score function. This identifies the limitation of the current DMs training paradigm and necessitates further improvements for learning hidden rules behind images.

\textit{Methodologically}, we mitigate DMs' inability to learn fine-grained rules by introducing guided diffusion with a contrastive-trained classifier in \cref{sec:mitigation}. However, the challenges of accurately classifying fine-grained rules identify room for improvement in our strategy. This problem, distinct from traditional classification tasks, involves detecting subtle distinctions between fine-grained and coarse-grained rules, highlighting valuable insights for future exploration.

\section{Related Work}
\label{sec:related}

We summarize prior studies on the ability of DMs to learn specific rules, and discuss the relations to inter-feature rules.


\textbf{Factual Knowledge Rules.} The violation of factual rules in DMs refers to generated images failing to accurately reflect factual information and common sense, often characterized as hallucinations in existing work \cite{aithal2024understanding,lim2024addressing,anonymous2025towards}. Typical examples include violating common sense, such as extra, missing, or distorted fingers \cite{aithal2024understanding,pelykh2024giving,ye2023diffusion}, unreadable text \cite{gong2022diffuseq,tang2023can,xu2024energy} and snowy deserts \cite{lim2024addressing}. Additionally, inconsistencies between generated images and given textual prompts \cite{liu2023discovering,fu2024enhancing,mahajan2024prompting,li2024sd4match} can be regarded as violations of prompt-based knowledge. Unlike inter-feature rules, factual knowledge rules \textit{do not involve relationships between multiple features} and are typically attributed to imbalanced training data distribution \cite{samuel2024generating} or mode interpolation caused by inappropriate smoothing of training data \cite{aithal2024understanding}.

\textbf{Independent Features Rules.}  Prior work has investigated DMs' ability to combine independent features, i.e., compositionality. Through controlled studies with independent concepts (e.g., color, shape, size), \citet{okawa2024compositional} observe that DDPM can successfully compose different independent features. Similar findings are reported in \cite{deschenauxgoing}, where interpolation between portraits without and with clear smiles resulted in generations with mild smiles. However, numerous studies highlight DMs' limitations in complex compositional tasks \cite{liu2022compositional,gokhale2022benchmarking,feng2022training,marioriyad2024diffusion}, potentially due to insufficient training data for reconstructing each individual feature \cite{wiedemer2024compositional}. These studies primarily examine compositional tasks with \textit{independent features}, in contrast to our focus on feature dependencies.

\textbf{Abstract (Dependent Feature) Rules.} This type closely aligns with our work, which studies feature relationships like shape consistency in generations. Prior studies give mixed conclusions on DDPM's rule-learning ability. For example, DDPM struggles with numerical addition rule \cite{anonymous2025towards} but maintains shape consistency rule in RAVEN task \cite{wangdiffusion}. Inconsistent rule complexity leads to ambiguous evaluation conclusions, and the lack of theoretical analysis leaves the underlying factors behind DMs' performance in rule learning poorly understood. Through controlled experiments with adjustable rule complexity, we provide a unified assessment of DMs' rule-learning abilities and offer a theoretical explanation of their fundamental limitations, as a result of their training paradigm.

\section{Exploring Hidden Inter-feature Rule Learning via Synthetic Tasks}
\label{sec:Synthetic Tasks}
In real-world image generation tasks, rules between features are often complex and difficult to define or quantify precisely. To systematically investigate DMs' ability in rule learning, as previous work \cite{okawa2024compositional,deschenauxgoing,anonymous2025towards,wangdiffusion}, we design simplified and controllable synthetic tasks in~\cref{fig:real_synthetic}. These synthetic tasks not only provide explicitly defined inter-feature rules but also abstract essential feature rules present in real-world data, thereby making our conclusions practically relevant. For example, Synthetic Task A in~\cref{fig:real_synthetic} simulates the \textit{Light-Shadow} relationship, while Task B simplifies the physical rules of \textit{Reflection/Refraction}.

\subsection{Synthetic Tasks Inspired by Real-World Insights}
\label{sec:real-world-synthetic rule}
\subsubsection{Real-World Hidden Inter-Feature Rules}
\label{sec:real-world rule}
Inspired by \citet{borji2023qualitative}, we investigate several common scenarios where inter-feature rules exist, as illustrated in~\cref{fig:real_synthetic}. Specially, we categorize these hidden rules into two types, \textit{spatial rules} and \textit{non-spatial rules}, based on whether the inter-feature relationships exist in the form of spatial arrangements or feature attributes themselves.

\textbf{Spatial Rules} are defined as constraints on the relative positions and layouts between features, such as the correlation between the sun's height and the shadow's length. In \cref{fig:real_synthetic}, scenario \textit{Light-shadow} demonstrates how the position of a light source should precisely determine the placement of building shadows. However, both 8-billion Multimodal \texttt{SD-3.5 Large}\footnote{https://huggingface.co/spaces/stabilityai/stable-diffusion-3.5-large}\cite{rombach2022high} and 12-billion model \texttt{Flux.1 Dev}\footnote{https://fal.ai/models/fal-ai/flux/dev}\cite{flux2023}, fail to generate proper shadows, either producing incorrect directions or merely creating symmetrical duplicates of the actual buildings. Similarly, in scenario  \textit{Reflection/Refraction}, while objects in front of mirrors should dictate the layout of their reflections, we observe completely unreasonable generations from both models. Furthermore, semantic consistency in \textit{Semantics} scenario is violated, as shown by sunflowers not facing the sun and mismatched paint colors between brush and canvas. 

\textbf{Non-Spatial Rules} are defined as correlations between intrinsic feature attributes, such as the relationship between an object's size and its color. For instance, in type \textit{Size -Texture}, tree trunk features should exhibit precise correlations between the diameter and annual ring count, and candle flames in type \textit{Size/Region- Color} should show constrained relationships between different flame zones and their colors. However, these fine-grained inter-feature constraints are ignored by both \texttt{SD-3.5 Large} and \texttt{Flux.1 Dev}. More detailed discussion and additional experiments for more advanced DMs are deferred to \cref{app:Details and More Example on Real-Wold Hidden Inter-Feature Rules}.


\subsubsection{Synthetic Tasks}
\label{sec:synthetic tasks}
Inspired by real-world rules in \cref{sec:real-world rule}, we design four synthetic tasks (A-D), each with two levels of rule granularity (coarse and fine), as shown in~\cref{fig:real_synthetic}. We provide a brief overview of synthetic tasks here, with more details presented in~\cref{app:Details and More Example on Synthetic Tasks}. Specially,

\textbf{Task A} is inspired by the spatial rules behind the \textit{Light-shadow} case, simulating the physical law between the sun and pole shadows. In Task A of \cref{fig:real_synthetic}, the \textit{coarse-grained rule} requires the sun and shadow to be on opposite sides of the pole, while the \textit{fine-grained rule} requires sun's center, pole top, and shadow endpoint align linearly, i.e., satisfying $l_1h_2=l_2h_1$ (see notations in Task A, \cref{fig:real_synthetic}).

\textbf{Task B} abstracts the spatial rule from the \textit{Reflection/Refraction} case, where an object's reflection size depends on its size and distance from the mirror. Task B uses two rectangles with lengths $h_1$ and $h_2$ (notations shown in Task B, \cref{fig:real_synthetic}) to simulate this perspective rule, where size diminishes with distance. Assuming the viewpoint is at the leftmost edge, the \textit{coarse-grained rule} requires the left rectangle (closer to the viewpoint) to be longer than the right one (farther from the viewpoint), i.e., $h_1 > h_2$, while the \textit{fine-grained rule} dictates rectangle lengths be proportional to their distances from the viewpoint, i.e., $l_1h_2=l_2h_1$.

\textbf{Task C} consists of two tangent circles of different radii, aiming to capture the relationship between shape/outlook and size as illustrated in non-spatial rule. The \textit{coarse-grained rule }simply requires distinct radii for the two circles, i.e., $r_1 \neq r_2$, while the \textit{fine-grained rule} specifies a precise ratio between the radii, requiring $r_2 = \sqrt{2} r_1$.

\textbf{Task D} simplifies the non-spatial rule from scenario \textit{Size/Region- Color} in \cref{fig:real_synthetic}, where, in candle flame generations, colors transition from blue near the wick to yellow at the outer regions. We construct two such squares, with smaller squares positioned in the upper half and larger ones in the lower half of the image. The \textit{coarse-grained rule} requires that the upper square's side length $l_1$ be smaller than the lower square's side length $l_2$, i.e., $l_1 < l_2$, while the \textit{fine-grained rule} specifically requires $l_2 = 1.5l_1$.
\begin{figure}[]
  \centering
  \includegraphics[width=0.50\textwidth]{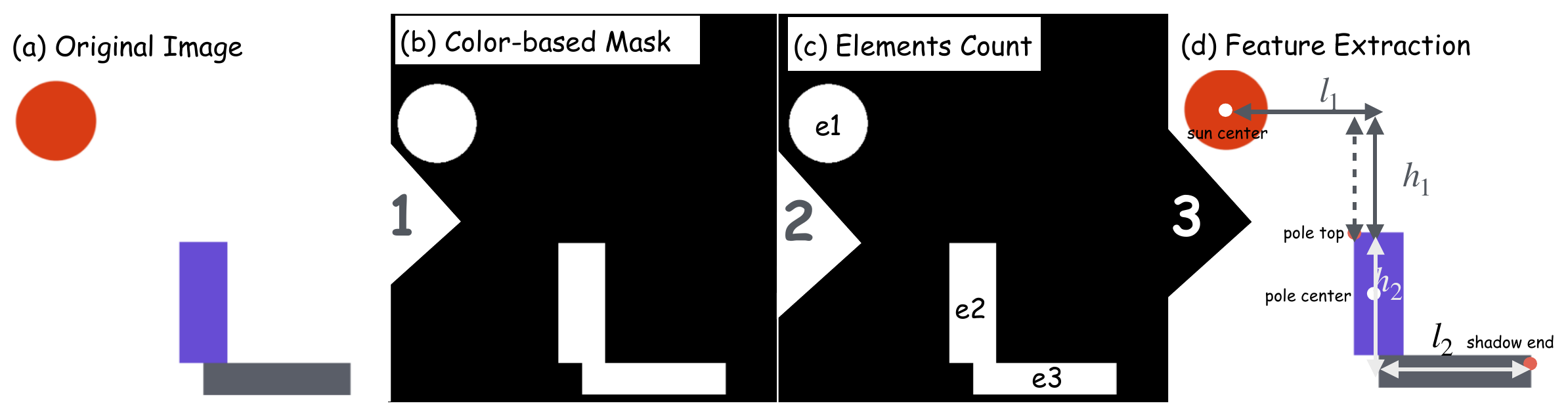}
\vspace*{-8mm}
  \caption{\textbf{Pipeline for extracting features.} Given an image, we first apply a color-based mask by using predefined colors, then count whether the number of masks meets expectations, and finally extract features of interest by marking the key points within masks.}
  \label{fig:metric_vis}
    \vspace{-0.6cm} 
\end{figure}
\subsection{Experimental and Evaluation Setup}
\label{sec:setup}
After designing synthetic tasks with well-defined inter-feature rules, we can systematically investigate the capability of DMs to learn these underlying relationships. 

\textbf{Experimental Setup.}  In subsequent experiments, we train an unconditional DDPM \cite{ho2020denoising} on four synthetic tasks. Unlike latent-space DMs (e.g., \texttt{SD-3.5 Large}), pixel-space DDPM makes the conformity of inter-feature relationships potentially simpler, as no additional compression-induced information loss occurs \cite{rombach2022high,yao2025reconstruction}. Following the training setting \cite{aithal2024understanding}, we fix the total timesteps at $T=1000$ and employ the widely-used U-Net architecture \cite{ronneberger2015u} as the denoiser. $4000$, $2000$, $2000$, and $2000$ samples are generated for synthetic task A, B, C and D, respectively, with an image size of $32 \times 32$. Additionally, in \cref{app:More Setting of Synthetic Tasks}, we explore more advanced architectures such as DiT \cite{peebles2023scalable} and SiT \cite{ma2024sit}, alongside larger synthetic datasets of $20000$ and $40000$ samples and higher image resolutions of $64 \times 64$. These factors enhance the training of DMs, thus leading to better alignment between generated and real data distributions \cite{chen2022sampling,benton2024nearly,chen2023improved} and enabling more effective learning of hidden rules. More experimental details  are in \cref{app:Details of DMs' Training}.
\begin{figure*}[t!]
\centering
    \hfill
    \subfigure[Task A]{\label{fig:metric_training_A}\includegraphics[width=0.24\textwidth]{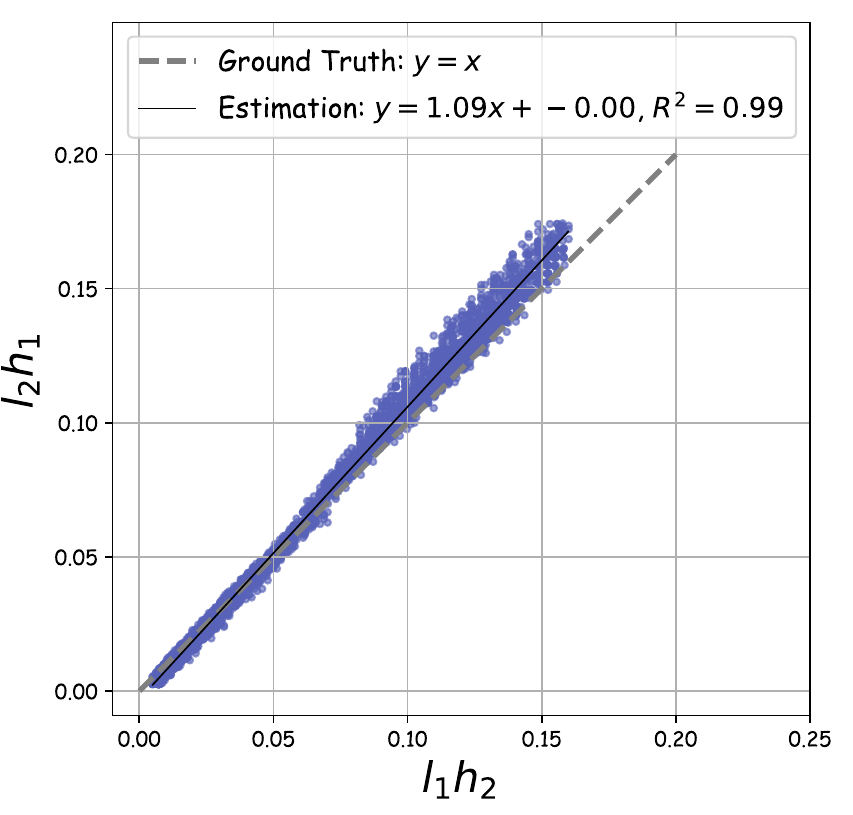}}
    \hfill
    \subfigure[Task B]{\label{fig:metric_training_B}\includegraphics[width=0.24\textwidth]{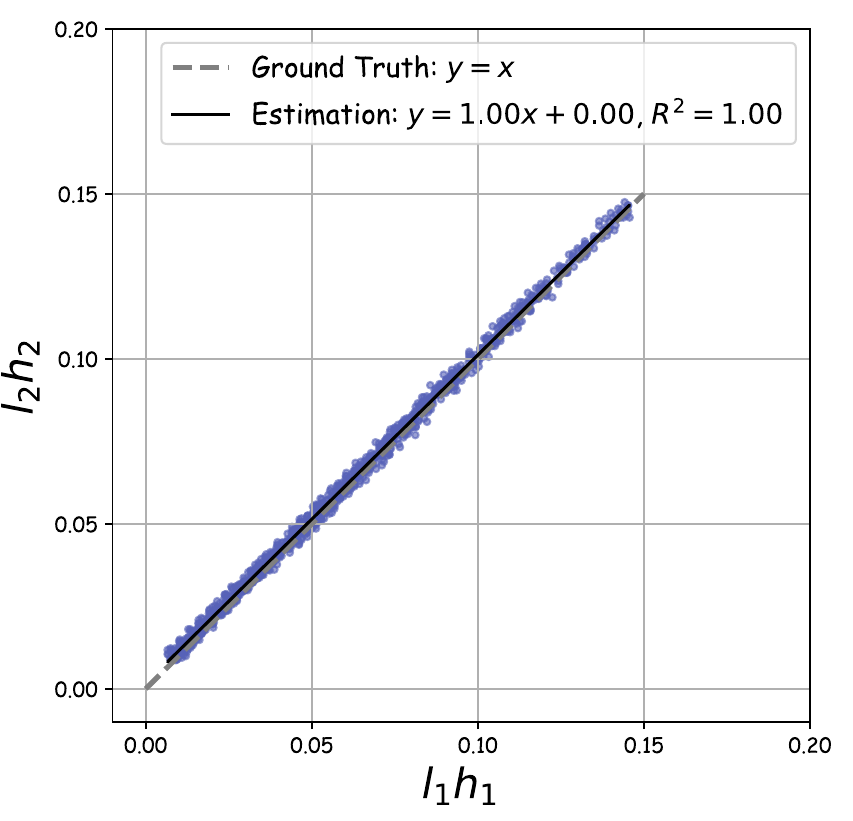}}
    \hfill
    \subfigure[Task C]{\label{fig:metric_training_C}\includegraphics[width=0.235\textwidth]{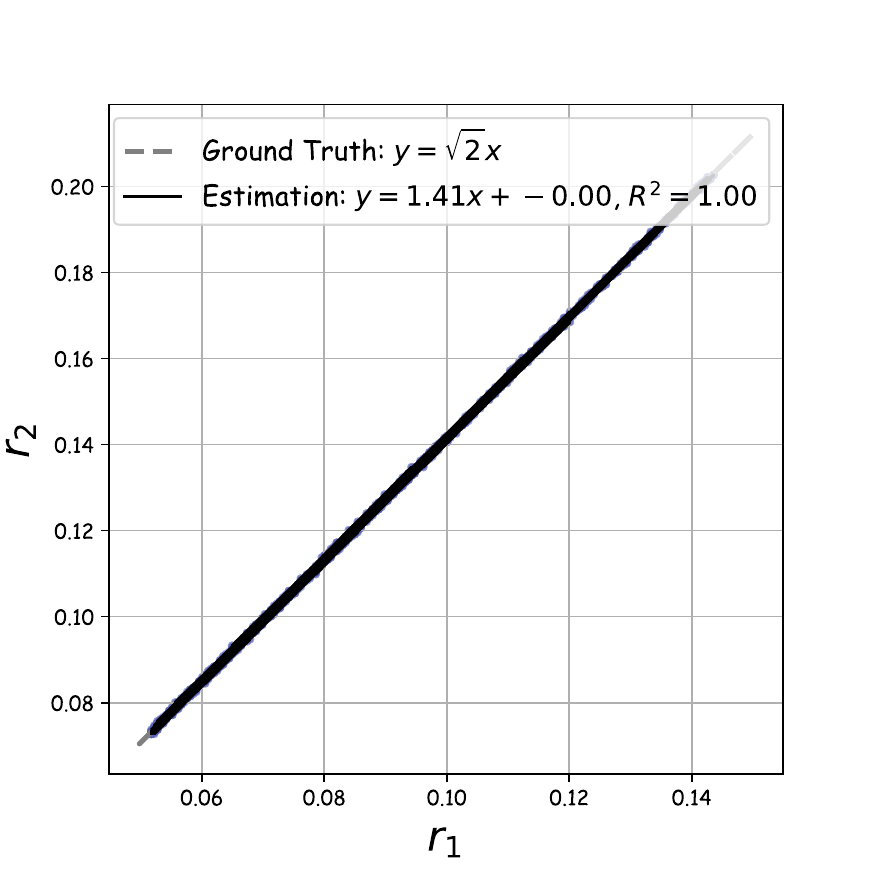}}
    \hfill
    \subfigure[Task D]{\label{fig:metric_training_D}\includegraphics[width=0.24\textwidth]{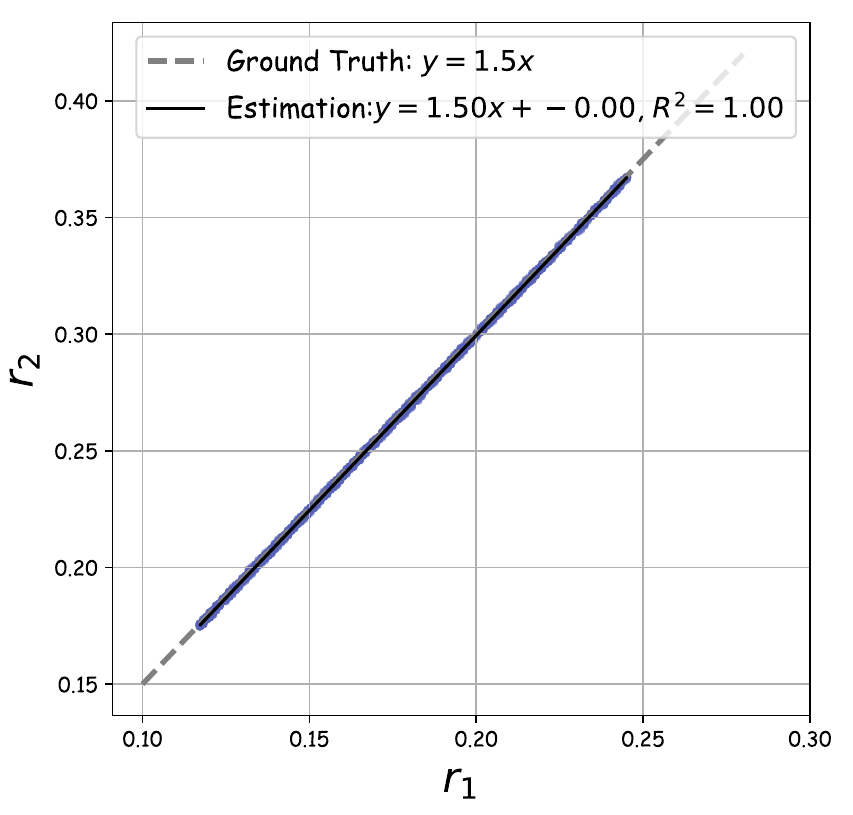}}
    \hfill
\vspace{-0.15in}
\caption{\textbf{Synthetic training data satisfies fine-grained rules.} To validate the evaluation method, we extract relevant features from the synthetic training data and check if they meet expectations, focusing on generations within the interval $[2.5\%,97.5\%]$ for stability. The closely matching Estimation and Ground Truth lines, along with an $R^2$ value near $1$, demonstrate effectiveness of the evaluation method.}
\vspace{-0.15in}
\label{fig:metict_training}
\end{figure*}

\begin{figure*}[t!]
\centering
    \hfill
    \subfigure[Task A]{\label{metric_gen_A}\includegraphics[width=0.24\textwidth]{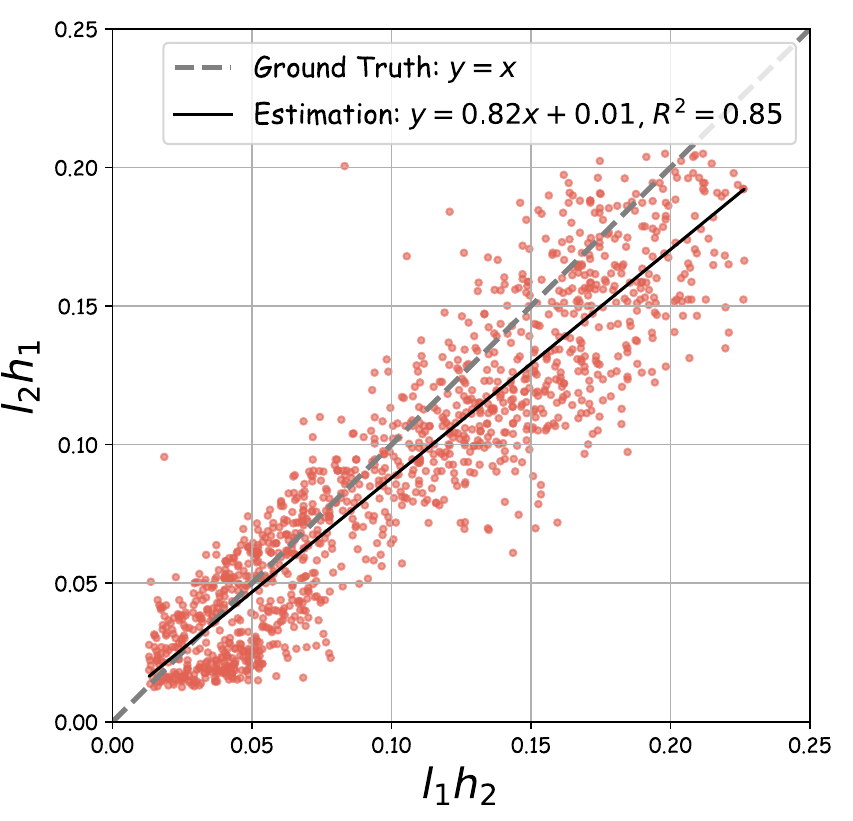}}
    \hfill
    \subfigure[Task B]{\label{metric_gen_B}\includegraphics[width=0.24\textwidth]{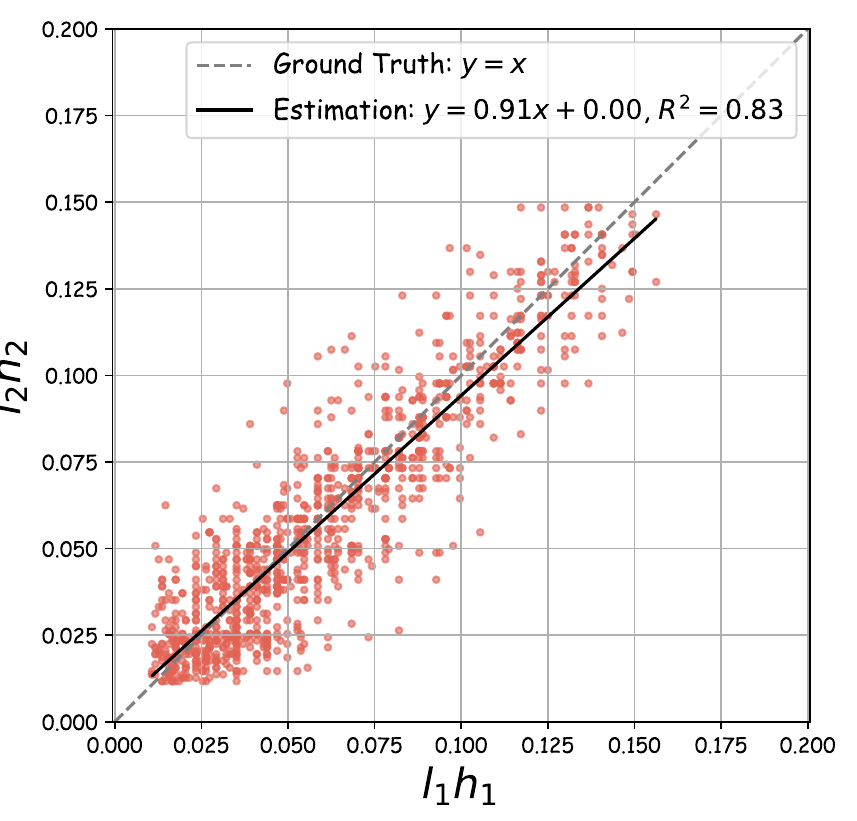}}
    \hfill
    \subfigure[Task C]{\label{metric_gen_C}\includegraphics[width=0.24\textwidth]{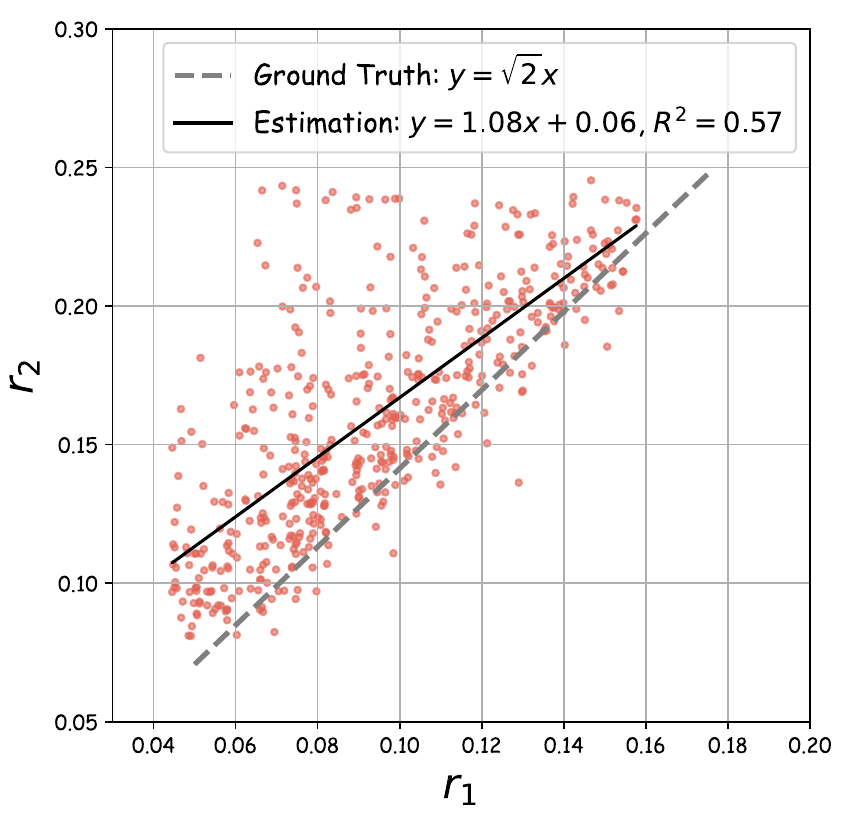}}
    \hfill
    \subfigure[Task D]{\label{metric_gen_D}\includegraphics[width=0.24\textwidth]{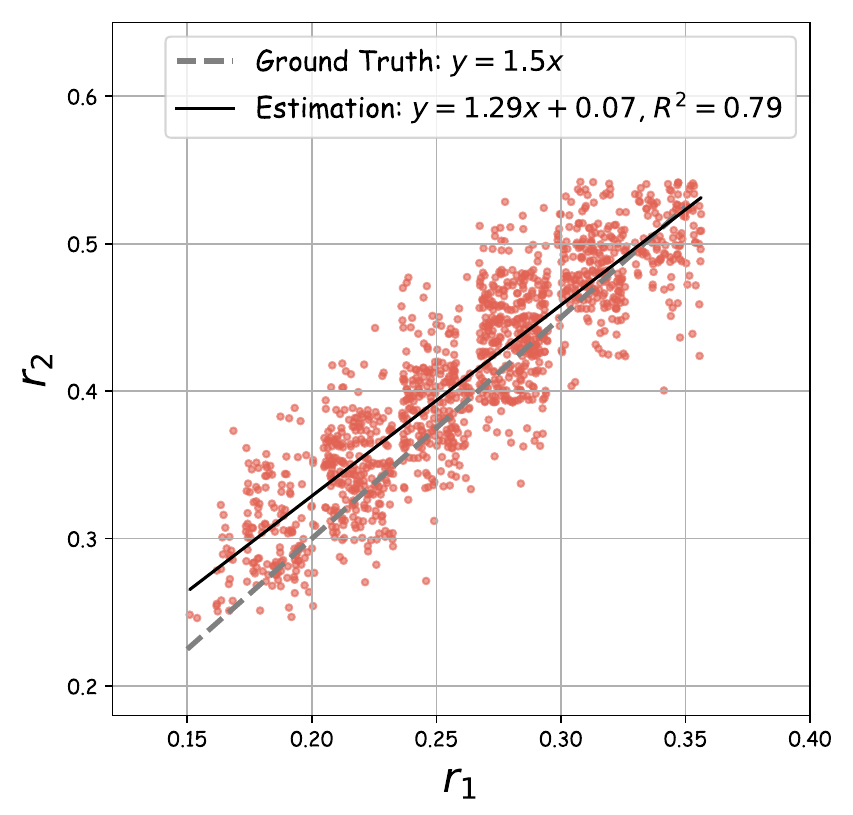}}
    \hfill
\vspace{-0.15in}
\caption{\textbf{Generated data does not satisfy fine-grained rules.} Considering generated samples within the $[2.5\%, 97.5\%]$ range, we extract focused features and check if they meet fine-grained rules. The Estimation line, far from the Ground Truth line, and an $R^2$ value less than $1$, reveal DMs' failure in learning fine-grained rules. \cref{app:More Results of Synthetic Tasks} shows generated images that violate the fine-grained rules.}
\vspace{-0.15in}
\label{fig:gen_metric}
\end{figure*}

\textbf{Evaluation Method.} To evaluate whether generated images follow the inter-feature rules, \cref{fig:metric_vis} designs a three-step feature extraction pipeline: (1) Color-based Mask: Segment element masks (e.g., sun, pole, shadow in Task A) based on predefined color (HSV) ranges when synthesizing training data; (2) Elements Count: Apply contour detection based on masks to verify the presence of essential elements, marking images as \texttt{Invalid} if any are missing; (3) Feature Extraction: Extract key feature points (e.g., sun center, pole top/center and shadow endpoint in~\cref{fig:metric_vis}) and compute geometric features of interest, such as horizontal sun-to-pole distance $l_1$, vertical sun-to-pole-top distance $h_1$, pole height $h_2$, shadow length $l_2$. All features are scaled to $[0,1]$ by dividing them by the image size to eliminate scale effects.

With these features, we can verify whether generated images satisfy predefined rules. For example, in Task A, we examine: (1) Coarse-grained rule: the sun and shadow are on opposite sides of the pole by comparing the relative positions of the sun center, pole center, and shadow endpoint; (2) Fine-grained rule: validate the precise geometric relationship $l_1h_2 = l_2h_1$. We extend the same feature extraction approach in~\cref{fig:metric_vis} to validate inter-feature rules in Tasks $B$, $C$, and $D$. We apply the evaluation method to synthetic training data to validate our approach's effectiveness, as shown in \cref{fig:metict_training}, which demonstrates a close alignment between the estimation and ground truth across all tasks.


\subsection{Experimental Results}
\label{sec:results}

For each synthetic task, we generate 2000 samples and report the evaluated results as follows:
\begin{table}[]
    \centering
    \caption{\textbf{DMs satisfy coarse rules.} \cref{tab:coarse-grained rule} shows the invalid ratio is around $20\%$–$40\%$. And DMs can learn coarse rules with one exception in Task A, which is visualized in \cref{app:Details of DMs' Training}.}
    \label{tab:coarse-grained rule}
    \begin{tabular}{cccc}
        \toprule
       {Task} & {Invalid (\%)} &  {Coarse-Grained Violations} \\
        \midrule
         A & 30.15 &  1 \\
         B & 40.45 & 0 \\
         C & 41.75 & 0 \\
         D & 24.90 & 0 \\
        \bottomrule
    \end{tabular}
    \vspace{-0.2in}
\end{table}

\textbf{DMs' Success on Coarse-Grained Rules.} \cref{tab:coarse-grained rule} demonstrates that DMs rarely generate samples that violate the coarse-grained rules across all tasks. This observation aligns with expectations: generating samples that violate coarse-grained rules requires DMs to generate out of the (training) distribution (OOD) - an extrapolation challenge for DMs observed in prior work \cite{okawa2024compositional,kang2024far}. In Task A, for example, all training samples place the sun and shadow on opposite sides of the pole; violating this rule would require generating a never-seen mode with both elements on the same side. 

\textbf{DMs' Failure on Fine-Grained Rules.} While following coarse-grained rules only requires DMs to avoid unreasonable OOD generations, fine-grained rules are much harder, demanding accurate learning of the in-distribution training data. \cref{fig:gen_metric} demonstrates the models' performance across four synthetic tasks, where deviations from the ground truth in linear fitting and the coefficient of determination $R^2$ below 1 indicate that DMs fail to fully capture the predefined fine-grained rules. Additionally, we observe that DMs struggle more with learning non-spatial rules, such as Task C, compared to spatial rules, such as Task A, as evidenced by worse linear fitting and smaller $R^2$. This discrepancy likely arises from the fact that non-spatial rules are more implicit and lack explicit cues, such as object positions and lengths, which are readily available in spatial relationships. More experiments for various settings (e.g., other backbone models) are deferred to 
\cref{app:More Setting of Synthetic Tasks}, which shows consistent empirical observations that DMs can capture coarse-grained rules but struggle to master fine-grained ones.

\textbf{Despite Instabilities, DMs Can Generate Fine-Grained Samples.} While fine-grained rule experiments show DMs generally struggle to exactly 
satisfy underlying rules, we observe that they can occasionally generate rule-conforming samples in \cref{fig:rule_conforming}, albeit with instability. For example, in Task A, there are $10$ generated samples that (almost) satisfy the fine-grained rule, i.e., $\frac{l_2h_1}{l_1h_2} \in [0.99,1.01]$. 
\begin{figure}[t!]
\vspace{-0.04in}  
\centering
    \hfill
    \subfigure[Rule-conforming generations acrross four tasks.]
    {\label{fig:rule_conforming}\includegraphics[width=0.23\textwidth]{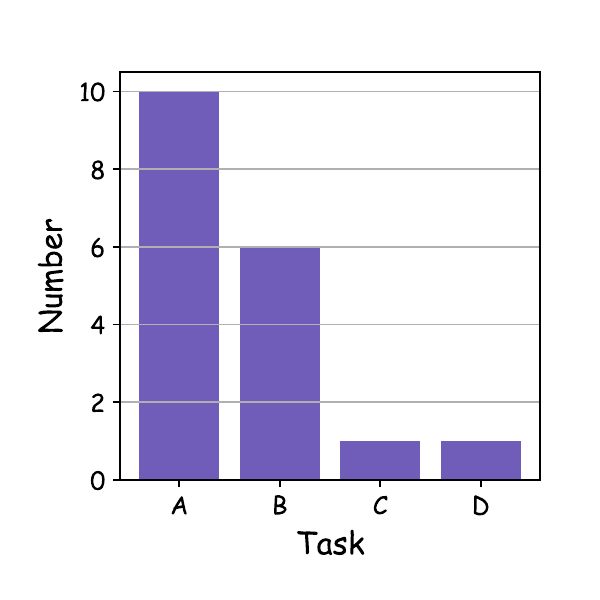}}
    \hfill
    \subfigure[Memorization with different thresholds in Task A.]
    {\label{fig:taska_memory_rates_13d}\includegraphics[width=0.22\textwidth]{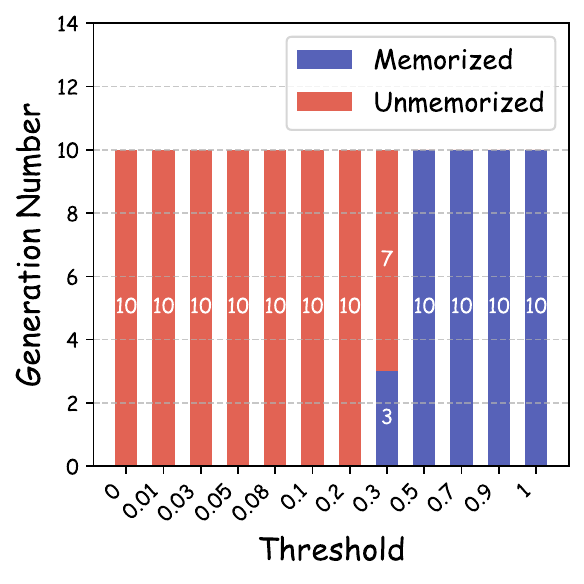}}
    \hfill
\vspace{-0.15in} 
\caption{\textbf{DMs generate rule-conforming samples.} Define Rule-conforming generations have ratios (e.g., \(\frac{l_2h_1}{l_1h_2}\) in Task A) within \(\pm 0.01\) of true ratio ($1$ in Task A). \cref{fig:rule_conforming} shows DDPM's ability to generate rule-conforming samples across tasks. \cref{fig:taska_memory_rates_13d} indicates that nearest neighbor distances between $10$ rule-conforming samples in Task A and training data are large ($>0.3$), suggesting novel generation rather than memorization.}
\vspace{-0.3in}  
\label{fig:mem_gen}
\end{figure}
To determine whether these 10 ideal samples originate from DDPM's generation or are merely training data replicas \cite{somepalli2023diffusion,somepalli2023understanding,wang2024discrepancy}, we analyze their memorization behaviors. For Task A, we represent each sample with a 13D vector capturing key features $(l_1,l_2,h_1,h_2)$ and encoding RGB colors of sun, pole, and shadow. We then compute Euclidean distances to their nearest neighbors, considering samples as replicas if the distance is below a given threshold.
\cref{fig:taska_memory_rates_13d} shows rule-conforming generations are not mere duplicates, achieving $100\%$ memorization at a large threshold ($0.3$). \cref{app:More Results of Synthetic Tasks} shows $10$ ideal samples and their nearest neighbors, highlighting differences. This suggests that, although unstable, DMs can generate rule-conforming samples. Inspired by this, \cref{sec:mitigation} presents a mitigation strategy with additional guidance to improve generation consistency.

\begin{figure*}[t!]
\centering
    \hfill
    \subfigure[$t = 0.2$]{\label{fig:diff_0.2}\includegraphics[width=0.23\textwidth]{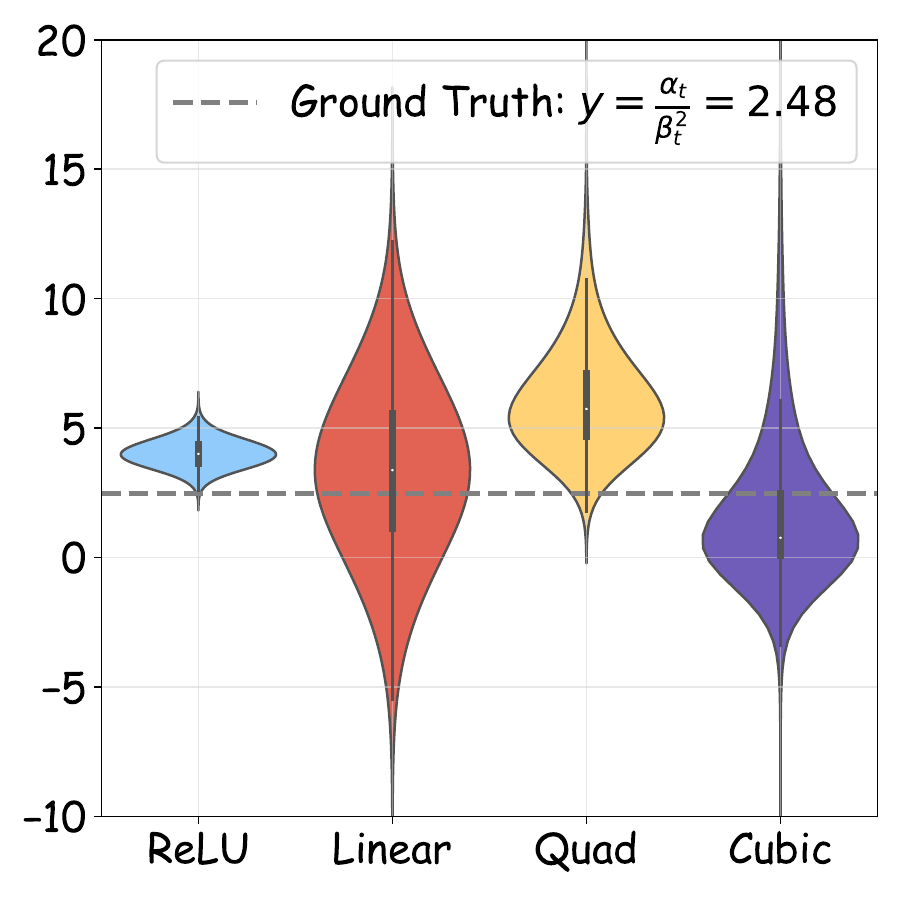}}
    \hfill
    \subfigure[$t = 0.4$]{\label{fig:diff_0.4}\includegraphics[width=0.23\textwidth]{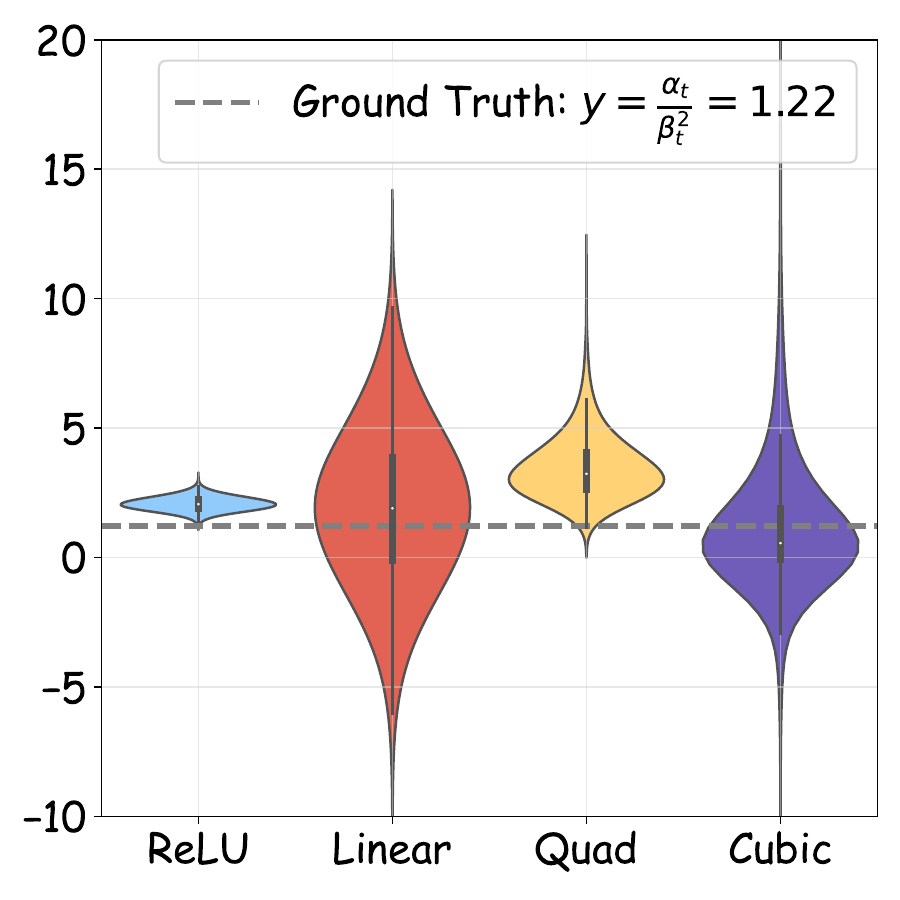}}
    \hfill
    \subfigure[$t = 0.6$]{\label{fig:diff_0.6}\includegraphics[width=0.23\textwidth]{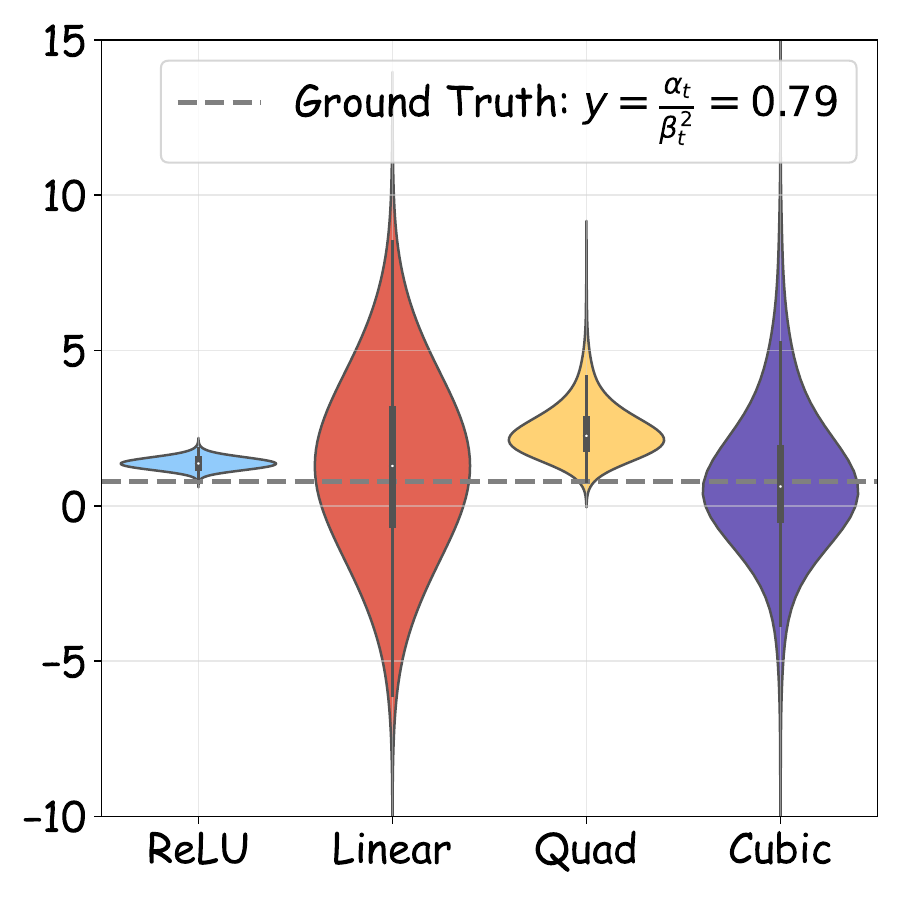}}
    \hfill
    \subfigure[$t = 0.8$]{\label{fig:diff_0.8}\includegraphics[width=0.23\textwidth]{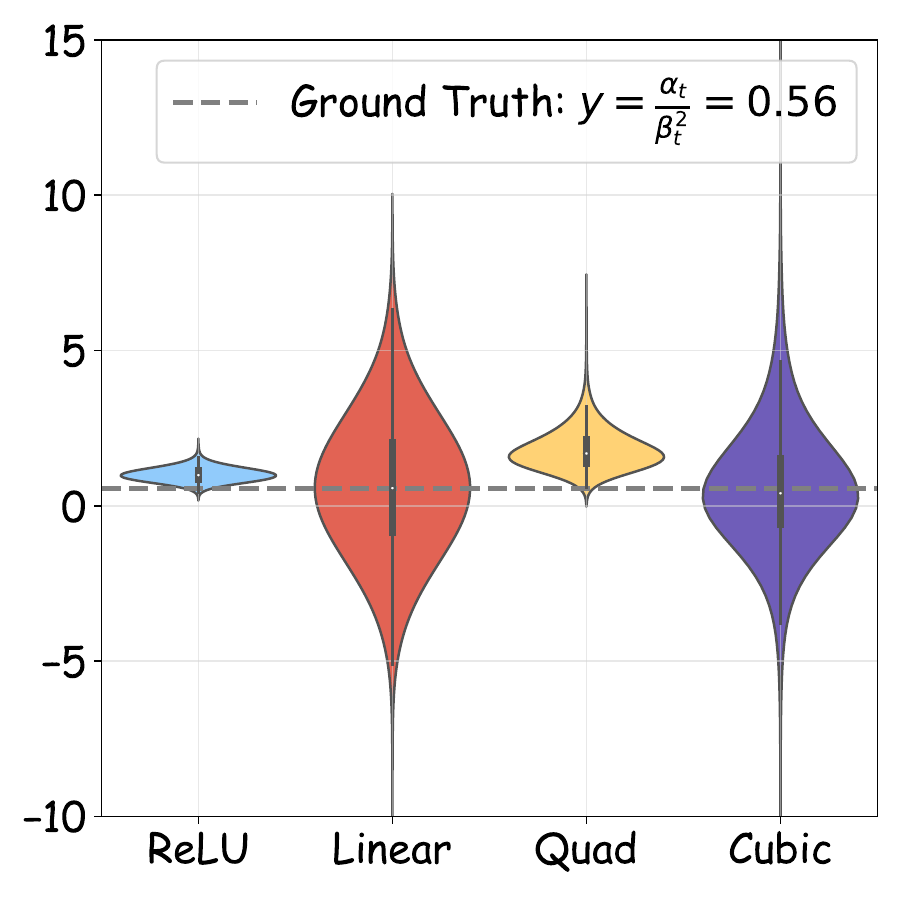}}
    \hfill
\vspace{-0.1in}
\caption{\textbf{Diffusion model exhibits non-vanishing error on synthetic multi-patch data with norm constraint.} We observe for a variety of timestep $t$ and activation functions (ReLU, linear, quadratic and cubic), 
a (two-layer) diffusion model cannot learn precisely the hidden norm constraint as in Definition \ref{def:data_distr}, with both bias and variance error.}
\vspace{-0.15in}
\label{fig:diff}
\end{figure*}

\section{DMs' Failure from a Theoretical Perspective}
\label{sec:Theory}
This section provides theoretical explanations for our observed phenomenon - DMs' inability to effectively learn precise rules. {Our analysis reveals that without prior knowledge on the hidden rules, DMs trained by minimizing the DDPM loss \cite{ho2020denoising} exhibit a constant error in rule conformity, indicating that they cannot accurately learn the ground-truth rule.}

We consider the following multi-patch data setup, which has been widely employed for theoretical analysis of classification \cite{allen2020towards,cao2022benign,zou2023benefits,lu2024benign}, and
recently for diffusion models \cite{han2024feature}.
\begin{definition}[Data distribution with Inter-Feature Rules]
\label{def:data_distr}
Let $\bu, \bv \in \sR^d$ be two orthogonal feature vectors with unit norm, i.e., $\| \bu\| = \| \bv\| = 1$ and $\langle \bu, \bv \rangle = 0$. 
Let $\zeta$ be a random variable with its distribution $\gD_\zeta$ supporting on a bounded domain $[\underline{c}_\zeta, \overline{c}_\zeta]$ for some constants $0 < \underline{c}_\zeta < \overline{c}_\zeta < \infty$. Each image data consists of multiple patches
\begin{align*}
    &\bx = [\bx^{(1)\top}, \bx^{(2)\top}, ..., \bx^{(P)\top}]^\top, \\
    \text{ where }\quad &\bx^{(1)} = \zeta \bu, \,  \bx^{(2)} = (1-\zeta) \bv,
\end{align*}
and $\bx^{(1)}, \bx^{(2)}$ are \textit{independent} with the remaining patches.
\end{definition}
Definition \ref{def:data_distr} specifies a \textit{inter-feature rule} on the first two patches of the data, requiring that the norm of the first two feature patches sum up to one, i.e., $\| \bx^{(1)} \| + \| \bx^{(2)}\| = 1$. Furthermore, we show such a rule will further lead to a structural constraint on the score function. Specifically, let $\bx_0 = [\zeta \bu^\top, (1- \zeta) \bv^\top, \bx^{(3)\top}, ..., \bx^{(P)\top}]$ represent an input image. 
For arbitrary noise scedules $\{\alpha_t, \beta_t\}$, $\bx_t = \alpha_t \bx_0 + \beta_t \beps_t$ represents the noised image at timestep $t$. 
We derive the score function along the diffusion path as follows. 

\begin{theorem}
\label{thm:score}
The score function is $\nabla \log p_t(\bx_t) = [\nabla \log p_t(\bx_t^{(1)}, \bx_t^{(2)})^\top, \nabla \log p_t(\bx_t^{(3)}, ..., \bx_t^{(P)})^\top]^\top$, where 
\begin{align*}
    &\nabla \log p_t(\bx_t^{(1)}, \bx_t^{(2)}) \\
    &= - \frac{1}{\beta_t^2} \bx_t + \frac{\alpha_t}{\beta_t^2}
    \begin{bmatrix}
        \sE_{\gD_\zeta} [\pi_t(\zeta, \bx_t) \zeta  ] \bu \\
        \sE_{\gD_\zeta} [\pi_t(\zeta, \bx_t) (1-\zeta) ] \bv 
    \end{bmatrix} 
\end{align*}
where $\pi_t(\zeta, \bx_t) = \frac{\gN(\bx_t; \bmu_t( \zeta), \beta_t^2 \bI_{2d})}{\sE_{D_\zeta} [\gN(\bx_t; \bmu_t( \zeta), \beta_t^2 \bI_{2d})]}$,  $\bmu_t(\zeta) = [\alpha_t  \zeta \bu^\top, \alpha_t  (1- \zeta) \bv^\top ]^\top$.
\end{theorem}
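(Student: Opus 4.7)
The plan is to exploit two structural features of Definition \ref{def:data_distr}: (i) the coupled block $(\bx^{(1)}, \bx^{(2)})$ is independent of the remaining patches $\bx^{(3)}, \ldots, \bx^{(P)}$ in the clean data, and (ii) conditional on the latent $\zeta \sim \gD_\zeta$, the coupled block is a deterministic function of $\zeta$. Since the forward noising $\bx_t = \alpha_t \bx_0 + \beta_t \beps_t$ applies an isotropic Gaussian perturbation independently across coordinates, the patchwise independence in the clean data is preserved for all $t$. Hence the joint marginal factorizes as $p_t(\bx_t) = p_t(\bx_t^{(1)}, \bx_t^{(2)})\, p_t(\bx_t^{(3)}, \ldots, \bx_t^{(P)})$, and taking $\nabla \log(\cdot)$ gives the block decomposition of $\nabla \log p_t(\bx_t)$ claimed in the statement. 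The content of the theorem is therefore concentrated in the first block.

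First I would fix $\zeta$ and write down the conditional marginal of $(\bx_t^{(1)}, \bx_t^{(2)})$. Because the clean coupled block equals $[\zeta \bu^\top, (1-\zeta)\bv^\top]^\top$ deterministically and the added noise on these two patches is $\gN(\bzero, \beta_t^2 \bI_{2d})$, the conditional density is exactly $\gN(\bx_t; \bmu_t(\zeta), \beta_t^2 \bI_{2d})$ with $\bmu_t(\zeta) = [\alpha_t \zeta \bu^\top, \alpha_t (1-\zeta) \bv^\top]^\top$. Marginalizing over $\gD_\zeta$ then yields the continuous Gaussian mixture $p_t(\bx_t^{(1)}, \bx_t^{(2)}) = \sE_{\gD_\zeta}[\gN(\bx_t; \bmu_t(\zeta), \beta_t^2 \bI_{2d})]$.

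Next I would differentiate this mixture density by swapping $\nabla$ and $\sE_{\gD_\zeta}$ (justified by Gaussian tails together with the bounded support $[\underline{c}_\zeta, \overline{c}_\zeta]$ of $\gD_\zeta$) and apply the per-component Gaussian score $\nabla \log \gN(\bx_t; \bmu_t(\zeta), \beta_t^2 \bI_{2d}) = -(\bx_t - \bmu_t(\zeta))/\beta_t^2$. Pulling out the $-\bx_t / \beta_t^2$ term using that the weights $\pi_t(\zeta, \bx_t)$ integrate to one, and collecting the $\bu$- and $\bv$-components of $\bmu_t(\zeta)$, gives exactly the claimed formula with $\sE_{\gD_\zeta}[\pi_t(\zeta, \bx_t) \zeta]\,\bu$ and $\sE_{\gD_\zeta}[\pi_t(\zeta, \bx_t)(1-\zeta)]\,\bv$. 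The factor $\alpha_t / \beta_t^2$ emerges naturally from $\bmu_t(\zeta) = \alpha_t [\zeta \bu^\top, (1-\zeta)\bv^\top]^\top$.

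The calculation is essentially a Bayes-rule identity for mixture scores, so the ``hard part'' is not conceptual but bookkeeping: one must (a) verify the interchange of differentiation and $\zeta$-expectation (immediate from bounded support and smooth Gaussian integrands), and (b) carefully track that the orthogonality $\langle \bu, \bv\rangle = 0$ keeps the two component directions decoupled, so no cross terms appear in the score even though $\zeta$ couples them through $\pi_t$. Once these are handled, the theorem follows directly.
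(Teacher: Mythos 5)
Your proposal is correct and follows essentially the same route as the paper: factorize by patch independence, represent $p_t(\bx_t^{(1)},\bx_t^{(2)})$ as a continuous Gaussian mixture over $\zeta$, differentiate through the expectation, and read off the posterior weights $\pi_t(\zeta,\bx_t)$ via Bayes' rule. One small correction: the absence of cross terms between the $\bu$- and $\bv$-components is due to the block structure (the two patches occupy disjoint coordinate blocks of $\sR^{2d}$), not to the orthogonality $\langle\bu,\bv\rangle=0$, which plays no role in this particular computation.
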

It is clearly noted that the ground truth score (restricted to the first two patches) exhibits the following identity:
\begin{align}
    \sE_{\gD_\zeta} [\pi_t(\zeta, \bx_t) \zeta]  + \sE_{\gD_\zeta} [\pi_t(\zeta, \bx_t) (1-\zeta)]  &= \sE_{\gD_\zeta} [\pi_t(\zeta, \bx_t) ] \nonumber\\
    &= 1. \tag{\textasteriskcentered} \label{eq:hidden_rule}
\end{align}
Then, we aim to investigate whether a score network, trained via DSM objective, can accurately conform to such a hidden rule \eqref{eq:hidden_rule}. Specifically, we follow \citep{han2024feature} and consider the following two-layer neural network model: $s_{w}(\bx_t) = [s_{w}^{(1)}(\bx_t)^\top, ..., s_{w}^{(P)}(\bx_t)^\top ]^\top$, with 
\begin{align}
    &s_{w}^{(p)}(\bx_t) =  -\frac{1}{\beta_t^2} \bx_t^{(p)} + \sum_{r=1}^m \sigma(\langle \bw_{r,t}^{(p)}, \bx_t^{(p)} \rangle ) \bw_{r,t}^{(p)}, \label{eq:score_network_main}
\end{align}
where each patch is processed with a separate set of $m$ neurons, and  $\sigma(\cdot)$ is an (non-constant) polynomial activation function.
Such a network mimics the structure of U-Net \cite{ronneberger2015u} with shared encoder and decoder weights. 
The network also contains a residual connection that aligns with the score function (Theorem \ref{thm:score}). Similar network design has been considered in \cite{shah2023learning,han2024feature}. 
We train the score network by minimizing the DSM loss \cite{ho2020denoising} with expectation on the diffusion noise and the input: 
\begin{align}
    L(\bW_t) =  \sE_{\beps_{t}, \bx_{0}} 
    \sum_{p=1}^P \Big\|  s_w^{(p)}(\bx_t^{(p)}) - \beps_{t}^{(p)} \Big\|^2 \label{eq:ddpm_loss}
\end{align}
where $\bx_{t}^{(p)} = \alpha_t \bx_{0}^{(p)} + \beta_t \beps_{t}^{(p)}$. We next define the \textit{rule-conforming error} to measure the learning outcome of the hidden rule \eqref{eq:hidden_rule}.
\begin{definition}[Rule-conforming error]
For the score network $s_w$ of a diffusion model with weights $\bw_{r,t}^{(p)*}$, let 
 \begin{align*}
     \psi_t (\bx_t) 
     \coloneqq \big\langle s_{w}^{(1)}(\bx_t) + \frac{1}{\beta_t^2} \bx_t^{(1)}, \bu \big\rangle+ \big\langle  s_{w}^{(2)}(\bx_t) + \frac{1}{\beta_t^2} \bx_t^{(2)}, \bv \big\rangle
 \end{align*}
 be the coefficient along directions $\bu, \bv$ at time $t$ for $\bx_t$. We say the diffusion model conforms to {rule \eqref{eq:hidden_rule}} if $\psi_t(\bx_t) = \frac{\alpha_t}{\beta_t^2}$ holds for \textit{any} $\bx_t$. 
 We define the \textit{rule-conforming error} as:
 \begin{equation*}
     \gE = \sE_{\bx_t} \bigg[ \bigg( \psi_t(\bx_t)  - \frac{\alpha_t}{\beta_t^2}\bigg)^2  \bigg].
 \end{equation*}
\end{definition}
Then, we consider training ${s}_w$ by gradient descent over \eqref{eq:ddpm_loss} starting from initialization $\{\bw_{r,t}^{(p),0}\}_{r\in[m], p\in [P]}$. The following theorem derives a lower bound on the rule-conforming error for the trained score network model.

\begin{theorem}
\label{them:multi_poly}
Let $\bw_{r,t}^{(p)*}$, $r \in [m]$ be a stationary point of the DDPM loss \eqref{eq:ddpm_loss}. Then we can lower bound 
\begin{align*}
    \gE &\geq \sE_{\zeta, \beps_{t,-}^{(1)}} \Big[ {\rm Var}_{|\zeta, \beps_{t,-}^{(1)}} \big( \widetilde \sigma^{(1)}( \langle \bu, \beps_{t, \perp}^{(1)} \rangle ) \big) \Big] \\
    &\quad+ \sE_{\zeta, \beps_{t,-}^{(2)}}  \Big[ {\rm Var}_{|\zeta, \beps_{t,-}^{(2)}} \big( \widetilde \sigma^{(2)}( \langle \bv, \beps_{t, \perp}^{(2)} \rangle ) \big) \Big]
\end{align*}
where we decompose $\beps_t^{(p)} = \beps_{t,-}^{(p)} + \beps_{t, \perp}^{(p)}$ with $\beps_{t,-}^{(p)}$ being the projection of $\beps_t^{(p)}$ onto ${\rm span}( \bw_{1,t}^{(p),0}, ..., \bw_{m,t}^{(p),0} )$. 
${\rm Var}_{(|A) } (\cdot) \coloneqq {\rm Var}(\cdot|A)$ is the conditional variance and $\widetilde \sigma^{(p)}(\cdot)$ is a polynomial with coefficients depending on $\langle \bw_{r,t}^{(1)*}, \bu \rangle, \langle \bw_{r,t}^{(2)*}, \bv \rangle$.
\end{theorem}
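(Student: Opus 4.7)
The plan is to lower bound $\gE = \sE_{\bx_t}[(\psi_t(\bx_t) - \alpha_t/\beta_t^2)^2]$ through the law of total variance, isolating the fluctuations of $\psi_t$ that persist after conditioning on $(\zeta, \beps_{t,-}^{(1)}, \beps_{t,-}^{(2)})$. For any random variable $\psi_t$ and constant $c$, $\sE[(\psi_t - c)^2] \geq \mathrm{Var}(\psi_t) \geq \sE[\mathrm{Var}(\psi_t \mid \zeta, \beps_{t,-}^{(1)}, \beps_{t,-}^{(2)})]$, so the target reduces to showing that this conditional variance decomposes into a patch-$1$ contribution driven by $\langle \bu, \beps_{t,\perp}^{(1)}\rangle$ and a patch-$2$ contribution driven by $\langle \bv, \beps_{t,\perp}^{(2)}\rangle$.

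The first major step is to pin down the structural form of any stationary point $\bw_{r,t}^{(p)*}$ reachable by the training dynamics. I would compute the expected gradient $\sE[\nabla_{\bw_r^{(1)}} L(\bW_t)]$ at an arbitrary configuration and, using the decomposition $\bx_t^{(1)} = \alpha_t \zeta \bu + \beta_t \beps_t^{(1)}$ together with Gaussian integration by parts on all $\beps_t^{(1)}$-dependent factors, show that $\sE[\nabla_{\bw_r^{(1)}} L] \in \mathrm{span}(\bu, \bw_{1,t}^{(1)}, \dots, \bw_{m,t}^{(1)})$; an identical argument with $\bv$ handles patch $2$. This span is therefore invariant under expected gradient flow from initialization $\{\bw_{s,t}^{(p),0}\}_s$, so every limiting stationary point satisfies $\bw_{r,t}^{(1)*} \in \mathrm{span}(\bu, \{\bw_{s,t}^{(1),0}\}_s)$ and $\bw_{r,t}^{(2)*} \in \mathrm{span}(\bv, \{\bw_{s,t}^{(2),0}\}_s)$.

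Given this subspace restriction, I would decompose $\bw_{r,t}^{(1)*} = a_r^{(1)} \bu + \widetilde{\bw}_r^{(1)}$ with $\widetilde{\bw}_r^{(1)}$ lying in the initial weight span, and similarly for patch $2$. Since $\beps_{t,\perp}^{(1)}$ is orthogonal to that span, $\langle \bw_{r,t}^{(1)*}, \beps_{t,\perp}^{(1)}\rangle = a_r^{(1)} \langle \bu, \beps_{t,\perp}^{(1)}\rangle$, so $\langle \bw_{r,t}^{(1)*}, \bx_t^{(1)}\rangle$ is affine in the scalar $\langle \bu, \beps_{t,\perp}^{(1)}\rangle$ with slope proportional to $a_r^{(1)} = \langle \bw_{r,t}^{(1)*}, \bu\rangle$ and an intercept determined by $\zeta$ and $\beps_{t,-}^{(1)}$. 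Plugging this affine form into the polynomial activation $\sigma$ and pulling out the outer weighting $\langle \bw_{r,t}^{(1)*}, \bu\rangle$ when summing over $r$ in the definition of $\psi_t$, the patch-$1$ contribution collapses into a univariate polynomial $\widetilde\sigma^{(1)}(\langle \bu, \beps_{t,\perp}^{(1)}\rangle)$ whose coefficients depend on $\zeta$, $\beps_{t,-}^{(1)}$, and the inner products $\{\langle \bw_{r,t}^{(1)*}, \bu\rangle\}_r$; the analogous identity yields $\widetilde\sigma^{(2)}$ for patch $2$.

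To conclude, after conditioning on $(\zeta, \beps_{t,-}^{(1)}, \beps_{t,-}^{(2)})$ the two remaining random scalars $\langle \bu, \beps_{t,\perp}^{(1)}\rangle$ and $\langle \bv, \beps_{t,\perp}^{(2)}\rangle$ are independent one-dimensional Gaussian projections of independent noises onto subspaces orthogonal to their respective initial weight spans, so the conditional variance of $\psi_t$ splits additively into the variances of $\widetilde\sigma^{(1)}$ and $\widetilde\sigma^{(2)}$, and taking expectations yields the stated bound. The step I expect to be the main obstacle is the invariant-subspace analysis: the non-linearity of $\sigma$ produces the cross-term $\sigma'(\langle \bw_r, \bx_t^{(1)}\rangle)\langle s_w^{(1)} - \beps_t^{(1)}, \bw_r\rangle \bx_t^{(1)}$ in the per-weight gradient, and a careful Stein / integration-by-parts calculation is needed to verify that no component outside $\mathrm{span}(\bu, \bw_{1,t}^{(1)}, \dots, \bw_{m,t}^{(1)})$ survives in expectation for every polynomial degree of $\sigma$; once this invariance is in hand, the remaining substitution and conditional-variance decomposition are essentially algebraic.
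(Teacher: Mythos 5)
Your proposal is correct and follows essentially the same route as the paper: drop the bias contribution so that $\gE \geq \mathrm{Var}(\psi_t)$, apply the law of total variance conditioning on $(\zeta, \beps_{t,-}^{(1)}, \beps_{t,-}^{(2)})$, exploit the subspace invariance of gradient descent to write each $\bw_{r,t}^{(p)*}$ as a combination of the initial weights and the relevant feature vector so that the activation input becomes affine in $\langle \bu, \beps_{t,\perp}^{(1)}\rangle$ (resp.\ $\langle \bv, \beps_{t,\perp}^{(2)}\rangle$), collapse the patch contribution to $\widetilde\sigma^{(p)}$, and split the conditional variance additively via conditional independence of the two patch noises. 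One small difference worth flagging: the paper asserts the restrictive per-neuron form $\bw_{r,t}^{(1)*} = \phi_{r,t}\bw_{r,t}^{0} + \gamma_{r,t}\bu$, while you (correctly, and more carefully for $m>1$ with nonlinear $\sigma$, where cross-neuron gradient terms arise) use the joint span $\mathrm{span}(\bu, \{\bw_{s,t}^{(1),0}\}_s)$; since $\beps_{t,\perp}^{(1)}$ is defined to be orthogonal to that joint span, both versions yield the same lower bound.
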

Theorem \ref{them:multi_poly} immediately suggests a non-vanishing rule-conforming error, as long as the polynomial $\widetilde{\sigma}$ is non-constant and dimension $d$ is sufficiently larger than network width $m$ to ensure variability in the random noise $\beps_{t, \perp}$, which is independent of $\bu$ and $\bv$. 

We now show that when simplifying the model to linear activation $\sigma(x) = x$ and single neuron ($\bw_{t}^{(p)}$), the rule-conforming error can be computed as the sum of bias and variance errors, both of them are lower bounded by some constants. 
Specifically, we decompose
\begin{align*}
    \gE &=\underbrace{\Big| \sE_{\bx_t} \big[  \psi_t(\bx_t) \big] - \frac{\alpha_t}{\beta_t^2} \Big|^2}_{\gE_{\rm bias}^2} + \underbrace{\mathrm{Var}\big[\psi_t(\bx_t)\big]}_{\gE_{\rm variance} }.
\end{align*}
%
%
The following theorem suggests there exist a constant bias and variance error for any stationary point $\bw_{t}^{*}$.
\begin{theorem}
\label{thm:main_linear}
Suppose $\sigma(x) = x$, $m=1$ and consider $t$ such that $\alpha_t, \beta_t = \Theta(1)$. We train the network with the gradient descent on DDPM loss \eqref{eq:ddpm_loss} from small Gaussian initialization, i.e., $\bw_t^{(p),0} \sim \gN(0, \sigma_0^2 \bI_d)$,  $\sigma_0 =  O(d^{-1/2})$ and $d = \widetilde \Omega(1)$. Let $\bw_{t}^{(p)*}$ be any stationary point. Then 
\begin{itemize}[leftmargin=0.1in,nosep]
    \item $\langle \bw_t^{(1)*}, \bu\rangle, \langle \bw_t^{(2)*}, \bv \rangle = \Theta(1)$. 
    
    \item There exists constants $C_0, C_1 > 0$ (depending on $\sE[\zeta], \sE[\zeta^2], \alpha_t, \beta_t$) such that $\gE_{\rm bias} = C_0, \gE_{\rm variance} = C_1$.
\end{itemize}
\end{theorem}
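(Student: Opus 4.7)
The plan is to (i) reduce the DDPM loss to a low-dimensional problem in the signal alignments $a_p = \langle \bw_t^{(p)}, \bmu_p \rangle$ (with $\bmu_1 = \bu$, $\bmu_2 = \bv$), (ii) analyze the gradient descent dynamics from small Gaussian initialization to show $a_p \to \Theta(1)$ at convergence, and (iii) substitute the stationary values back into $\psi_t$ to extract constant bias and variance.

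For the reduction, under $\sigma(x) = x$ and $m=1$, the DDPM loss \eqref{eq:ddpm_loss} decouples across patches, and each per-patch loss depends on $\bw_t^{(p)}$ only through $a_p$ and the orthogonal norm $\| \bw_\perp^{(p)} \|$ with $\bw_\perp^{(p)} \perp \bmu_p$. Writing $\bx_t^{(p)} = \alpha_t \zeta_p \bmu_p + \beta_t \beps_t^{(p)}$ (with $\zeta_1 = \zeta$, $\zeta_2 = 1-\zeta$) and exploiting isotropy of $\beps_t^{(p)}$, $\sE[L^{(p)}]$ becomes a polynomial of degree at most four in $(a_p, \| \bw_\perp^{(p)} \|)$ whose coefficients depend only on $\alpha_t, \beta_t, \sE[\zeta_p], \sE[\zeta_p^2]$. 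Setting $\nabla L^{(p)} = 0$ yields a polynomial system whose unique nontrivial positive root $a_p^*$ admits a closed form, and under $\alpha_t, \beta_t = \Theta(1)$ with $\zeta \in [\underline{c}_\zeta, \overline{c}_\zeta]$ it satisfies $a_p^* = \Theta(1)$; at this stationary point the orthogonal norm is pinned to an $o(1)$ value.

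Next I show that GD from the small Gaussian initialization converges to this nontrivial stationary point rather than the trivial saddle $\bw_t^{(p)} = 0$ (which is unstable along $\bmu_p$). Track $a_p(t)$ and $\bw_\perp^{(p)}(t)$ via a two-phase signal--noise argument. From $\bw_t^{(p),0} \sim \gN(\mathbf{0}, \sigma_0^2 \bI_d)$ with $\sigma_0 = O(d^{-1/2})$, standard concentration gives $|a_p(0)| = \Theta(\sigma_0)$ and $\| \bw_\perp^{(p),0} \| = O(\sigma_0 \sqrt{d})$. In Phase 1, the linearized gradient near $\bw = 0$ has a positive self-amplification rate for $a_p$ along $\bmu_p$, driving exponential growth, while the orthogonal flow has no source term and stays controlled. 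In Phase 2, once $a_p = \Theta(1)$, the quartic curvature balances the drive and $a_p(t) \to a_p^*$. An induction on the ratio $a_p(t)/\| \bw_\perp^{(p)}(t) \|$ during Phase 1 prevents orthogonal modes from being amplified enough to distort the final alignment, giving $\langle \bw_t^{(p)*}, \bmu_p \rangle = \Theta(1)$.

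Substituting $\bw_t^{(p)*} \approx a_p^* \bmu_p$ into $\psi_t$ gives
\begin{align*}
\psi_t \approx (a_1^*)^2 \bigl(\alpha_t \zeta + \beta_t \langle \bu, \beps_t^{(1)} \rangle \bigr) + (a_2^*)^2 \bigl(\alpha_t (1-\zeta) + \beta_t \langle \bv, \beps_t^{(2)} \rangle \bigr),
\end{align*}
so $\sE[\psi_t] = \alpha_t \bigl[(a_1^*)^2 \sE[\zeta] + (a_2^*)^2 (1 - \sE[\zeta])\bigr]$ and $\gE_{\rm bias} = | \sE[\psi_t] - \alpha_t/\beta_t^2 |$ is an explicit $\Theta(1)$ constant in $\sE[\zeta], \sE[\zeta^2], \alpha_t, \beta_t$. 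The variance evaluates to $\alpha_t^2 \bigl((a_1^*)^2 - (a_2^*)^2\bigr)^2 \mathrm{Var}(\zeta) + \beta_t^2 \bigl((a_1^*)^4 + (a_2^*)^4\bigr)$, whose noise-driven second term is strictly positive since $\langle \bu, \beps_t^{(1)} \rangle$ and $\langle \bv, \beps_t^{(2)} \rangle$ are non-degenerate Gaussians independent of $\zeta$. The main obstacle will be the Phase-1 dynamics control: ensuring that while $a_p(t)$ grows exponentially, the orthogonal coordinates $\bw_\perp^{(p)}(t)$ cannot be amplified to match the signal, given that $\| \bw_\perp^{(p),0} \|$ is not a priori much smaller than the target $a_p^* = \Theta(1)$. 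This is handled by an inductive bound on $a_p(t)/\| \bw_\perp^{(p)}(t) \|$ together with careful control of cross-fluctuations from $\langle \bw_\perp^{(p)}, \beps_\perp^{(p)} \rangle$-type terms in the gradient; once this is in place, the closed-form bias/variance computation is straightforward algebra.
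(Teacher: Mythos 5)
Your high-level plan follows essentially the same route as the paper's proof: decouple the loss across patches, observe that the per-patch loss is a low-degree polynomial in $\langle \bw_t^{(p)}, \bmu_p\rangle$ and $\|\bw_t^{(p)}\|$, show that GD escapes the origin and any stationary point has $\Theta(1)$ alignment, and finally substitute into $\psi_t$ to read off a constant bias and variance. However, there is a concrete gap in the middle of your argument.

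You assert that at the stationary point ``the orthogonal norm is pinned to an $o(1)$ value,'' and then approximate $\bw_t^{(p)*} \approx a_p^* \bmu_p$ when computing $\psi_t$. The paper does not prove this, and the stationary-point equations contradict it in general. Since $\nabla L^{(p)}(\bw_t)$ lies in $\mathrm{span}(\bmu_p, \bw_t)$, the stationarity condition is two scalar equations in the two unknowns $\|\bw_t\|^2$ and $\langle\bw_t, \bmu_p\rangle^2$. Solving them (as the paper does) gives closed forms in which $\|\bw_t^{*}\|^2$ and $\langle\bw_t^{*}, \bmu_p\rangle^2$ are both $\Theta(1)$ but need not be equal, so the orthogonal component $\|\bw_t^{*}\|^2 - \langle\bw_t^{*}, \bmu_p\rangle^2$ can be a positive constant. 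Consequently your variance formula $\beta_t^2\bigl((a_1^*)^4 + (a_2^*)^4\bigr)$ should read $\beta_t^2\bigl(\|\bw_t^{(1)*}\|^2\langle\bw_t^{(1)*},\bu\rangle^2 + \|\bw_t^{(2)*}\|^2\langle\bw_t^{(2)*},\bv\rangle^2\bigr)$, and the bias formula analogously picks up the true alignment squared rather than a pure-signal weight. The final conclusion (a positive constant) is unchanged, but the $o(1)$-orthogonal-norm claim is both unproven and false for the stationary points the analysis actually exhibits.

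Related to this, your ``unique nontrivial positive root $a_p^*$'' assertion is also too strong: the paper's closed form for $\|\bw_t^{*}\|^2$ has a $\pm$ branch, so uniqueness does not hold, and the theorem statement is deliberately phrased as ``any stationary point.'' Finally, your Phase-1/Phase-2 inductive control of the ratio $a_p(t)/\|\bw_\perp^{(p)}(t)\|$ is heavier machinery than the paper deploys and is motivated precisely by the orthogonal-norm claim you would not need. The paper avoids it by a simpler comparison: projecting GD onto $\bu$ and $\widetilde\bw_t^0 = \bw_t^0 - \langle \bw_t^0, \bu\rangle \bu$, observing that the $\bu$-direction grows strictly faster by a constant margin (because the $\alpha_t^2\beta_t^{-2}\sE[\zeta^2]$ drive term appears only in that coordinate), and concluding that $\langle\bw_t, \bu\rangle$ must reach $\Theta(1)$. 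There is no need to argue the orthogonal piece vanishes, and in fact it generally does not. If you drop the $o(1)$ claim and carry the general $\|\bw_t^{*}\|^2$ through the bias/variance computation, your proof becomes correct and coincides with the paper's.
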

Theorem \ref{thm:main_linear} shows that (1) all data features $\bu$ and $\bv$ can be discovered, which is consistent with the results in \citet{han2024feature} and verifies the ability of DMs to conform to coarse rules in the data, i.e., the existence of the key features. (2) It also verifies that DMs fail to learn the fine-grained hidden rule when no constraint or guidance is imposed over the training of DMs. Both of these two results are consistent with our empirical findings in Section \ref{sec:Synthetic Tasks}.

\textbf{Empirical verification.} We further train score networks based on the theoretical setup and evaluate the rule-conforming error in Figure \ref{fig:diff}, where we consider four different activation functions (see Appendix \ref{app:synthe_two_layer} for details). We calculate the error of DMs in learning the hidden rule \eqref{eq:hidden_rule} and plot the distribution of $\psi_t(\bx_{t})$ over $5000$ sampled $\bx_t$. It is clear that for all activation functions, the rule-conforming error is significant, verifying our theoretical results and suggesting the inability of DMs to precisely learn the hidden rules.

\section{Mitigation Strategy with Guided Diffusion}
\label{sec:mitigation}
Motivated by our finding that DMs can produce rule-conforming samples but instability, we mitigate this by a common technique, \textbf{Guided DDPM}, which introduces additional classifier guidance \cite{dhariwal2021diffusion} during sampling. Specifically, we train the classifier $f_{\theta}(\mathbf{x}, t)$ through contrastive learning with constructed contrasting data pairs, where positive samples follow fine-grained rules while negative samples violate fine rules while maintaining coarse-grained compliance. The training objective is
\begin{equation}
    \mathcal{L}_{\text{total}} = \mathcal{L}_{\text{classification}} + \lambda \cdot \mathcal{L}_{\text{contrastive}},
\end{equation}
where $\lambda$ is weight parameter, $\mathcal{L}_{\text{classification}}$ is Cross-Entropy loss and $\mathcal{L}_{\text{contrastive}}$ is NT-Xent loss \cite{sohn2016improved}. More details on NT-Xent loss are in \cref{app:Details of Guided Diffusion}. Then, following \citet{dhariwal2021diffusion}, gradients from $f_{\theta}(\mathbf{x}, t)$ are used to guide sampling toward fine-grained rule compliance.

Additionally, based on constructed contrastive data, we directly train a classifier in raw images to determine whether a generation satisfies fine-grained rules. We filter samples predicted as non-rule-conforming to ensure generation quality. This approach, called \textbf{Filtered DDPM}, which directly provides guidance based on the noise-free pixel space, can be seen as the upper bound for guided diffusion strategies.

\subsection{Experiment Results}
\textbf{Setup.} We use a U-Net classifier $f_{\theta}(\mathbf{x}, t)$ with guidance weight $\lambda = 1$.  Details of the data construction and training process are provided in \cref{app:Details of Guided Diffusion}.

\textbf{Results.}
In addition to $R^2$, inspired by the theorical analysis in~\cref{sec:Theory}, we introduce Error, a metric capturing how well DMs learn hidden rules from variance and bias. Given the Ground Truth line $y = \beta_1 x$ and the Estimation line $\hat{y} = \hat{\beta}_1 x + \hat{\beta}_0$ in \cref{fig:metict_training} and \ref{fig:gen_metric}, Error is defined as:
\begin{align}
\label{eq:error}
    \text{Error} := \underbrace{|{\hat{\beta}_1-\beta_1| +|\hat{\beta}_0|}}_{\text{Bias Error}}+  \underbrace{\sqrt{\text{Var}(\hat{y}-{y})}}_{\text{Variance Error}}
\end{align}
We measure the bias error $|\mathbb{E}[y - \hat{y}]|$ with the deviation in the estimated coefficients $\hat \beta_1, \hat \beta_0$.
The variance error in \eqref{eq:error} corresponds to the square root of $\gE_{\rm variance}$ in \cref{sec:Theory}. 

\cref{tab:mitigation} presents results, Error and $R^2$, before (DDPM) and after applying classifier guidance (Guided DDPM), along with DDPM filtered by pixel-space classifier (Filtered DDPM). Both Guided DDPM and Filtered DDPM outperform the baseline DDPM across all tasks, showing reduced Error and improved $R^2$, with Filtered DDPM achieving the best performance on most tasks.
\begin{table}[t!]
    \centering
    \caption{\textbf{Comparison} between \textbf{DDPM}, \textbf{Guided DDPM} (Guidance), and \textbf{Filtered DDPM} (Filtering): Additional guidance and filtering improve generation with lower Error and higher $R^2$.}
    \resizebox{\linewidth}{!}{%
    \begin{tabular}{c  c c c  c c c} 
    \toprule
    \multirow{2}{*}{Task} & \multicolumn{3}{c}{Error $\downarrow$} & \multicolumn{3}{c}{$R^2$$\uparrow$} \\
    \cmidrule(lr){2-4} \cmidrule(lr){5-7}
    & DDPM & Guidance & Filtering & DDPM & Guidance & Filtering \\
    \midrule
    A & 0.25 & 0.21 & 0.17 & 0.85 & 0.90 & 0.90 \\
    B & 0.11 & 0.10 & 0.05 & 0.83 & 0.85 & 0.86 \\
    C & 0.41 & 0.26 & 0.25 & 0.57 & 0.67 & 0.64 \\
    D & 0.46 & 0.43 & 0.39 & 0.79 & 0.84 & 0.85 \\
    \bottomrule
    \end{tabular}%
    }
\vspace{-0.1in}
\label{tab:mitigation}
\vspace{-0.12in}
\end{table}

\subsection{Discussions on the Limitation of Guided Diffusion}
\label{sec:limitation}
While guided and filtered diffusion provides some mitigation for rule learning, we acknowledge that this improvement is limited. The limited improvement stems from the inherent nature of our problem: unlike conventional classification tasks, the fine-grained rules that differentiate our contrastive samples exhibit subtle signals, making effective classifier training particularly challenging. In \cref{app:Details of Guided Diffusion}, we provide additional experimental evidence that, even on such simple synthetic tasks, the classification accuracy on the test set remains between $60\%$ and $80\%$, supporting the difficulty of precise classification in contrastive data.

Additionally, the effectiveness of this strategy relies on prior knowledge of fine-grained rules. In real-world scenarios, fine-grained rules are often difficult to accurately define and detect, making the construction of contrastive data impossible. We leave the solution to DMs' inability to learn fine-grained rules in real-world scenarios for future work.
\section{Conclusion}
This study evaluates DMs from the perspective of inter-feature rule learning, revealing through carefully designed synthetic experiments that DMs can capture coarse rules but struggle with fine-grained ones. Theoretical analysis attributes this limitation to a fundamental inconsistency in DMs' training objective with the goal of rule alignment. We further explore some common techniques, such as guided diffusion, to enhance fine-grained rule learning, but observe limited success. Our in-depth findings underscore the inherent difficulty of capturing subtle fine-grained rules, providing valuable insights for future advancements.



\nocite{langley00}
\bibliography{ref}
\bibliographystyle{icml2025}

\newpage
\appendix

 \onecolumn


\section{Low FID and Worse Inter-Feature Learning: A Gaussian Mixture Case}
\label{app:Low FID and Worse Inter-Feature Leaning: A Gaussian Mixture Case}

In this section, we provide a toy example based on the Gaussian Mixture Distribution to explain how low FID and incorrect inter-feature relationships can coexist. This supports the point that even though DMs may perform excellently on classical metrics such as FID, this does not necessarily mean they can perfectly learn the hidden inter-feature rules.

Consider a 2-dimensional population, i.e., the true distribution \( p(x, y) \), which is a Gaussian Mixture Model (GMM) with two components as:
\begin{align}
\label{eq:estimated_dist}
    p({x}, {y}) = \mathcal{F}(\boldsymbol{\mu}_p,\boldsymbol{\Sigma}_p)= \frac{1}{2} \cdot \mathcal{N}\left(\begin{bmatrix} 1 \\ 1 \end{bmatrix}, \begin{bmatrix} 1 & 0 \\ 0 & 1 \end{bmatrix}\right) + \frac{1}{2} \cdot \mathcal{N}\left(\begin{bmatrix} -1 \\ -1 \end{bmatrix}, \begin{bmatrix} 1 & 0 \\ 0 & 1 \end{bmatrix}\right).
\end{align}
where we can have 
\begin{align*}
    \boldsymbol{\mu}_p &= \frac{1}{2} \cdot \begin{bmatrix} 1 \\ 1 \end{bmatrix} + \frac{1}{2} \cdot \begin{bmatrix} -1 \\ -1 \end{bmatrix} =\begin{bmatrix} 0 \\ 0 \end{bmatrix}
\end{align*}
and the covaraince matirx as
\begin{align*}
    \boldsymbol{\Sigma}_p &= \sum_{i=1}^2 w_i \left( \boldsymbol{\Sigma}_i + (\boldsymbol{\mu}_i - \boldsymbol{\mu}_q)(\boldsymbol{\mu}_i - \boldsymbol{\mu}_q)^\top \right) \\
    &= 0.5 \cdot \left( \begin{bmatrix} 1 & 0 \\ 0 & 1 \end{bmatrix} + \begin{bmatrix} 1 & 1 \\ 1 & 1 \end{bmatrix} \right) + 0.5 \cdot \left( \begin{bmatrix} 1 & 0 \\ 0 & 1 \end{bmatrix} + \begin{bmatrix} 1 & 1 \\ 1 & 1 \end{bmatrix} \right) = \begin{bmatrix} 2 & 1 \\ 1 & 2 \end{bmatrix}.
\end{align*}
We assume the estimated data distribution learned by DMs is a joint Gaussian distribution:
\begin{align}
\label{eq:true_dist}
    q(\hat{x}, \hat{y}) =\mathcal{N}(\boldsymbol{\mu}_q,\boldsymbol{\Sigma}_q) =\mathcal{N}\left(\begin{bmatrix} 0 \\ 0 \end{bmatrix}, \begin{bmatrix} 2 & 1 \\ 1 & 2 \end{bmatrix}\right).
\end{align}
With means and covariance matrices of true distributon $p$ and estimated distribution $q$ are identical, that is \(\boldsymbol{\mu}_p = \boldsymbol{\mu}_q = [0, 0]^\top\) and \(\boldsymbol{\Sigma}_p = \boldsymbol{\Sigma}_q = \begin{bmatrix} 2 & 1 \\ 1 & 2 \end{bmatrix}\), we easily have the FID between \( p(x, y) \) and \( p(\hat{x}, \hat{y}) \) is computed as:
\begin{align}
    \text{FID} &= \left\| \boldsymbol{\mu}_p - \boldsymbol{\mu}_q \right\|_2^2 + \text{Tr}\left( \boldsymbol{\Sigma}_p + \boldsymbol{\Sigma}_q - 2 \left( \boldsymbol{\Sigma}_p \boldsymbol{\Sigma}_q \right)^{1/2} \right)\notag\\
    &=\left\| 0 - 0 \right\|_2^2 + \text{Tr}\left( \begin{bmatrix} 2 & 1 \\ 1 & 2 \end{bmatrix} + \begin{bmatrix} 2 & 1 \\ 1 & 2 \end{bmatrix} - 2 \begin{bmatrix} 2 & 1 \\ 1 & 2 \end{bmatrix} \right)\notag\\
    &= 0
\end{align}
Although the FID is small (i.e., 0), the inter-feature relationships between $x$ and $y$ in true and estimated distribution are fundamentally different. In the true distribution, $x$ and $y$ are independent within each Gaussian component but exhibit dependence in the overall distribution due to the mixture of components. In the estimated distribution $q(\hat{x}, \hat{y})$, $\hat{x}$ and $\hat{y}$ are dependent with $\text{Cov}(x, y) = 1$. Therefore, low FID does not imply a correct recovery of the inter-feature rules.
\begin{figure}
\setlength{\abovecaptionskip}{-1cm}
  \centering
  \includegraphics[width=1.\textwidth, height=0.3\textheight]{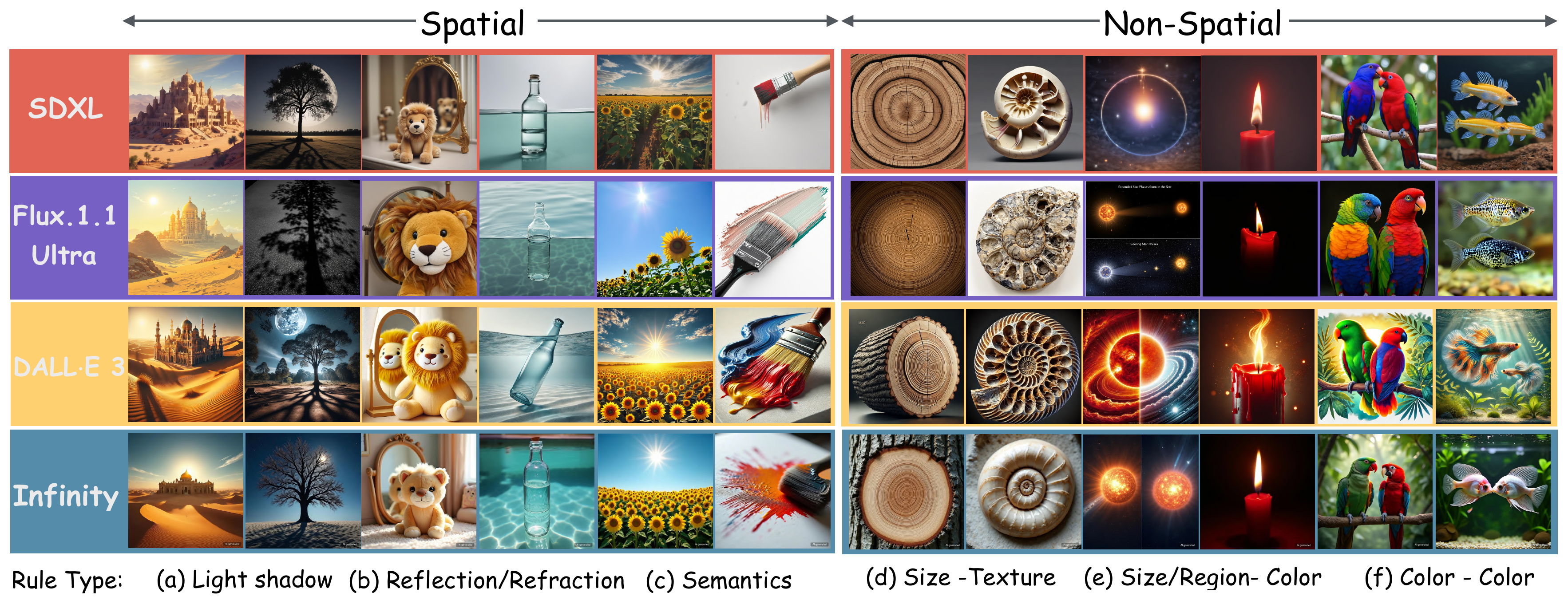}
\vspace*{-8mm}
  \caption{\textbf{Evaluating More Mainstream DMs on Real-World Inter-Feature Rules.} We evaluate more mainstream DMs on scenarios with inter-feature rules, with $5$ random generations and manual selection of unreasonable samples. Despite their success in metrics like FID, none of these DMs achieve complete correctness in cases involving inter-feature relationships. }
  \label{fig:syth_more}
\end{figure}

\begin{table}[]
	\centering
	\caption{\textbf{Real-World Inter-Feature Rules.} For each scenario containing inter-feature rules, \cref{tab:prompt} provides detailed prompts and annotates the existing inter-feature relationships. By comparing the genuine inter-feature relationships with those in generated images, we can evaluate DMs' ability to learn inter-feature relationships.}
 \label{tab:prompt}
 \vspace{0pt} 
	\begin{tabular}{m{0.95\linewidth}}
		\toprule
		\textbf{[Spatial Rule] (a) Light shadow:}\\
		Prompt 1: {A desert scene with a majestic palace under a bright sun.} \\
        Inter-Feature Rule 1: \texttt{Sun position affects palace's shadow direction.}\\
		Prompt 2: \texttt{The moonlight casts a clear shadow of a tall tree onto the ground.} \\
        Inter-Feature Rule 2: {Moon position affects tree's shadow direction}\\
		\midrule
		\textbf{[Spatial Rule] (b) Reflection/Refraction}\\
		Prompt 1: \texttt{A plush lion toy in front of the mirror. Its front side is facing the camera. There is its reflection in the mirror.} \\
        Inter-Feature Rule 1: {The lion toy's orientation relative to the mirror determines its reflection's orientation.}\\
        Prompt 2: \texttt{A transparent glass bottle partially submerged in a calm, clear pool of water. The upper half of the bottle extends above the water's surface and the lower half of the bottle is submerged.} \\
        Inter-Feature Rule 2: {The water surface's position dictates the bottle's shape distortion.}\\
		\midrule
		\textbf{[Spatial Rule] (c) Semantics}\\
		Prompt 1: \texttt{A field of sunflowers under a clear blue sky with the sun shining brightly above.} \\
        Inter-Feature Rule 1: {Sun direction dictates sunflower orientation.}\\
        Prompt 2: \texttt{A paintbrush fully loaded with paint, making a stroke on a blank white canvas.} \\
        Inter-Feature Rule 2: {Brush tip color matches canvas paint.}\\
  \midrule
  \textbf{[Non-Spatial Rule] (d) Size -Texture} \\
  	Prompt 1: \texttt{The cross-section of a sturdy tree, covered with annual rings.} \\
        Inter-Feature Rule 1: {The diameter of a tree is related to its growth rings.}\\
        Prompt 2: \texttt{A nautilus fossil, showing its intricate spiral shell structure with visible growth chambers.} \\
        Inter-Feature Rule 2: {Nautilus fossil size correlates with spiral patterns.}\\
    \midrule
    \textbf{[Non-Spatial Rule] (e) Size/Region- Color} \\
  	Prompt 1: \texttt{An artistic representation showing the expanded star phase and cooling star  phase of the same star.} \\
        Inter-Feature Rule 1: {Celestial body size and color should align, exemplified by red giants and white dwarfs.}\\
        Prompt 2: \texttt{A burning red candle in a dark with the flame, which is vibrant, dynamic, and glowing intensely against the darkness.} \\
        Inter-Feature Rule 2: {Candle flame color varies with distance from the wick.}\\
    \midrule
  \textbf{[Non-Spatial Rule] } (f) Color - Color\\
 	Prompt 1: \texttt{Two Eclectus parrots, showcasing the striking sexual dimorphism of the species.} \\
        Inter-Feature Rule 1: {Eclectus parrots' body and beak colors match—green and yellow for males, red and black for females.}\\
        Prompt 2: \texttt{A male Poecilia reticulata and a female Poecilia reticulata are swimming gracefully in a clear, freshwater aquarium, showcasing the striking sexual dimorphism of the species} \\
        Inter-Feature Rule 2: {Guppies' body and tail colors match—males are equally colorful in both.}\\
		\bottomrule
	\end{tabular}%
\end{table}%
\section{Details and More Example on Real-Wold Hidden Inter-Feature Rules}
\label{app:Details and More Example on Real-Wold Hidden Inter-Feature Rules}
\cref{tab:prompt} provides a detailed description of the prompts for each case in \cref{fig:real_synthetic} and \cref{fig:syth_more}, including scenarios with inter-feature rules and the corresponding rules themselves. We also consider more DMs such as \texttt{SDXL}\footnote{https://fal.ai/models/fal-ai/fast-lightning-sdxl} \cite{podell2023sdxl}, \texttt{Flux.1.1 Ultra}\footnote{https://fal.ai/models/fal-ai/flux-pro/v1.1-ultra} \cite{flux2023}, \texttt{DALL$\cdot$E 3}\footnote{https://chatgpt.com/g/g-iLoR8U3iA-dall-e3}  \cite{betker2023improving}, and VAR-based \cite{VAR} text-to-image model \texttt{Infinity}\footnote{https://github.com/FoundationVision/Infinity?tab=readme-ov-file} \cite{Infinity} in the evaluation.
By comparing these rules, we observe that most mainstream DMs fail in some or all scenarios. For instance, in the \textit{Reflection/Refraction} scenario, none of the DMs successfully generate plausible images: the reflected toy in the mirror faces the camera just like the real one, and the submerged bottle shows no refraction. Our evaluation covers both classic latent diffusion models (e.g., \texttt{SD-3.5 Large}) and the latest next-scale prediction-based diffusion models (e.g., \texttt{Infinity}). Surprisingly, none of them can perfectly handle these inter-feature relationships, highlighting the widespread limitation of DMs in this regard.
\section{Details and More Example on Synthetic Tasks}
\label{app:Details and More Example on Synthetic Tasks}
This section presents supplementary details and examples regarding our synthetic datasets.

\textbf{Task A} generates synthetic images featuring a simple outdoor scene composed of a vertical pole, a sun, and their corresponding shadow. The height of the pole is randomly selected within the range of \([6.4, 12.8]\) pixels, which corresponds to \([20\%, 40\%]\) of the total image size \((32 \times 32\) pixels). The sun's horizontal position is sampled from two predefined distance intervals: far distances \((0-6\) pixels or \(26-32\) pixels) and near distances \((10-16\) pixels or \(16-22\) pixels), ensuring a varied distribution of sun locations. The shadow length is computed using the formula:
\begin{align}
    \text{shadow\_length} = \frac{\text{pole\_height} \times |\text{sun\_distance}|}{\text{sun\_height} - \text{pole\_height}}
\end{align}

where the sun height is determined as twice the pole height, clipped within \([9.6, 25.6]\) pixels (30\%-80\% of the image size). Colors for the sun, pole, and shadow are randomly selected from predefined HSV (Hue-Saturation-Value) ranges: Sun color (yellowish tones) has a hue in \([0, 30]\), saturation in \([100, 255]\), and value in \([200, 255]\). Pole color (blue-green tones) has a hue in \([90, 150]\), saturation in \([100, 255]\), and value in \([100, 255]\). Shadow color (dark tones like black, brown, gray) has a hue in \([0, 180]\), saturation in \([0, 50]\), and value in \([50, 150]\).

\textbf{Task B}  generates synthetic images containing two rectangular objects placed within a \(32 \times 32\) pixel space. The first rectangle's position and size are determined as follows: its leftmost position \( l_1 \) is chosen randomly from the range \([0, 9.6]\) pixels ($30\%$ of the image width), and its height \( l_2 \) is chosen randomly from \([6.4, 19.2]\) pixels ($20\%$ to 60\% of the image height). The color of the first rectangle is randomly selected from a yellowish hue range with hue \([0, 30]\), saturation \([100, 255]\), and value \([200, 255]\) in HSV space. The second rectangle's position is determined by \( h_1 \), which is chosen randomly within a range dependent on \( l_1 \). Specifically, \( h_1 \) is sampled from the range \([l_1 + 6.4, 25.6]\) pixels (ensuring \( h_1 > l_1 \)). The height of the second rectangle \( h_2 \) is calculated based on the first rectangle's height \( l_2 \), ensuring the relation \( l_1{h_1} = {h_2}{l_2} \). The color of the second rectangle is chosen randomly from a blue-green hue range with hue \([90, 150]\), saturation \([100, 255]\), and value \([100, 255]\) in HSV space.

\textbf{Task C}  generates images containing two circles: one large and one small. The large circle's diameter is randomly chosen between $10\%$ and $30\%$ of the image size, and the small circle's diameter is determined to be \(\sqrt{2}\) times the diameter of the large circle. The colors of the circles are randomly selected from predefined color ranges in the HSV color space. Specifically, the large circle is assigned a color from the blue-green hue range, with hue values between $90$ and $150$, saturation between $100$ and $255$, and brightness between $100$ and $255$. The small circle is assigned a color from the yellowish hue range, with hue values between $0$ and $30$, saturation between $100$ and $255$, and brightness between $200$ and $255$. The circles are randomly positioned such that they are adjacent to each other—either on the left, right, top, or bottom of the large circle. 
\begin{figure}[]
  \centering  \includegraphics[width=0.95\textwidth, height=0.43\textheight]{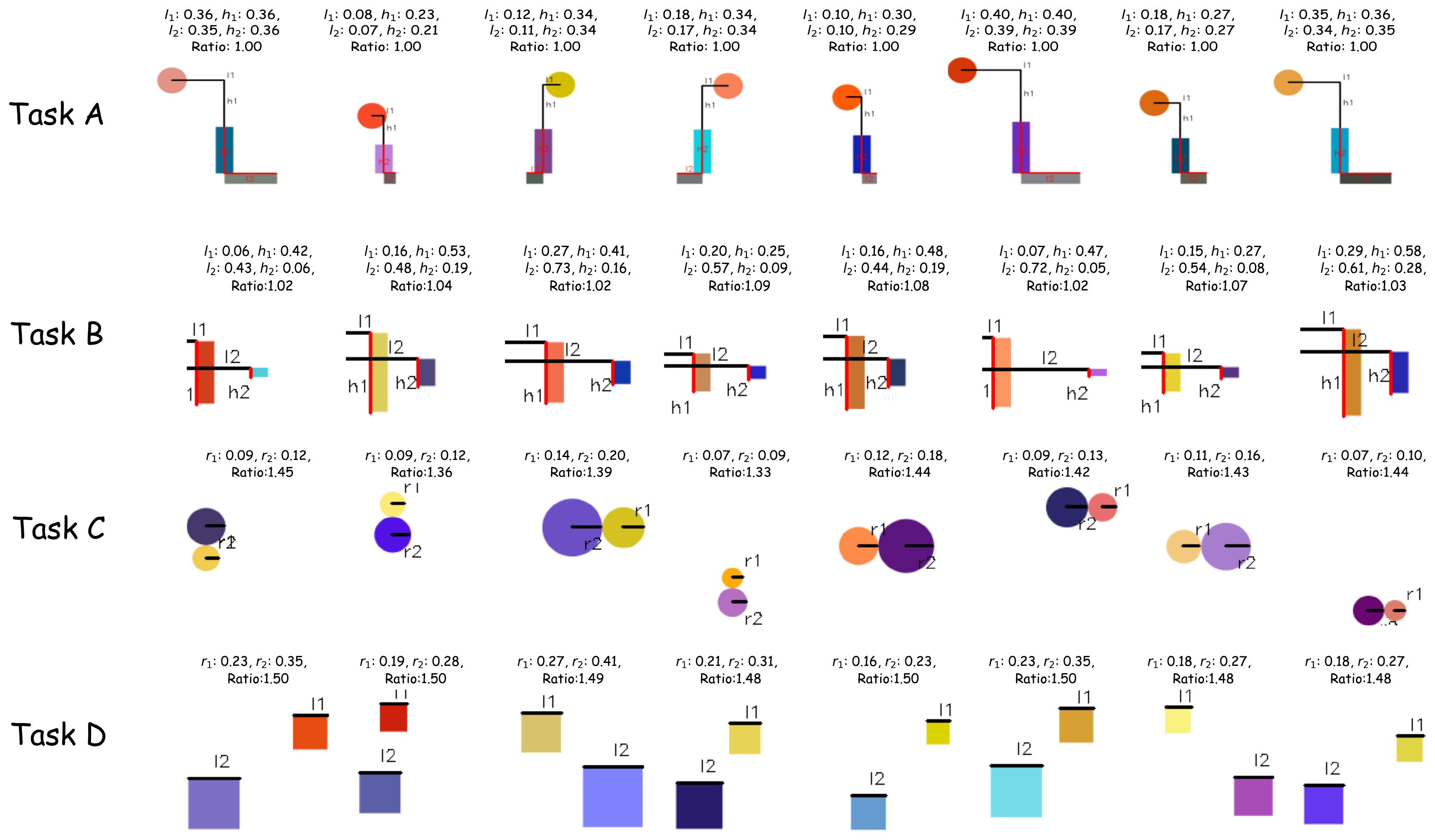}
  \vspace*{-5mm}
  \caption{\textbf{Synthetic Data Examples.} We present synthetic samples in four synthetic tasks, with annotations of features of interest and Ratio calculations. The target Ratios for Tasks A, B, C, and D are $1$, $1$, $\sqrt{2}$, and $1.5$, respectively.}
  \label{fig:case_vis_train}
\end{figure}

\textbf{Task D} generates images containing two squares: one smaller and one larger. The small square's size is randomly chosen to be between 30\% and 70\% of the top half of the image's size. The larger square's size is then set to be $1.5$ times the size of the small square. The color of the small square is randomly selected from a yellowish hue range, with hue values between $0$ and $30$, saturation between $100$ and $255$, and brightness between $200$ and $255$. The color of the large square is randomly chosen from a blue-green hue range, with hue values between $90$ and $150$, saturation between $100$ and $255$, and brightness between $100$ and $255$. The position of the squares is determined within specific regions of the image. The top half and the bottom half of the image are divided into distinct regions, with the small square being placed in the top half and the large square in the bottom half. The exact position of each square is randomly chosen within its respective region, while ensuring that the squares do not exceed the image's boundaries. Both squares are positioned such that they do not overlap with each other and remain entirely within the image frame.

\section{More Synthetic Tasks Setup and Results}
\subsection{More Details of Experimental Setup}
\label{app:Details of DMs' Training}
We use the U-Net architecture as the denoising network, consisting of several down-sampling and up-sampling blocks, each with two convolutional layers followed by ReLU activation. Each down-sampling block incorporates a Self-Attention mechanism and skip connections to preserve fine details. Pooling layers are used to reduce spatial dimensions and capture abstract features. A final $1 \times 1$ convolution layer produces the denoised output image. We use AdamW \cite{loshchilov2017decoupled} as the optimizer with a learning rate of $3e-4$. The noisy steps are set to $T = 1000$, with a linear noise schedule ranging from $1e-4$ to $2e-2$. For Tasks A, B, C, and D, the sample sizes are $4000$, $2000$, $2000$, and $2000$, respectively, and the input data size is $(3, 32, 32)$. The training is performed on a single NVIDIA A800 GPU for $400$, $800$, $1600$, and $1000$ epochs, respectively.
\subsection{More Results of Synthetic Tasks}
\label{app:More Results of Synthetic Tasks}
This section provides additional details to complement the experimental results in \cref{sec:results}. Notably, to ensure more accurate quality assessment of generated images, we upscale the $32 \times 32$ images to $128 \times 128$ during evaluation. This allows the training data to precisely exhibit the expected rule patterns, thereby enabling more reliable evaluation of the generated samples.
\paragraph{Generations that Violate Coarse-Grained Rules.}\cref{tab:coarse-grained rule} illustrates the DDPM's ability to learn coarse-grained rules. We observe that in all four synthetic tasks, the number of samples violating the coarse-grained rules is almost zero, except for Task A, where one generated sample, shown in \cref{fig:example_sundow}, has the sun and shadow on the same side of the pole.
\begin{wrapfigure}{r}{0.25\textwidth}
\begin{center}
    \includegraphics[width=0.23\textwidth]{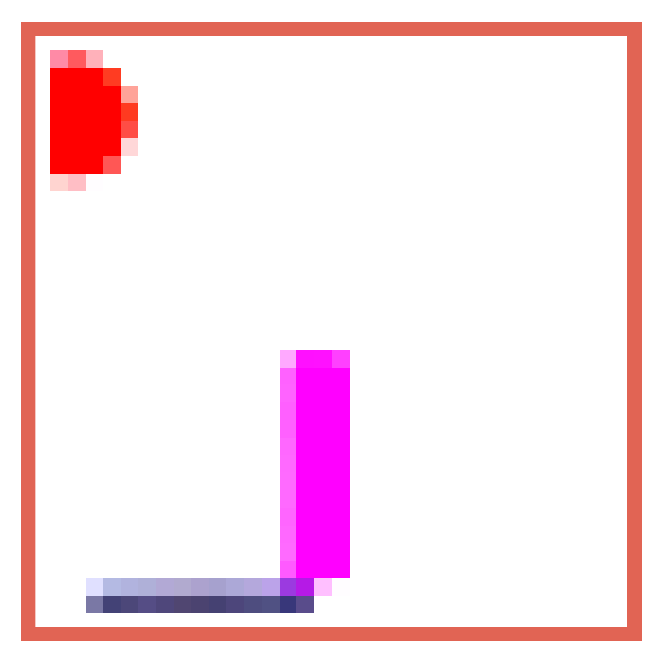}
\end{center}
\vspace{-0.2in}
\caption{For Task A, while all training samples have the sun and shadow on opposite sides, DDPM generates one sample violating this coarse-grained rule where the sun and shadow appear on the same side.} 
\vspace{-0.5in}
\label{fig:example_sundow} 
\end{wrapfigure}
\begin{figure}[]
  \centering  \includegraphics[width=0.95\textwidth, height=0.43\textheight]{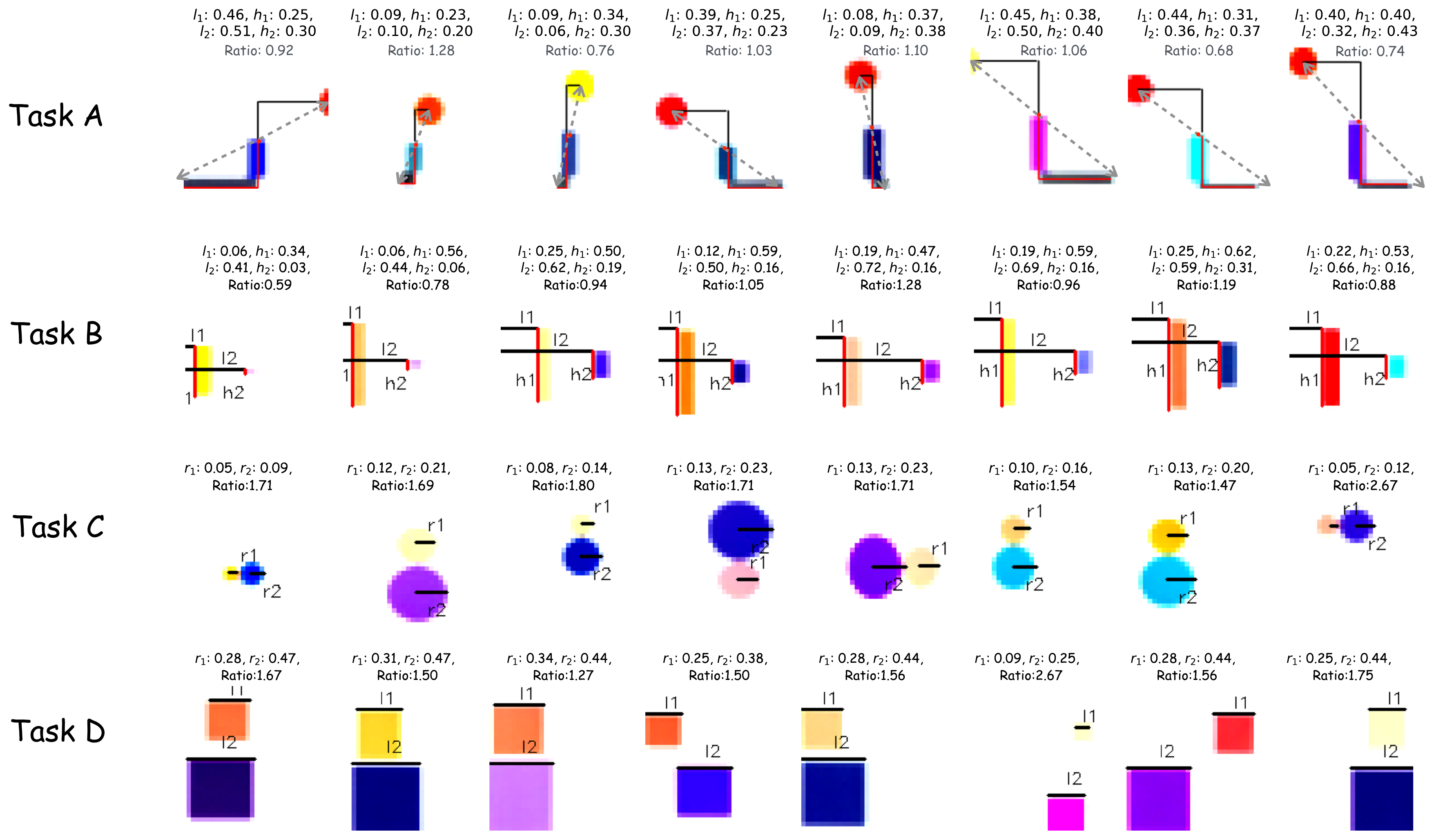}
  \vspace*{-5mm}
  \caption{\textbf{Examples of Rule-violating Generations.} We present samples generated by DDPM that violate fine-grained rules in four synthetic tasks, with annotations of features of interest and Ratio calculations. The target Ratios for Tasks A, B, C, and D are $1$, $1$, $\sqrt{2}$, and $1.5$, respectively.}
  \label{fig:case_vis}
\end{figure}
\paragraph{Generations that Violate Fine-Grained Rules.}We then proceed to show the samples generated by DDPMs that do not satisfy the fine rules in \cref{fig:case_vis}, and highlight the features of interest using the evaluation method developed in \cref{sec:setup}.
\paragraph{Generations that Satisfy Fine-Grained Rules.}Here, we use two coordinate systems: a 4D representation capturing key features $(l_1,l_2,h_1,h_2)$ and a 13D representation that additionally encodes the RGB colors of the sun, pole, and shadow. This dual-coordinate analysis allows us to distinguish whether differences between generated and training samples arise from structural variations or merely from different color combinations within similar structures \cite{okawa2024compositional}. We then compute the Euclidean distances between each generated sample and its nearest neighbor in both 4D and 13D spaces. As a supplement to the DDPM memory experiment in \cref{sec:results}, \cref{fig:taska_nearest_neighbors} presents the three nearest neighbors in the training data for high-quality generated samples (with ratios in $[0.99, 1.01]$) in both 4-dimensional and 13-dimensional coordinates. We observe that the 4-dimensional coordinates effectively capture the spatial structure of the nearest neighbors in the training data, while the 13-dimensional coordinates provide a more comprehensive understanding of the similarity of the generated samples, accounting for both color and structure.
\begin{figure}[]
  \centering  \includegraphics[width=0.7\textwidth, height=0.9\textheight]{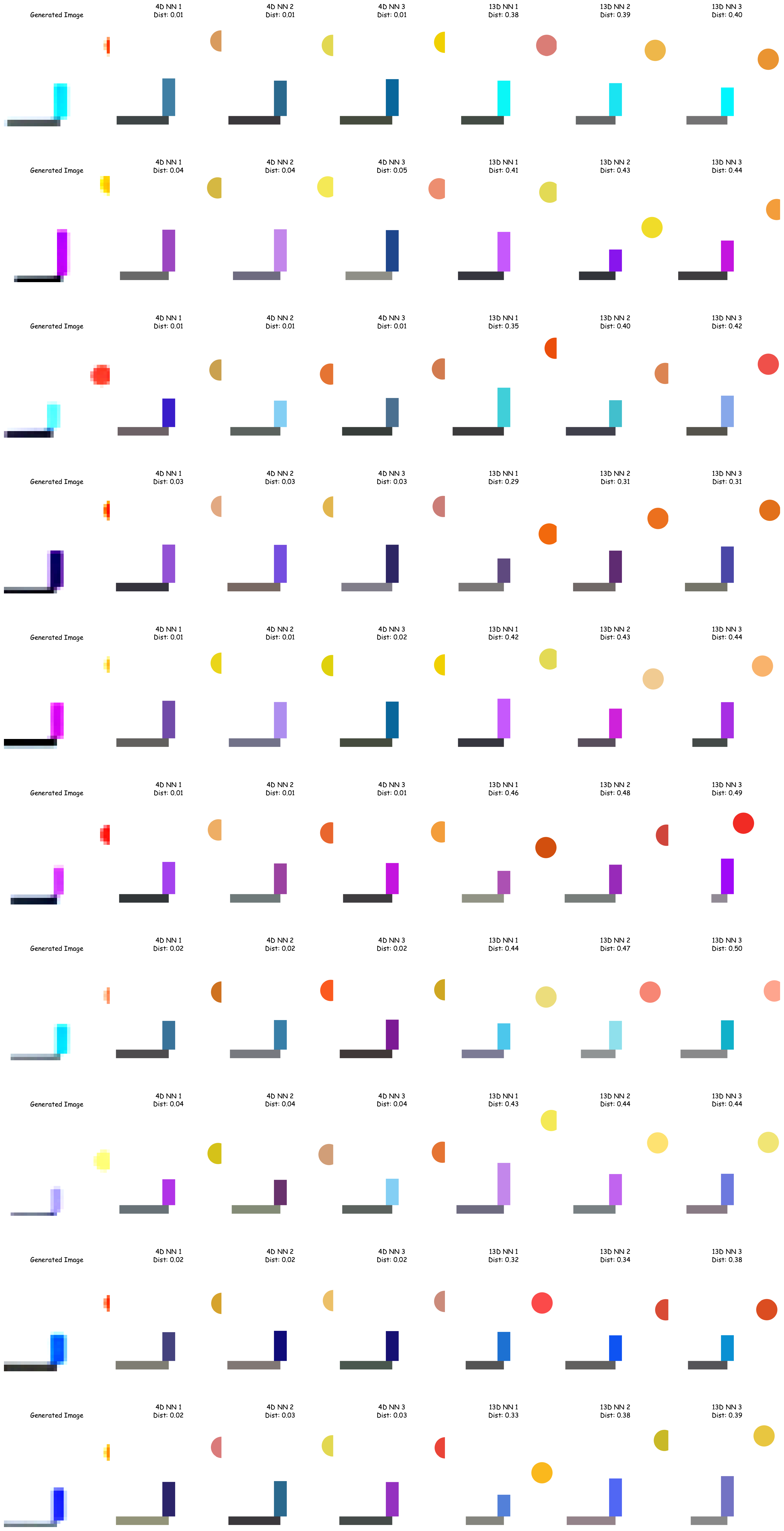}
  \vspace*{-3mm}
  \caption{\textbf{Generations that Violate Fine-Grained Rules.} Taking Task A as an example, we show $10$ high-quality generated samples and their Top-$3$ nearest neighbors from the training data. The first column visualizes the generated samples, while columns $2$-$4$ display the Top-$3$ nearest neighbors from training data in 4D coordinates, where similarity mainly reflects spatial structure. Columns $5$-$7$ show the top-3 nearest neighbors in 13D coordinates, where similarity primarily reflects object colors.}
\label{fig:taska_nearest_neighbors}
\end{figure}
\subsection{More Setting of Synthetic Tasks}
\label{app:More Setting of Synthetic Tasks}
In this section, we consider additional factors, such as more powerful model architectures and larger training datasets, to evaluate the diffusion model's ability to learn precise rules in Task A. Furthermore, detailed experimental results not included in \cref{sec:results}, such as samples that violate coarse rules, will be presented in this section.
\paragraph{More Training Epochs.}Taking Task A as an example, \cref{fig:taska_rule_all} shows the impact of more training epochs on learning fine-grained rules. We observe that as the number of training epochs increases, the DDPM's ability to learn fine-grained rules improves significantly from $200$ to $400$ epochs, with $R^2$ increasing from $0.19$ to $0.85$. This indicates that the relationship between $l_1h_2$ and $l_2h_1$ is better described by the linear model. However, even as the training continues up to $4000$ epochs, there is no noticeable improvement in the model’s ability to learn the fine-grained rules, as reflected by the slight changes in the fitted line coefficients and $R^2$ remaining around $0.85$.
\begin{figure}[]
  \centering  \includegraphics[width=0.9\textwidth, height=0.43\textheight]{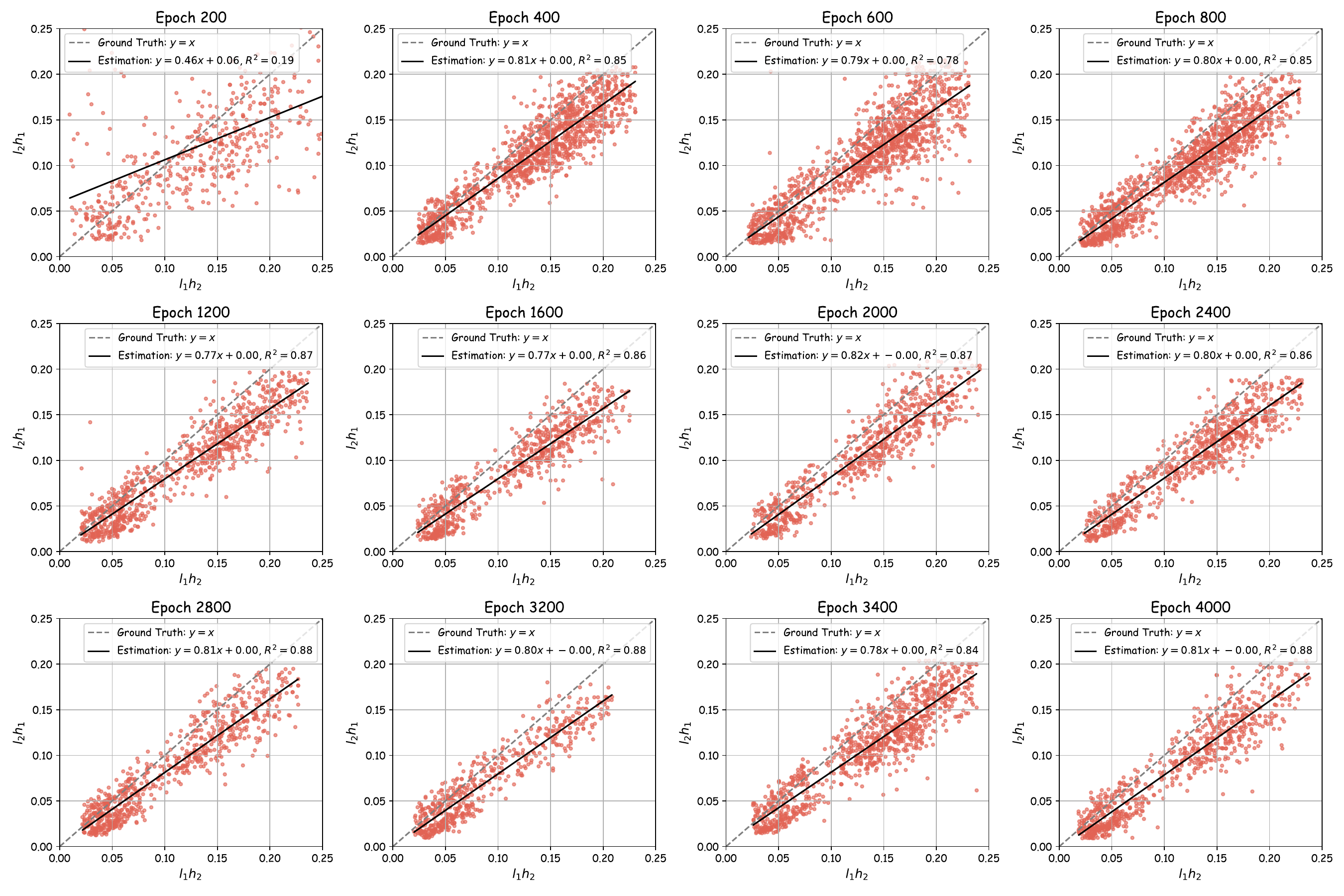}
  \vspace*{-5mm}
  \caption{\textbf{The learning capability of DDPM for fine-grained rules across training epochs.} We observe that as epochs increase from $200$ to $4000$, DDPM's ability to learn fine-grained rules shows no significant improvement, as evidenced by the stable Estimation line and $R^2$. This suggests that increasing training iterations does not alleviate DMs' difficulty in learning fine-grained rules.  The visualized generated samples fall within the interval $[2.5\%, 97.5\%]$.}
  \label{fig:taska_rule_all}
\end{figure}
\paragraph{More Model Architectures.}
Then, we consider the factor modle architectures and use more powerful backbones, DiT \cite{peebles2023scalable} and SiT \cite{ma2024sit}, to replace U-Net as the denoising network. Specifically, we consider two sizes of DiT and SiT: DiT Small with 33M parameters and patch size (DiT-S/2), DiT Base with 130M parameters and patch size (DiT-B/2), SiT Small with 33M parameters and patch size (SiT-S/2), and SiT Base with 130M parameters and patch size (SiT-B/2). Keeping the number of training epochs, noise time steps, and other hyperparameters consistent, we find that, compared to the 14M parameter U-Net, the parameter count of SiT and DiT has increased by $2$ to $10$ times. However, as revealed in \cref{fig:more_archi}, although all models follow coarse rules, the deficiency in DDPM's ability to learn fine-grained rules does not significantly improve with the increase in parameter count, and there is even a slight decrease in performance with DiT-S/2.
\begin{figure*}[]
\centering
    \hfill
    \subfigure[33M, DiT-S/2]{\includegraphics[width=0.24\textwidth]{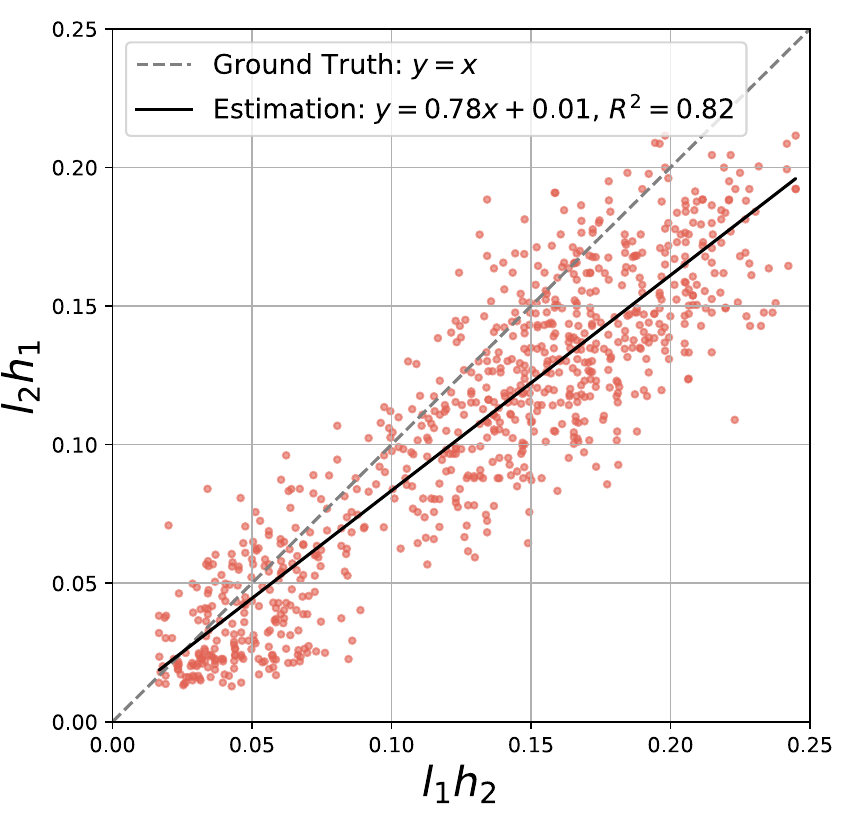}}
    \hfill
    \subfigure[130M, DiT-B/2]{\includegraphics[width=0.24\textwidth]{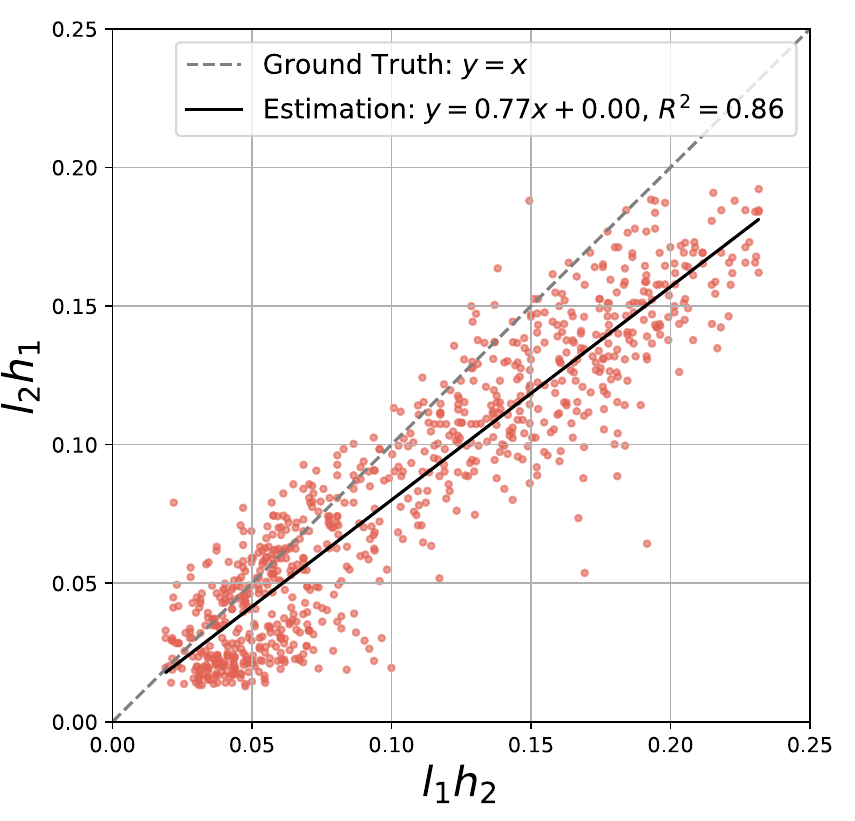}}
    \hfill
    \subfigure[33M, SiT-S/2]{\includegraphics[width=0.24\textwidth]{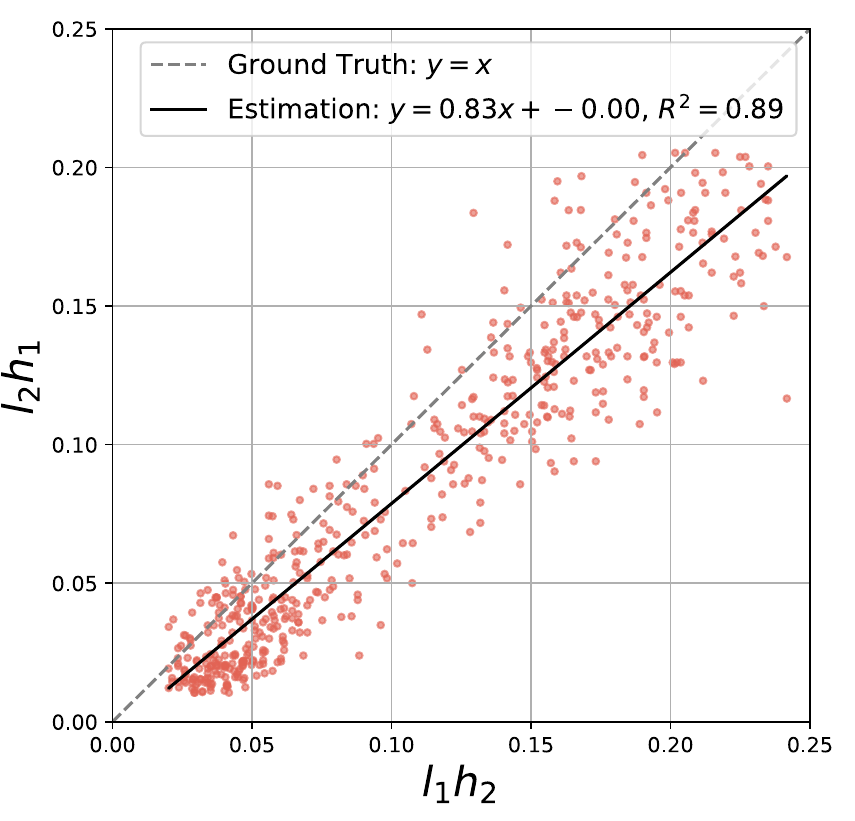}}
    \hfill
    \subfigure[130M, DiT-B/2]{\includegraphics[width=0.24\textwidth]{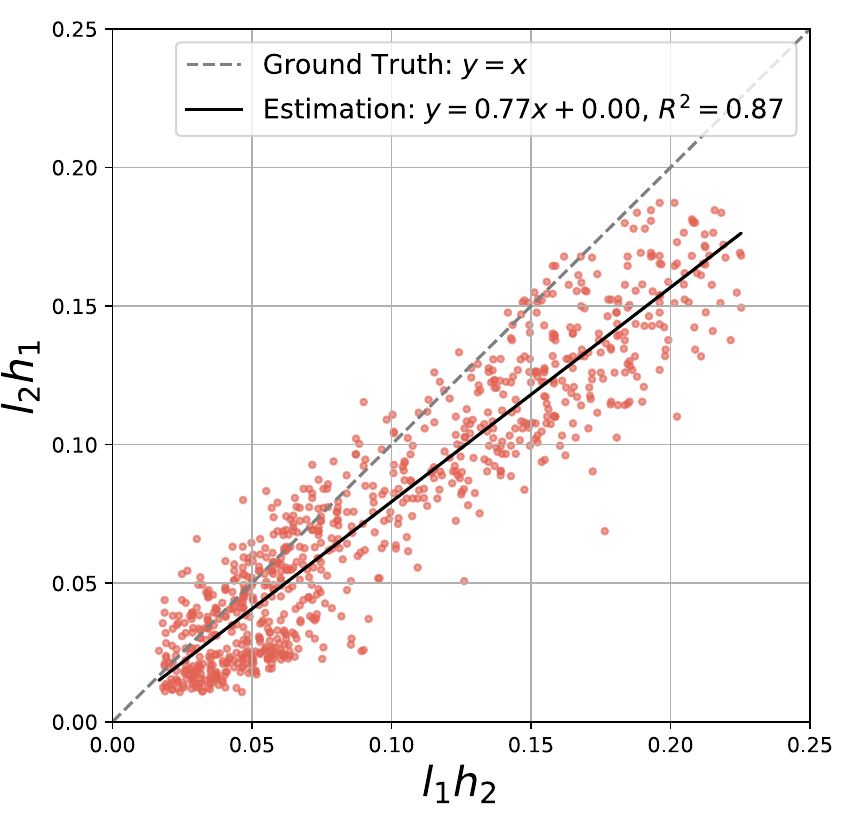}}
    \hfill
\vspace{-0.1in}
\caption{\textbf{DDPM's capability in learning fine-grained rules with more powerful backbones.} Even with larger and more advanced denoising networks, DDPM still cannot avoid generating samples that violate fine-grained rules. This indicates that DDPM's inability to learn fine-grained rules is decoupled from model architecture. The visualized generated samples fall within the  interval $[2.5\%, 97.5\%]$.}
\vspace{-0.15in}
\label{fig:more_archi}
\end{figure*}

\paragraph{More Training Data.}Next, we consider the impact of training data size. For Task A, we gradually increase the sample size from $4000$ to $20000$ to $40000$ and observe whether increasing the sample size improves the DMs' ability to learn rules. \cref{fig:training_size_20000} and \cref{fig:training_size_40000} show that the increase in sample size does not enable DMs to learn fine-grained rules better, as evidenced by the almost unchanged $R^2$ and the fitted linear model. Similarly, we do not observe DMs violating coarse rules with large samples.
\begin{figure}[t!]
\centering
    \hfill
    \subfigure[{Training Data Size $20000$}]{\label{fig:training_size_20000}\includegraphics[width=0.3\linewidth]{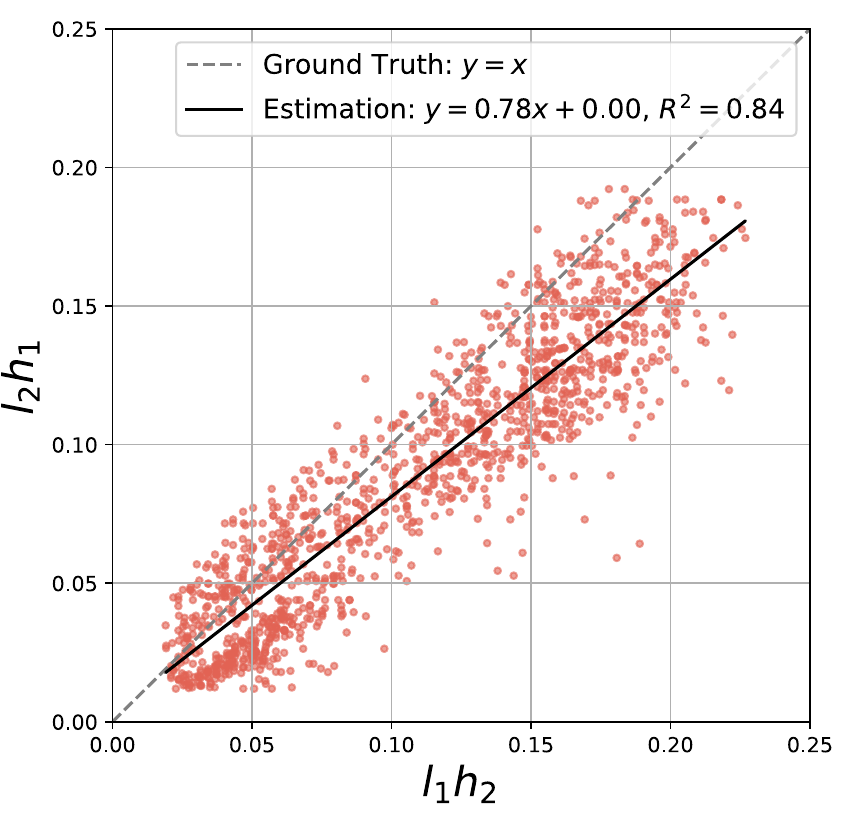}}
    \hfill
    \subfigure[\label{fig:training_size_40000}{Training Data Size $40000$}]{\includegraphics[width=0.3\linewidth]{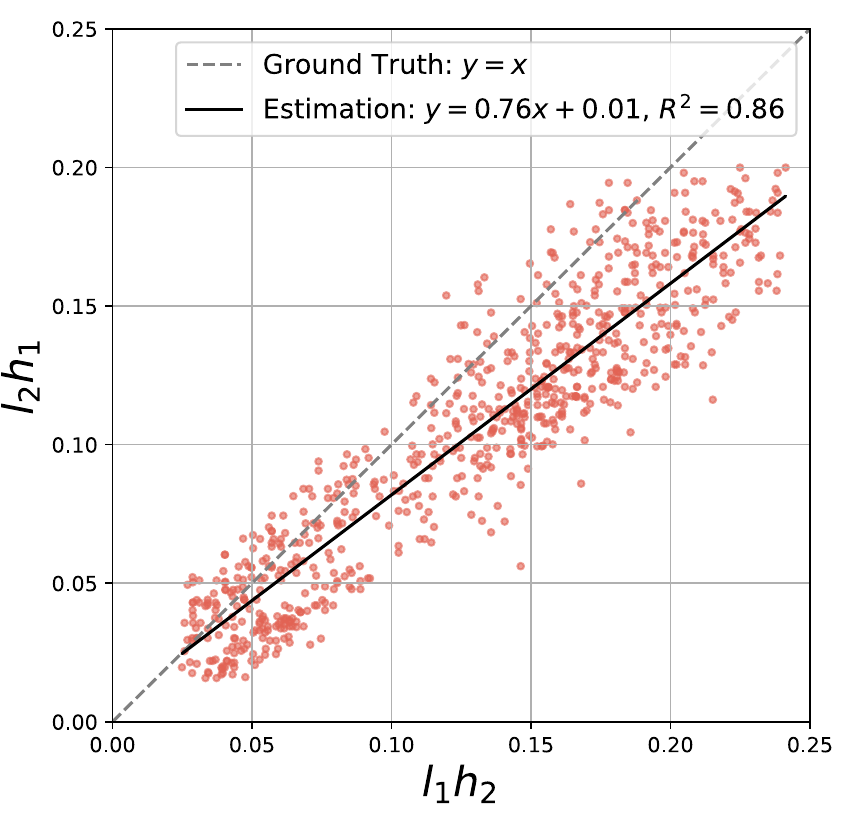}}
    \hfill
     \subfigure[{Image Size $64 \times 64$}]{\label{fig:training_size_64t64}\includegraphics[width=0.3\linewidth]{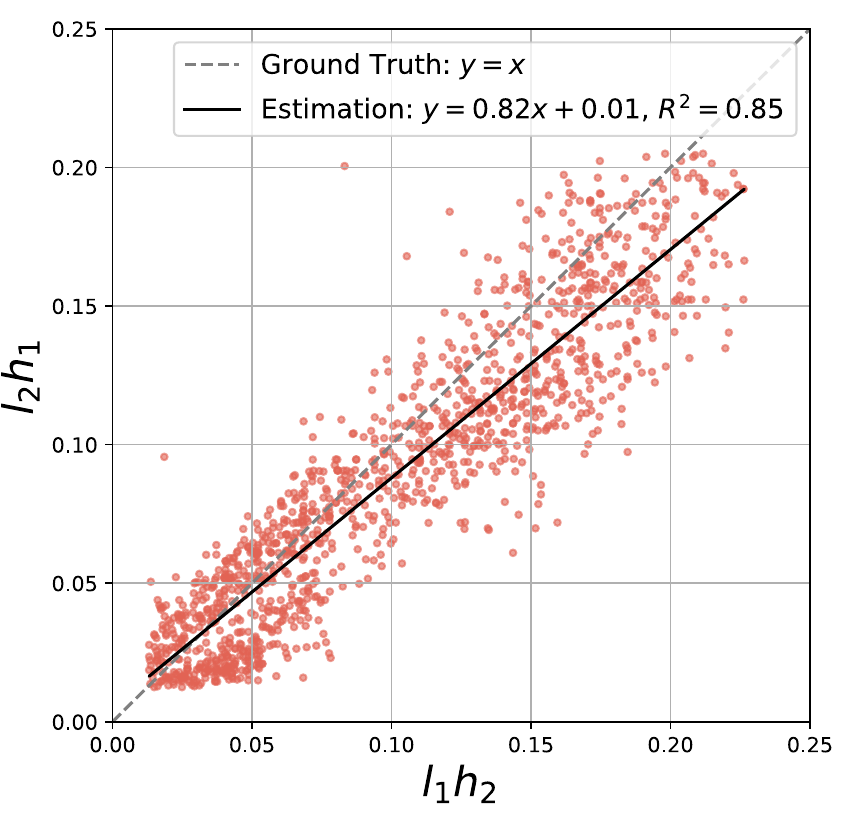}}
    \hfill
\vspace{-0.15in}
\caption{\textbf{DDPM's capability in learning fine-grained rules with increased training samples and larger image sizes.} We observe that increasing training samples and image sizes does not significantly improve DDPM's ability to learn fine-grained rules, as evidenced by the stable Estimation line and $R^2$. This suggests that neither expanding the training dataset nor increasing image resolution alleviates DMs' difficulty in learning fine-grained rules. The visualized generated samples fall within the  interval $[2.5\%, 97.5\%]$.}
\vspace{-0.15in}
\label{fig:training_datasize}
\end{figure}
\paragraph{More Image Size Choice.}Our final consideration is image size. In the main text, the images are only $32 \times 32$. Existing studies suggest that low-resolution images may lead to the loss of details in diffusion models' generation \cite{chen2024image,niu2024acdmsr,li2024scalability}. Therefore, we consider larger input resolutions of $(3,64,64)$, as shown in \cref{fig:training_size_64t64}. We observe almost no improvement in the DMs' ability to learn underlying rules with generated samples that do not violate coarse rules. Due to computational constraints, we were unable to explore even higher resolutions. But it is clear that for a relatively simple task like Task A, which does not contain rich semantics, DMs are unable to recover the underlying feature relationships even at the $64 \times 64$ resolution. This itself highlights the difficulty DMs face in learning hidden features.

\section{Proofs}

\begin{proof}[Proof of Theorem \ref{thm:score}]
Given the independence, we can write $p_t(\bx_t) = p_t(\bx_t^{(1)}, \bx_t^{(2)}) p_t(\bx_t^{(3)}, ..., \bx_t^{(P)})$. We derive $p_t(\bx_t^{(1)}, \bx_t^{(2)})$ as follows. First, we notice that $\bx_t^{(1)} | \zeta \sim \gN(\alpha_t \zeta \bu, \beta_t^2 \bI)$ and $\bx_t^{(2)} |  \zeta \sim \gN(\alpha_t  (1-\zeta) \bv, \beta_t^2 \bI)$. Then we obtain 
\begin{align*}
    p_t(\bx_t^{(1)}, \bx_t^{(2)}) = \sE_{\gD_\zeta} [ \gN( \bmu_t(\zeta), \beta_t^2 \bI_{2d} ) ]
\end{align*}
where we denote $\bmu_t(\zeta) = [\alpha_t  \zeta \bu^\top, \alpha_t  (1- \zeta) \bv^\top ]^\top$. 

Thus the score can be computed as $\nabla \log p_t(\bx_t) = [\nabla \log p_t(\bx_t^{(1)}, \bx_t^{(2)})^\top, \nabla \log p_t(\bx_t^{(3)}, ..., \bx_t^{(P)})^\top]^\top$ where $\nabla \log p_t(\bx_t^{(1)}, \bx_t^{(2)})^\top \in \sR^{2d}$ and can be derived as 
\begin{align*}
    \nabla \log p_t(\bx_t^{(1)}, \bx_t^{(2)}) = \frac{\nabla p_t(\bx_t^{(1)}, \bx_t^{(2)})}{p_t(\bx_t^{(1)}, \bx_t^{(2)})} &= \frac{\sE_{\gD_\zeta} [ \nabla \gN( \bx_t; \bmu_t(\zeta), \beta_t^2 \bI_{2d}) ]}{\sE_{\gD_\zeta} [ \gN( \bx_t;\bmu_t(\zeta), \beta_t^2 \bI_{2d}) ]} \\
    &= \frac{\sE_{\gD_\zeta} \big[ -\gN(\bx_t; \bmu_t(\zeta), \beta_t^2 \bI_{2d})  \beta_t^{-2} \big(  \bx_t - \bmu_t(\zeta) \big)  \big]}{\sE_{\gD_\zeta} [ \gN( \bx_t; \bmu_t(\zeta), \beta_t^2 \bI_{2d}) ]} \\
    &= - \beta_t^{-2} \bx_t + \beta_t^{-2} \sE_{\gD_\zeta} [  \pi_t(\zeta, \bx_t) \bmu_t(\zeta)  ] \\
    &= - \beta_t^{-2} \bx_t + \alpha_t \beta_t^{-2} \begin{bmatrix}
        \sE_{\gD_\zeta} [\pi_t(\zeta, \bx_t) \zeta  ] \bu \\
        \sE_{\gD_\zeta} [\pi_t(\zeta, \bx_t) (1-\zeta) ] \bv 
    \end{bmatrix} 
\end{align*}
where with a slight abuse of notation, we let $\bx_t = [\bx_t^{(1)\top}, \bx_t^{(2)\top}]^\top$ and denote
\begin{equation*}
    \pi_t(\zeta, \bx_t) = \frac{\gN(\bx_t; \bmu_t( \zeta), \bSigma_t)}{\sE_{D_\zeta} [\gN(\bx_t; \bmu_t( \zeta), \bSigma_t)]}.
\end{equation*}
\end{proof}





\begin{proof}[Proof of Theorem \ref{them:multi_poly}]
According to the decomposition of the rule-respecting error in terms of bias and variance, we have $\gE_{\rm mse} = \gE_{\rm bias}^2  + \gE_{\rm variance}$, where we compute 
\begin{align*}
    \gE_{\rm bias} &= \left| \sum_{r=1}^m \sE \Big[ \sigma \big( \langle \bw_r^{(1)}, \bx_t^{(1)} \rangle \big) \Big] \langle \bw_r^{(1)}, \bu \rangle + \sum_{r=1}^m \sE \Big[ \sigma \big( \langle \bw_{r,t}^{(2)} , \bx_t^{(2)} \rangle \big) \Big] \langle \bw_{r,t}^{(2)}, \bv\rangle - \frac{\alpha_t}{\beta_t^2}\right| \\
    \gE_{\rm variance} &= {\rm Var} \Big(  \sum_{r=1}^m \sigma( \langle \bw_{r,t}^{(1)}, \bx_t^{(1)} \rangle  )  \langle \bw_{r,t}^{(1)}, \bu \rangle + \sum_{r=1}^m \sigma( \langle \bw_{r,t}^{(2)}, \bx_t^{(2)} \rangle  )  \langle \bw_{r,t}^{(2)}, \bv \rangle  \Big) 
\end{align*}
where we use the law of total variance and denote ${\rm Var}_{|\zeta} = {\rm Var}(\cdot |\zeta)$.

Given the gradient direction only consists of $\bu, \bw_{r,t}^{(1),0}$ and $\bv, \bw_{r,t}^{(2),0}$ respectively for the two patches, we can decompose the weights $\bw_{r,t}^{(1)}, \bw_{r,t}^{(2)}$ into
\begin{align*}
    \bw_{r,t}^{(1)} = \phi_{r,t} \bw_{r,t}^0 + \gamma_{r,t} \bu \\
    \bw_{r,t}^{(2)} = \varphi_{r,t} \bw_{r,t}^0 + \varsigma_{r,t} \bu
\end{align*}
for $r \in [m]$. In addition, we decompose for each $p = 1,2$
\begin{align*}
    \beps_t^{(p)} = {\beps}_{t,-}^{(p)} +  \beps_{t, \perp}^{(p)} = \gP^{(p)}_0 \beps_t^{(p)} + (I_d - \gP^{(p)}_0) \beps_{t}^{(p)}
\end{align*}
where $\gP^{(p)}_0$ denotes the projection onto the span of $\{ \bw^{(p),0}_{1,t}, ..., \bw^{(p),0}_{m,t}  \}$. Then we can write for the first patch
\begin{align*}
    &\sum_{r=1}^m \sigma(\langle \bw_{r,t}^{(1)}, \bx_t^{(1)} \rangle) \langle \bw_{r,t}^{(1)}, \bu \rangle \\
    &= \sum_{r=1}^m\sigma( \langle \phi_{r,t} \bw_{r,t}^0 + \gamma_{r,t} \bu, \alpha_t \zeta \bu + \beta_t ( {\beps}_{t,-}^{(1)} +  \beps_{t, \perp}^{(1)}  ) \rangle ) \langle \phi_{r,t} \bw_{r,t}^0 + \gamma_{r,t} \bu , \bu  \rangle\\
    &= \sum_{r=1}^m\sigma\Big(  \phi_{r,t} \alpha_t \zeta \langle  \bw_{r,t}^0, \bu \rangle + \gamma_{r,t} \alpha_t \zeta + \beta_t \langle \phi_{r,t} \bw_{r,t}^0 + \gamma_{r,t} \bu, \beps_{t,-}^{(1)} \rangle + \beta_t \gamma_{r,t} \langle \bu, \beps_{t,\perp}^{(1)} \rangle \Big) \Big( \phi_{r,t} \langle \bw_{r,t}^0 ,\bu\rangle + \gamma_{r,t} \Big) \\
    &= \widetilde \sigma^{(1)}( \langle \bu, \beps_{t, \perp}^{(1)} \rangle )
\end{align*} 
where $\widetilde \sigma^{(1)}(\cdot)$ is a polynomial with coefficients depending on $\gamma_{r,t}, \phi_{r,t}, \alpha_t, \beta_t, \zeta$. Similarly, we can write for the second patch that 
\begin{align*}
    &\sum_{r=1}^m \sigma(\langle \bw_{r,t}^{(2)}, \bx_t^{(2)} \rangle) \langle \bw_{r,t}^{(2)}, \bv \rangle  = \widetilde \sigma^{(2)} (\langle \bv, \beps_{t,\perp}^{(2)} \rangle)
\end{align*}
where $\widetilde \sigma^{(2)}(\cdot)$ is a polynomial of the same form as $\widetilde \sigma^{(1)}(\cdot)$ except that $\phi_{r,t}, \gamma_{r,t}, \zeta$ is respectively replaced with $\varphi_{r,t}, \varsigma_{r,t}, 1- \zeta$.
Then we can lower bound the variance by 
\begin{align*}
    \gE_{\rm variance} 
    &\geq \sE_\zeta \Big[ {\rm Var}_{|\zeta} \Big(  \sum_{r=1}^m \sigma( \langle \bw_{r,t}^{(1)}, \bx_t^{(1)} \rangle  )  \langle \bw_{r,t}^{(1)}, \bu \rangle  \Big) + {\rm Var}_{|\zeta} \Big(  \sum_{r=1}^m \sigma( \langle \bw_{r,t}^{(2)}, \bx_t^{(2)} \rangle  )  \langle \bw_{r,t}^{(2)}, \bv \rangle  \Big)  \Big] \\
    &\geq \sE_{\zeta, \beps_{t,-}^{(1)}, \beps_{t,-}^{(2)}} \Big[ {\rm Var}_{|\zeta, \beps_{t,-}^{(1)}} \big( \widetilde \sigma^{(1)}( \langle \bu, \beps_{t, \perp}^{(1)} \rangle ) \big) +  {\rm Var}_{|\zeta, \beps_{t,-}^{(2)}} \big( \widetilde \sigma^{(2)}( \langle \bv, \beps_{t, \perp}^{(2)} \rangle ) \big) \Big]
\end{align*}
where we use law of total variance. 
\end{proof}

\begin{lemma}[\cite{cao2022benign}]
\label{lemma:init_scale}
If $\bw_t^0 \sim \gN(0, \sigma_0^2 \bI)$, we have with probability at least $1-\delta$
\begin{align*}
    &\sigma_0^2 d (1 - \widetilde O(d^{-1/2})) \leq \| \bw_{t}^0 \|^2 \leq \sigma_0^2 d (1 + \widetilde O(d^{-1/2})) \\
    &|\langle \bw_{t}^0, \bu \rangle| \leq \sqrt{2 \log(8/\delta)} \sigma_0,  \\
     &|\langle \bw_{t}^0, \bv \rangle| \leq \sqrt{2 \log(8/\delta)} \sigma_0, 
\end{align*}
\end{lemma}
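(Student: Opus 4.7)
The statement is a standard Gaussian concentration result for an isotropic vector $\bw_t^0 \sim \gN(0, \sigma_0^2 \bI_d)$ and reduces to three independent tail bounds combined by a union bound. The plan is to handle the squared-norm estimate via chi-squared concentration, and the two inner-product estimates via the univariate Gaussian tail, then apply a union bound with total failure probability at most $\delta$ after rescaling.

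\textbf{Step 1: Norm concentration.} I would write $\| \bw_t^0 \|^2 = \sigma_0^2 \sum_{i=1}^d g_i^2$ where $g_i \sim \gN(0,1)$ are i.i.d., so that $\| \bw_t^0 \|^2 / \sigma_0^2$ is $\chi^2_d$-distributed with mean $d$. Applying the Laurent--Massart bound, for any $x>0$,
\begin{equation*}
    \sP\big( \big| \| \bw_t^0 \|^2 / \sigma_0^2 - d \big| \geq 2 \sqrt{d x} + 2 x \big) \leq 2 e^{-x}.
\end{equation*}
Choosing $x = \Theta(\log(1/\delta))$ gives a deviation of order $\sqrt{d \log(1/\delta)}$, so that $\| \bw_t^0\|^2 = \sigma_0^2 d \big(1 \pm \widetilde O(d^{-1/2})\big)$ with the failure probability contribution $\delta/2$ absorbed into $\widetilde O$.

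\textbf{Step 2: Inner product tails.} Since $\| \bu \| = \| \bv \| = 1$, the scalar random variables $\langle \bw_t^0, \bu\rangle$ and $\langle \bw_t^0, \bv\rangle$ are each $\gN(0, \sigma_0^2)$ (note they are correlated only through $\bw_t^0$, but orthogonality of $\bu, \bv$ in fact makes them independent, which is not needed here). The standard Gaussian tail gives, for any $t>0$,
\begin{equation*}
    \sP\big( |\langle \bw_t^0, \bu \rangle| \geq \sigma_0 t  \big) \leq 2 e^{-t^2/2},
\end{equation*}
and identically for $\bv$. Setting $t = \sqrt{2 \log(8/\delta)}$ makes each tail probability at most $\delta/4$.

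\textbf{Step 3: Union bound.} Combining the three events (norm deviation, $\bu$-projection, $\bv$-projection), each failing with probability at most $\delta/2, \delta/4, \delta/4$ respectively, yields the stated conclusion with probability at least $1 - \delta$. The only mildly delicate point is hiding the polylogarithmic dependence on $\delta$ inside $\widetilde O(d^{-1/2})$ in the norm bound, which is the usual convention and requires $d = \widetilde \Omega(1)$ so that $\sqrt{\log(1/\delta)/d} \ll 1$; no new obstacle arises. Since the lemma is invoked directly from \citet{cao2022benign} and the above arguments are textbook concentration inequalities, no substantive difficulty is expected.
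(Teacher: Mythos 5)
Your proof is correct and is the standard argument: chi-squared (Laurent--Massart) concentration for the squared norm, univariate Gaussian tails for the two unit-direction projections (which match the stated constant $\sqrt{2\log(8/\delta)}\,\sigma_0$ exactly at failure probability $\delta/4$ each), and a union bound summing to $\delta$. The paper does not reprove this lemma --- it imports it directly from \citet{cao2022benign} --- and your derivation coincides with the standard one used there, so there is no gap and no meaningful difference in approach.
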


\begin{proof}[Proof of Theorem \ref{thm:main_linear}]
Let $L^{(p)} (\bW_t) = \sE_{\beps_{t}^{(p)}, \bx_{0}^{(p)}} \|  s_w^{(p)}(\bx_t^{(p)}) - \beps_{t}^{(p)} \|^2$. Then the loss can be written as
\begin{align*}
    L(\bW_t) = \sum_{p=1}^P L^{(p)} (\bW_t) =  \sum_{p=1}^P \sE_{\beps_{t}^{(p)}, \bx_{0}^{(p)}} \Big\|  s_w^{(p)}(\bx_t^{(p)}) - \beps_{t}^{(p)} \Big\|^2 = \sum_{p=1}^P  \sE_{\beps_{t}^{(p)}, \bx_t^{(p)}}  \Big\|  \langle \bw_t^{(p)}, \bx_{t}^{(p)} \rangle   \bw_t^{(p)} - \frac{1}{\beta_t^2} \bx_{t}^{(p)} - \beps_{t}^{(p)} \Big\|^2
\end{align*}
We first simplify the loss as follows, where we omit the superscript and consider a single patch due to that each patch is independent and weights are separated.
\begin{align*}
    &\sE_{\beps_{t,i}} \Big\| \langle \bw_t, \bx_{t,i} \rangle \bw_t - \frac{1}{\beta_t^2} \bx_{t,i} - \beps_{t,i}  \Big\|^2 \\
    &= \underbrace{\sE_{\beps_{t,i}} \Big\|  \langle \bw_t, \bx_{t,i} \rangle \bw_t  \Big\|^2}_{I_1} + \underbrace{\sE_{\beps_{t,i}} \Big\|  \frac{1}{\beta_t^2} \bx_{t,i} + \beps_{t,i} \Big\|^2}_{I_2} - 2 \underbrace{\sE_{\beps_{t,i}} \big[ \langle \bw_t  , \bx_{t,i} \rangle \langle \bw_t, \frac{1}{\beta_t^2 }\bx_{t,i} + \beps_{t,i} \rangle \big]}_{I_3}
\end{align*}
where we can compute each term following \citep{han2024feature} as  
\begin{align*}
    &I_1 = \sE_{\beps_{t,i}} [ \langle \bw_t, \bx_{t,i} \rangle^2 ] \| \bw_t \|^2 = \Big( \alpha_t^2 \langle \bw_t, \bx_{0,i} \rangle^2 + \beta_t^2 \| \bw_t \|^2 \Big) \| \bw_t \|^2 \\
    &I_2 = \sE_{\beps_{t,i}} \Big[  \frac{\alpha_t^2}{\beta_t^4} \| \bx_{0,i}  \|^2 + \big( 1 + \frac{1}{\beta_t}\big)^2 \| \beps_{t,i} \|^2  \Big] =  \frac{\alpha_t^2}{\beta_t^4 } \| \bx_{0,i}\|^2 + \Big( 1 + \frac{1}{\beta_t} \Big)^2 d \\
    &I_3  = \frac{\alpha_t}{\beta_t^2} \sE_{\beps_{t,i}} \Big[ \langle \bw_t, \bx_{t,i} \rangle \Big] \langle \bw_t, \bx_{0,i} \rangle + \Big( \frac{1}{\beta_t} +1 \Big) \sE_{\beps_{t,i}} \big[ \langle \bw_t, \bx_{t,i} \rangle \langle \bw_t, \beps_{t,i} \rangle \big]  = \frac{\alpha_t^2}{\beta_t^2}  \langle \bw_t, \bx_{0,i} \rangle^2 + (1 + \beta_t) \| \bw_t \|^2.
\end{align*}
This suggests  
\begin{align*}
    \sE_{\beps_{t,i}} \Big\| \langle \bw_t, \bx_{t,i} \rangle \bw_t - \frac{1}{\beta_t^2} \bx_{t,i} - \beps_{t,i}  \Big\|^2 = \Big( \alpha_t^2 \langle \bw_t, \bx_{0,i} \rangle^2 + \beta_t^2 \| \bw_t \|^2 \Big) \| \bw_t \|^2 - \frac{2\alpha_t^2}{\beta_t^2}  \langle \bw_t, \bx_{0,i} \rangle^2 - 2(1 + \beta_t) \| \bw_t \|^2 + I_2
\end{align*}
where $I_2$ is a constant independent of $\bw_t$. Then we obtain the loss for the first two patches as 
\begin{align*}
    &L^{(1)}(\bw_t^{(1)}) = \big( \alpha_t^2 \sE [\zeta^2] \langle \bw_t^{(1)}, \bu \rangle^2 + \beta_t^2 \| \bw_t^{(1)} \|^2 \big) \| \bw_t^{(1)} \|^2 - \frac{2 \alpha_t^2}{\beta_t^2 } \sE [\zeta^2] \langle \bw_t^{(1)}, \bu \rangle^2 - 2 (1 + \beta_t) \| \bw_t^{(1)} \|^2 + I_2 \\
    &L^{(2)}(\bw_t^{(2)}) = \big( \alpha_t^2 \sE [(1-\zeta)^2] \langle \bw_t^{(2)}, \bv \rangle^2 + \beta_t^2 \| \bw_t^{(2)} \|^2 \big) \| \bw_t^{(2)} \|^2 - \frac{2 \alpha_t^2}{\beta_t^2 } \sE [(1-\zeta)^2] \langle \bw_t^{(2)}, \bv \rangle^2 - 2 (1 + \beta_t) \| \bw_t^{(2)}\|^2 + I_2.
\end{align*}
We next analyze the training dynamics of the gradient descent on the first patch. The second patch follows from similar analysis. For notation clarity, we omit the superscript. 

The gradient for the first patch can be computed as
\begin{align*}
    \nabla L^{(1)}(\bw_t) &=  \| \bw_t \|^2( 2\alpha_t^2 \sE [\zeta^2] \langle \bw_t, \bu \rangle \bu + 2\beta_t^2 \bw_t ) + 2 \Big( \alpha_t^2 \sE [\zeta^2] \langle \bw_t, \bu \rangle^2 + \beta_t^2 \| \bw_t \|^2 \Big) \bw_t \\
    &\quad - \frac{2\alpha_t^2}{\beta_t^2} \sE [\zeta^2] \langle \bw_t, \bu \rangle \bu - 2 (1 + \beta_t) \bw_t.
\end{align*}

It is noticed that the gradient only consists of directions of $\bw_t^0$ and $\bu$. It suffices to track the gradient descent dynamics projected to the two directions $\bu$ and $ \widetilde \bw_{t}^0$, where $\widetilde \bw_t^0 = \bw_t^0 - \langle \bw_t^0, \bu \rangle \bu$, i.e.,
\begin{align*}
    \langle \bw_{t}^{k+1}, \bu \rangle  &= \langle \bw_{t}^{k}, \bu \rangle - \eta \langle \nabla^{(1)} L(\bw_t), \bu \rangle  \\
    &= \left(1  + \eta \Big( 2 \alpha_t^2 \beta_t^{-2} \sE[\zeta^2] + 2(1 + \beta_t) - 2 \alpha_t^2\sE[\zeta^2] \| \bw_t^k \|^2 - 4 \beta_t^2 \| \bw_t^k \|^2 - 2 \alpha_t^2 \sE[\zeta^2] \langle \bw_t^k, \bu\rangle^2 \Big) \right) \langle \bw_t^k, \bu \rangle \\
    \langle \bw_{t}^{k+1}, \widetilde \bw_t^0 \rangle \| \widetilde\bw_t^0\|^{-1}  &= \langle \bw_{t}^{k}, \widetilde\bw_t^0 \rangle \| \widetilde\bw_t^0\|^{-1} - \eta \langle \nabla^{(1)} L(\bw_t), \widetilde \bw_t^0 \rangle \| \widetilde\bw_t^0\|^{-1} \\
    &=\left( 1  + \eta \Big( 2 (1+ \beta_t ) - 4 \beta_t^2 \| \bw_t^k \|^2 - 2 \alpha_t^2 \sE[\zeta^2] \langle \bw_t^k, \bu \rangle^2  \Big) \right) \langle \bw_t^k, \widetilde \bw_t^0 \rangle \| \widetilde\bw_t^0\|^{-1} 
\end{align*}
It is clear that gradients become zero only when $\Theta(\| \bw_{t}^k\|^2 + \langle \bw_t^k, \bu \rangle^2) = \Theta(1)$. This suggests that before convergence, $\| \bw_t^k \|^2 = o(1)$ given the initialization is small, i.e., $\sigma_0 = O(d^{-1/2})$. We can then verify that $\langle \nabla^{(1)} L(\bw_t), \bu \rangle \geq \langle \nabla^{(1)} L(\bw_t), \widetilde \bw_t^0 \rangle + C$ for some constant $C$. 

In addition, suppose we decompose $\bw_{t}^k = \phi_t^k \widetilde \bw_t^0 + \gamma_t^k \bu$, we can see 
\begin{align*}
    &\phi_t^k = \langle \bw_t^k,  \widetilde \bw_t^0 \rangle \| \bw_t^0 \|^{-2}, \quad \gamma_t^k = \langle \bw_t^k, \bu \rangle
\end{align*}
which then implies 
\begin{align*}
    \| \bw_t^k \|^2 = \langle \bw_t^k, \widetilde \bw_t^0 \rangle^2 \| \widetilde \bw_t^0 \|^{-2} + \langle \bw_t^k, \bu \rangle^2.
\end{align*}
This combined with the fact that $\langle \nabla^{(1)} L(\bw_t), \bu \rangle \geq \langle \nabla^{(1)} L(\bw_t), \widetilde \bw_t^0 \rangle + C$ suggests that $\langle \bw_{t}^{k+1}, \widetilde \bw_t^0 \rangle \| \widetilde\bw_t^0\|^{-1}$ cannot increase to $\Theta(1)$ without $\langle \bw_t^k, \bu \rangle$ reaching $\Theta(1)$. Thus at stationary point, we must have both $\langle \bw_t^k, \bu \rangle, \| \bw_t^k\|^2 = \Theta(1)$.

Next, we analyze the stationary point. Given the gradient only consists of directions $\bw_t^k$ and $\bu$, we have for any stationary point $\bw_t$, it satisfies
\begin{align*}
    \langle\nabla L^{(1)}(\bw_t) , \bw_t \rangle &=\| \bw_t \|^2( 2\alpha_t^2 \sE [\zeta^2] \langle \bw_t, \bu \rangle^2 + 2\beta_t^2 \| \bw_t\|^2 ) + 2 \Big( \alpha_t^2 \sE [\zeta^2] \langle \bw_t, \bu \rangle^2 + \beta_t^2 \| \bw_t \|^2 \Big) \| \bw_t \|^2 \\
    &\quad - \frac{2\alpha_t^2}{\beta_t^2} \sE [\zeta^2] \langle \bw_t, \bu \rangle^2 - 2 (1 + \beta_t)  \| \bw_t \|^2 = 0 \\
    \langle\nabla L^{(1)}(\bw_t) , \bu \rangle &= \| \bw_t \|^2( 2\alpha_t^2 \sE [\zeta^2] \mu^2  \langle \bw_t, \bu \rangle + 2\beta_t^2 \langle \bw_t, \bu \rangle ) + 2 \Big( \alpha_t^2 \sE [\zeta^2] \langle \bw_t, \bu \rangle^2 + \beta_t^2 \| \bw_t \|^2 \Big) \langle \bw_t, \bu \rangle \\
    &\quad - \frac{2\alpha_t^2}{\beta_t^2} \sE [\zeta^2] \mu^2 \langle \bw_t, \bu \rangle  - 2 (1 + \beta_t) \langle \bw_t , \bu \rangle = 0
\end{align*}

\begin{figure}[]
  \centering  \includegraphics[width=0.95\textwidth, height=0.4\textheight]{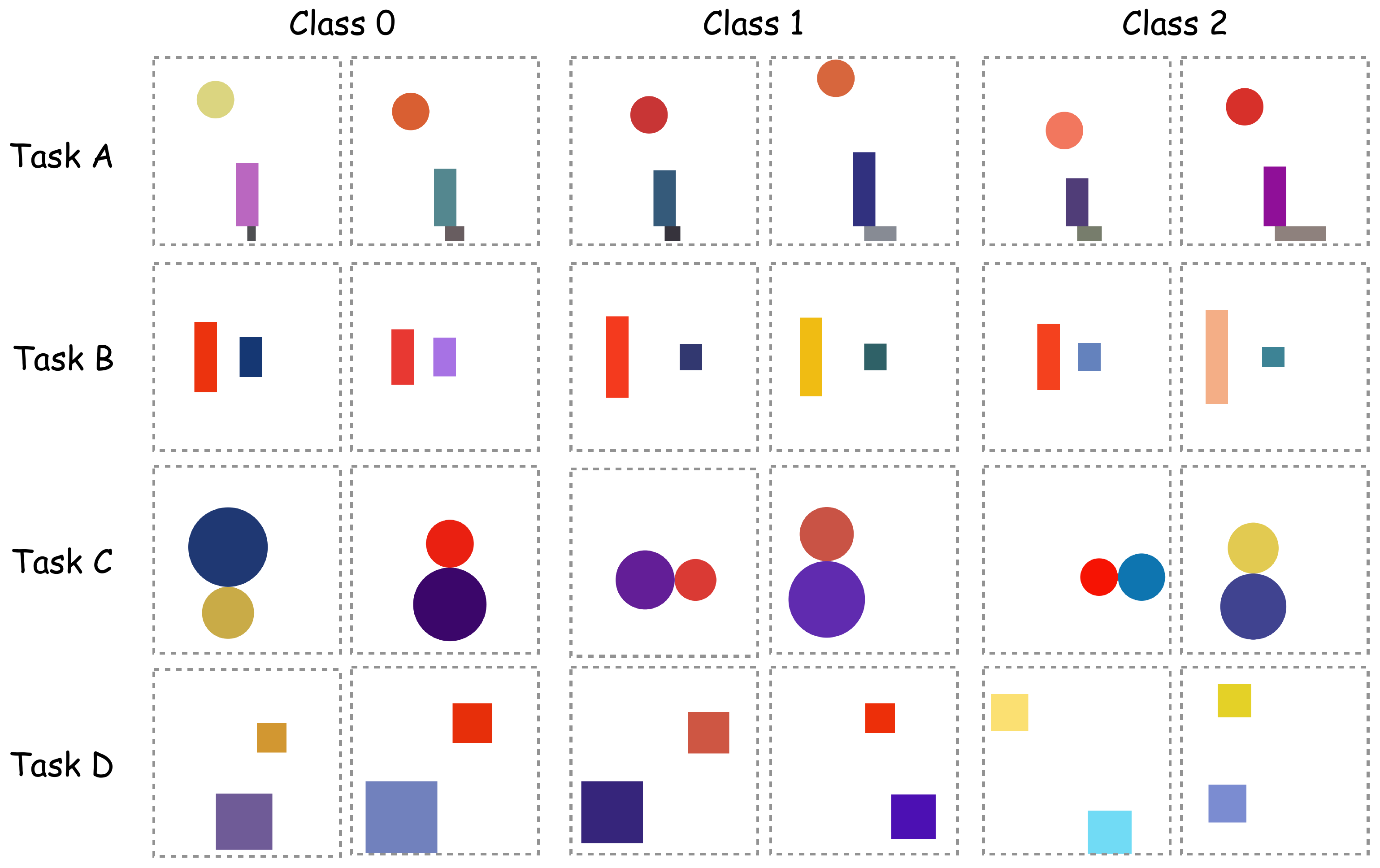}
  \vspace*{-5mm}
  \caption{\textbf{Constructed contrastive data} includes three classes, which differ in fine-grained rules.}
  \label{fig:case_contrastive}
\end{figure}

We solve the stationary equalities as 
\begin{align*}
    &\| \bw_t^{(1)} \|^2 = \frac{6 \alpha_t^2 \beta_t^{-2} \sE[\zeta^2] + 6 + 2 \beta_t \pm \sqrt{4 \alpha_t^4 \beta_t^{-4} (\sE [\zeta^2])^2 + (56 + 16 \beta_t) \alpha_t^2 \beta_t^{-2} \sE[\zeta^2] + 28 + 16 \beta_t + 4 \beta_t^2 }}{8 \alpha_t^2 \sE[\zeta^2] + 8 \beta_t^2 }\\
    &\langle\bw_t^{(1)}, \bu \rangle^2 = \frac{\| \bw_t^{(1)}\|^2}{\alpha_t^2\sE[\zeta^2]} \frac{1 + \beta_t - 2 \beta_t^2 \| \bw_t^{(1)} \|^2}{2  \| \bw_t^{(1)} \|^2 - \beta_t^{-2} }
\end{align*}
Similarly, we can compute and solve the stationary point for the second patch where $\sE[\zeta^2]$ is replaced with $\sE[(1- \zeta)^2]$. 

We then compute the bias error as 
\begin{align*}
    \gE_{\rm bias}
    &= \sE_{\beps_{t}, \bx_0} \Big[  \langle \bw_t^{(1)}, \bx_t^{(1)} \rangle \langle \bw_t^{(1)}, \bu \rangle +  \langle \bw_t^{(2)} , \bx_t^{(2)} \rangle  \langle \bw_t^{(2)}, \bv \rangle  \Big] - \frac{\alpha_t}{\beta_t^2 }\\
    &= \Big| \alpha_t \sE[\zeta] \langle \bw_t^{(1)}, \bu \rangle^2 + \alpha_t \sE[1-\zeta] \langle \bw_t^{(2)}, \bv\rangle^2 - \frac{\alpha_t}{\beta_t^2}  \Big| \\
    &=\left| \frac{\sE[\zeta] \| \bw_t^{(1)}  \|^2}{\alpha_t \sE[\zeta^2]} \frac{1 + \beta_t - 2 \beta_t^2 \| \bw_t^{(1)} \|^2}{2  \| \bw_t^{(1)} \|^2 - \beta_t^{-2} } + \frac{\sE[1-\zeta] \| \bw_t^{(2)}  \|^2}{\alpha_t \sE[(1-\zeta)^2]} \frac{1 + \beta_t - 2 \beta_t^2 \| \bw_t^{(2)} \|^2}{2  \| \bw_t^{(2)} \|^2 - \beta_t^{-2} } - \frac{\alpha_t}{\beta_t^2} \right| = C_0({\sE[\zeta], \sE[\zeta^2], \alpha_t, \beta_t})
\end{align*}
It can be easily verified that there exists a constant bias $C_0$ that depends on $\sE[\zeta], \sE[\zeta^2], \alpha_t, \beta_t$. In addition, we compute the variance as 
\begin{align*}
    \gE_{\rm variance} &= \Big( \alpha_t^2 \sE[\zeta^2] \langle \bw_t^{(1)}, \bu \rangle^2 + \beta_t^2 \|\bw_t^{(1)} \|^2\Big) \langle \bw_t^{(1)}, \bu \rangle^2 + \Big( \alpha_t^2 \sE[(1-\zeta)^2] \langle \bw_t^{(2)}, \bv \rangle^2 + \beta_t^2 \| \bw_t^{(2)}\|^2 \Big) \langle \bw_t^{(2)}, \bv \rangle^2 \\
    &\quad + 2 \alpha_t^2 \sE[\zeta(1- \zeta)] \langle \bw_t^{(1)}, \bu \rangle^2 \langle \bw_t^{(2)}, \bv \rangle^2  - \Big( \alpha_t \sE[\zeta] \langle \bw_t^{(1)}, \bu \rangle^2 + \alpha_t \sE[1-\zeta] \langle \bw_t^{(2)}, \bv\rangle^2  \Big)^2 \\
    &= \alpha_t^2  {\rm Var}(\zeta) \langle \bw_{t}^{(1)}, \bu \rangle^4 + \alpha_t^2 {\rm Var}(1-\zeta) \langle \bw_{t}^{(2)}, \bv \rangle^4 - 2 \alpha_t^2 {\rm Cov}(\zeta, 1- \zeta) \langle \bw_t^{(1)}, \bu \rangle^2 \langle \bw_t^{(2)}, \bv \rangle^2  \\
    &\quad + \beta_t^2 \| \bw_t^{(1)} \|^2 \langle \bw_t^{(1)}, \bu \rangle^2 + \beta_t^2 \| \bw_t^{(2)} \|^2 \langle \bw_t^{(2)}, \bv \rangle^2\\
    &=  \alpha_t^2 {\rm Var}\Big( \zeta \langle\bw_{t}^{(1)}, \bu \rangle^2 - (1-\zeta) \langle\bw_t^{(2)}, \bv  \rangle^2 \Big)  +  \beta_t^2 \| \bw_t^{(1)} \|^2 \langle \bw_t^{(1)}, \bu \rangle^2 + \beta_t^2 \| \bw_t^{(2)} \|^2 \langle \bw_t^{(2)}, \bv \rangle^2 \\
    &= C_1(\sE[\zeta], \sE[\zeta^2], \alpha_t, \beta_t) > 0
\end{align*}
where we see $\sE[A^2] - \sE[A]^2 = {\rm Var}(A) \geq 0$ for arbitrary random variable $A$. 
\end{proof}

\section{Experiment Details on Synthetic Data with Two-layer Diffusion Model}
\label{app:synthe_two_layer}

In order to verify the theoretical claims on DMs failing to precisely recover the inter-feature rule \eqref{eq:hidden_rule} (in Section \ref{sec:Theory}), we conduct numerical experiments on a two-layer diffusion model on a two-patch data distribution.

Specifically we set $\bx = [\bx^{(1)\top}, \bx^{(2)\top}]$ where $\bx^{(1)} = \zeta \bu$, $\bx^{(2)\top} = (1-\zeta) \bv$. Here we set $\bu = [1, 0,\cdots0] \in \sR^d$, $\bv = [0,1,0,\cdots0] \in \sR^d$ with $d = 100$. The score network follows the structure in \eqref{eq:score_network_main} where we consider $\sigma(\cdot)$ to be ReLU, linear, quadratic and cubic activation functions. We set network width $m = 20$. To simulate the DDPM loss in expectation, for each epoch, we sample $n = 1000$ input data $\bx_{0,i}, i \in [n]$ and for each data we sample $n_\epsilon = 1000$ standard Gaussian noise $\beps_{t,i,j}, i,j \in [1000]$, and consider minimizing the empirical loss 
\begin{equation*}
    L(\bW_t) = \frac{1}{n n_\epsilon} \sum_{i = 1}^n \sum_{j=1}^{n_\epsilon} \sum_{p=1}^2 \| s_w(\bx_{t,i,j}^{(p)}) - \beps_{t,i,j}^{(p)} \|^2
\end{equation*}
where $\bx_{t,i,j}^{(p)} = \alpha_t \bx_{0,i}^{(p)} + \beta_t \beps_{t,i,j}^{(p)}$, $p =1,2$. We use gradient descent to train the score network for $5000$ epochs. We consider $\alpha_t = \exp(-t)$ and $\beta_t = \sqrt{1-\exp(-2t)}$ where we set $t = 0.2, 0.4, 0.6, 0.8$. We then check whether learned diffusion models learn the ground-truth rule \eqref{eq:hidden_rule} by plotting the distribution of $\psi_t(\bx_t)$ against $\alpha_t/\beta_t^2$. The distribution of $\psi_t(\bx_t)$ is estimated with $5000$ samples $\bx_t$.

\section{Details of Mitigation Strategies}
\subsection{Details of Guided Diffusion}
\label{app:Details of Guided Diffusion}
Guided Diffusion is a common strategy that trains an additional classifier to guide DDPM generation towards desired samples during the sampling process.
\begin{figure*}[]
\centering
    \hfill
    \subfigure[Task A]{\includegraphics[width=0.24\textwidth]{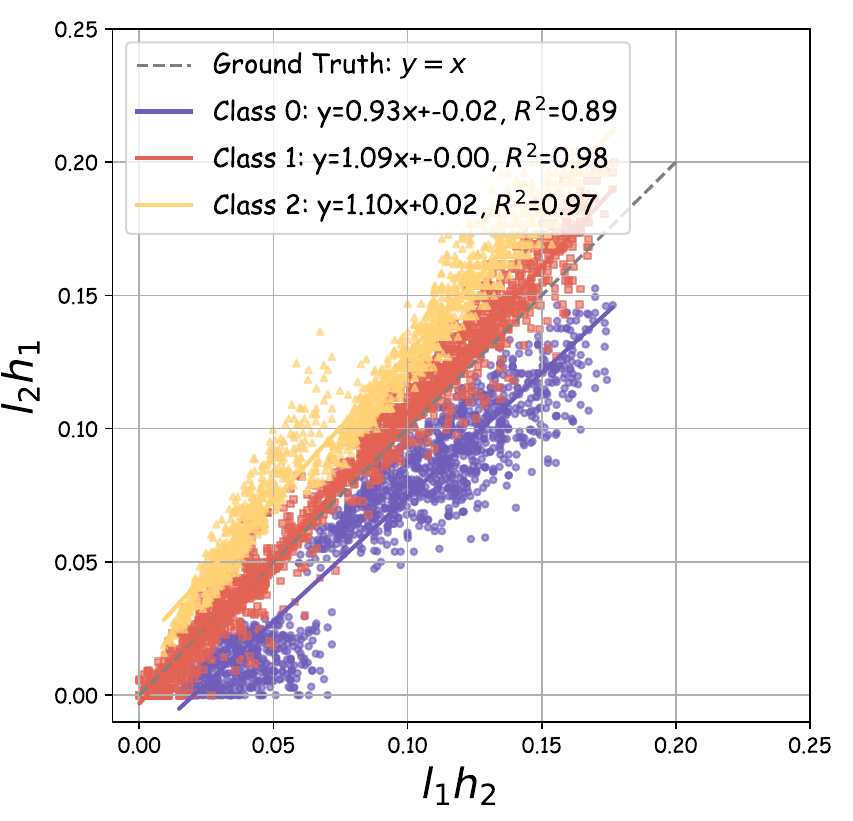}}
    \hfill
    \subfigure[Task B]{\includegraphics[width=0.24\textwidth]{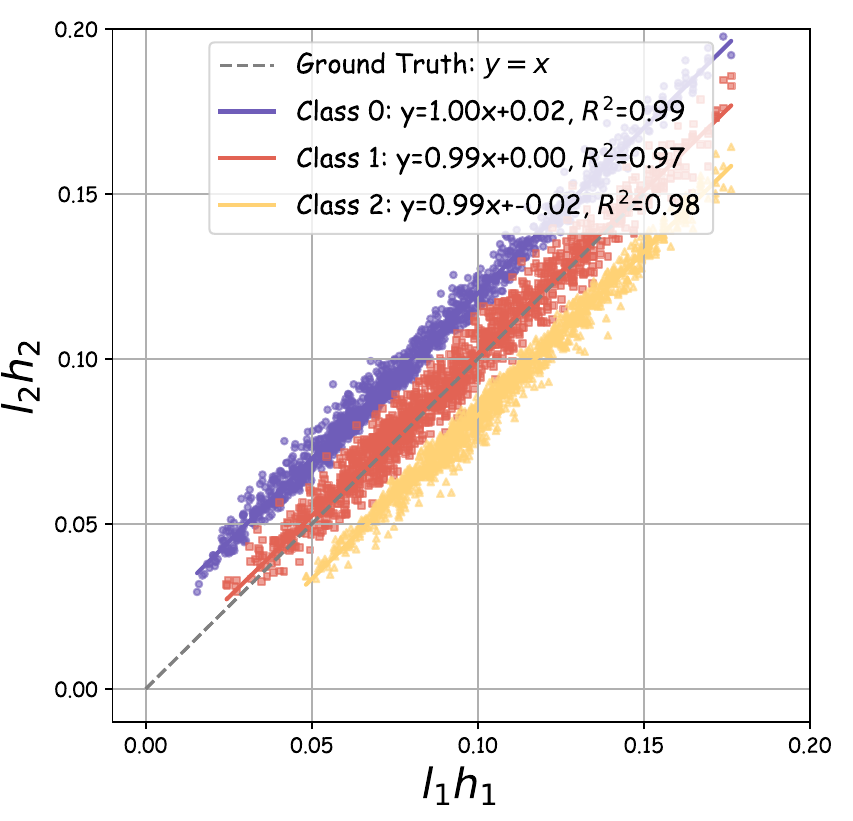}}
    \hfill
    \subfigure[Task C]{\includegraphics[width=0.24\textwidth]{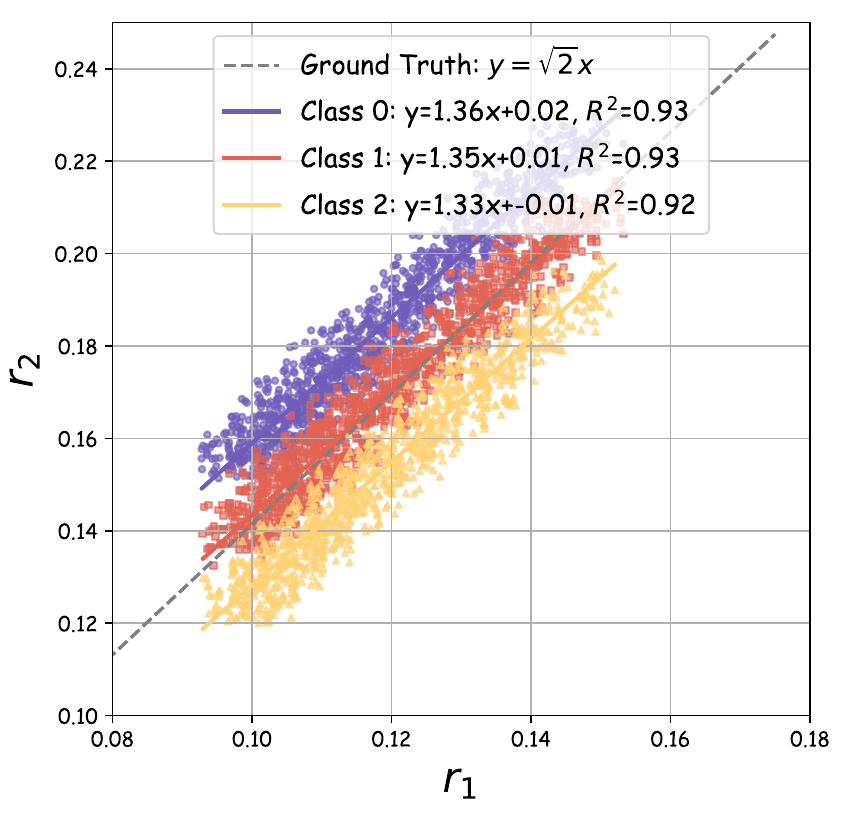}}
    \hfill
    \subfigure[Task D]{\includegraphics[width=0.24\textwidth]{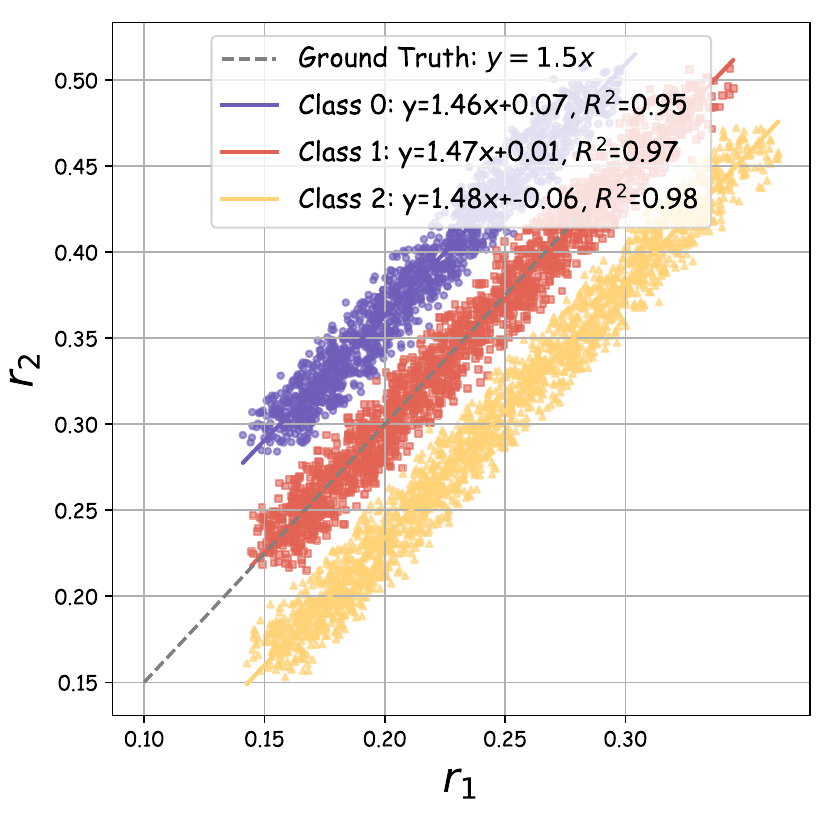}}
    \hfill
\vspace{-0.1in}
\caption{\textbf{Construction of contrastive training data.} For each task, we build a three-class dataset where Class $1$ represents samples satisfying fine-grained rules, while Classes $0$ and $2$ represent samples that only satisfy coarse-grained rules. Based on these constructed contrastive datasets, we train classifiers as additional guidance to improve DDPM's generation.}
\vspace{-0.15in}
\label{fig:more_contrastive}
\end{figure*}

\paragraph{Training Details and Results.} This section includes the details of training classifiers with contrastive learning as guidance. \cref{fig:case_contrastive} visualizes the constructed contrastive data, where each dataset includes three sample types that differ only in fine-grained rules and appear nearly identical at a glance. \cref{fig:more_contrastive} visualizes the contrastive datasets constructed for each of the four synthetic tasks. The classifier training for each task is treated as a three-class classification problem with $2000$ positive samples (class $1$) and $2000$ samples per negative class (classes $0$ and $2$). We use U-Net as the classifier architecture, trained for $20000$ iterations with a learning rate of $3e-4$, and a contrastive learning weight $\lambda = 1$. Beyond standard guided diffusion, we dynamically adjust guidance weights (gradient scales) with a piecewise strategy where guidance is activated only in the final $20$ denoising steps. The weight linearly increases from $0$ to predefined gradient scale factors ($7$ for Tasks A/C, $10$ for Tasks B/D). Through comparation of constant versus piecewise weighting, we report optimal strategies: Task A,B,D for standard sweighting method and Task C employs the piecewise weighting method. As noted in Section \ref{sec:limitation}, training high-accuracy classifiers is not easy in our problem, as evidenced by the accuracy of the training data for Tasks A, B, C, and D being $0.57$, $0.51$, $0.55$, and $0.63$, respectively.
\begin{figure*}[]
\centering
    \hfill
    \subfigure[Task A]{\includegraphics[width=0.24\textwidth]{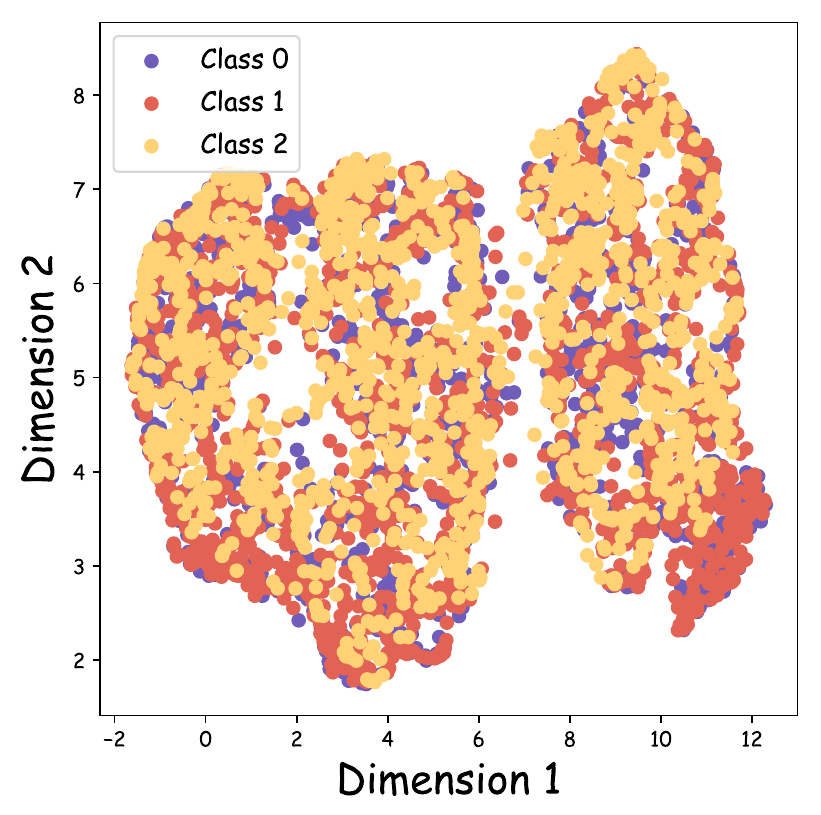}}
    \hfill
    \subfigure[Task B]{\includegraphics[width=0.24\textwidth]{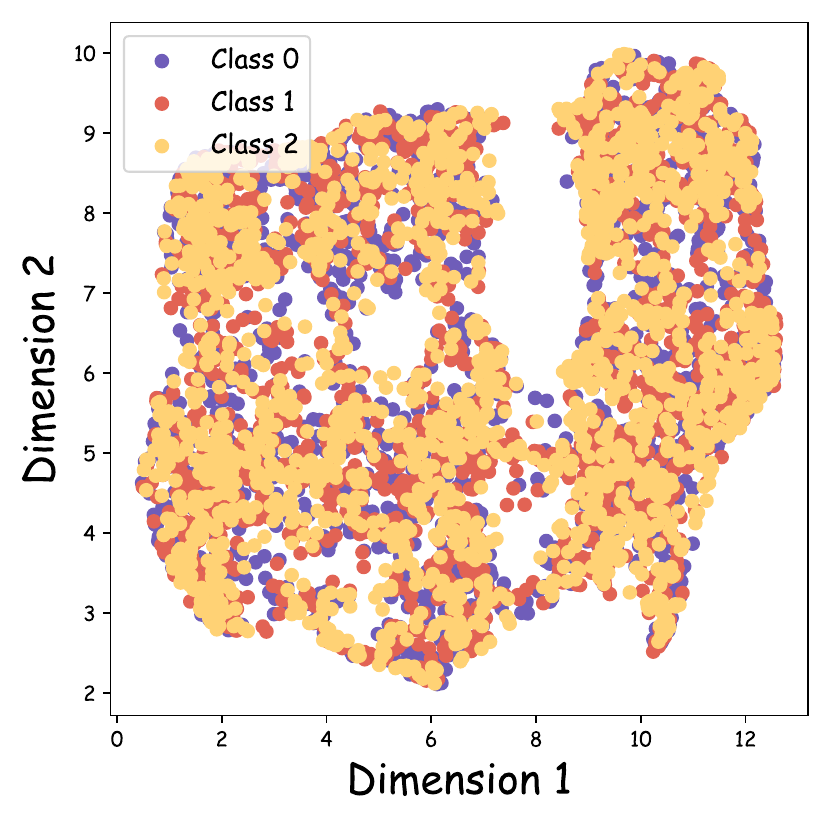}}
    \hfill
    \subfigure[Task C]{\includegraphics[width=0.24\textwidth]{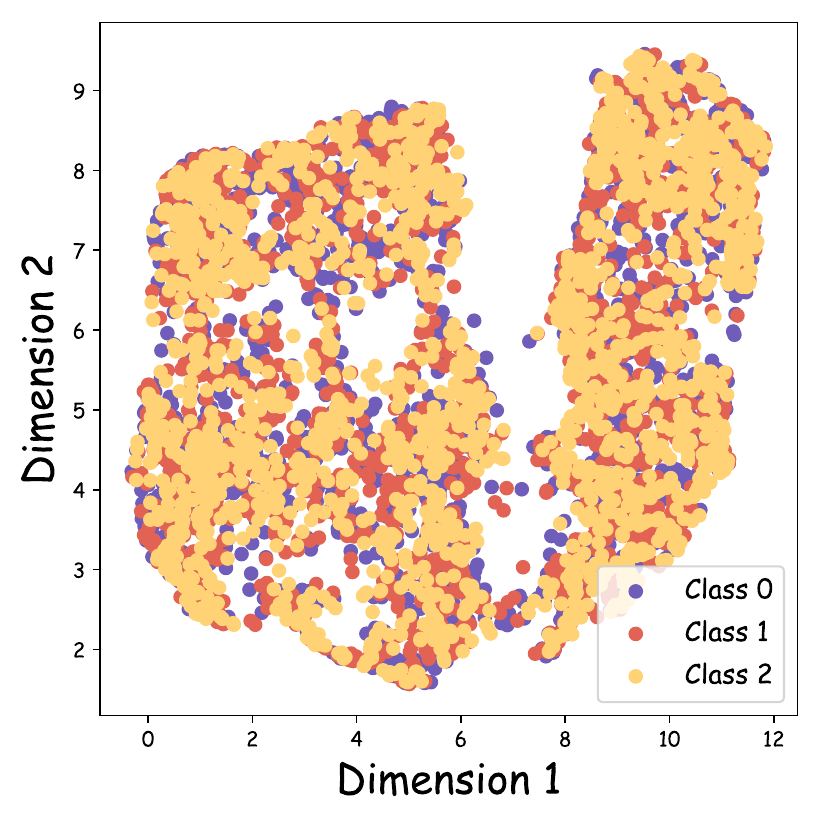}}
    \hfill
    \subfigure[Task D]{\includegraphics[width=0.24\textwidth]{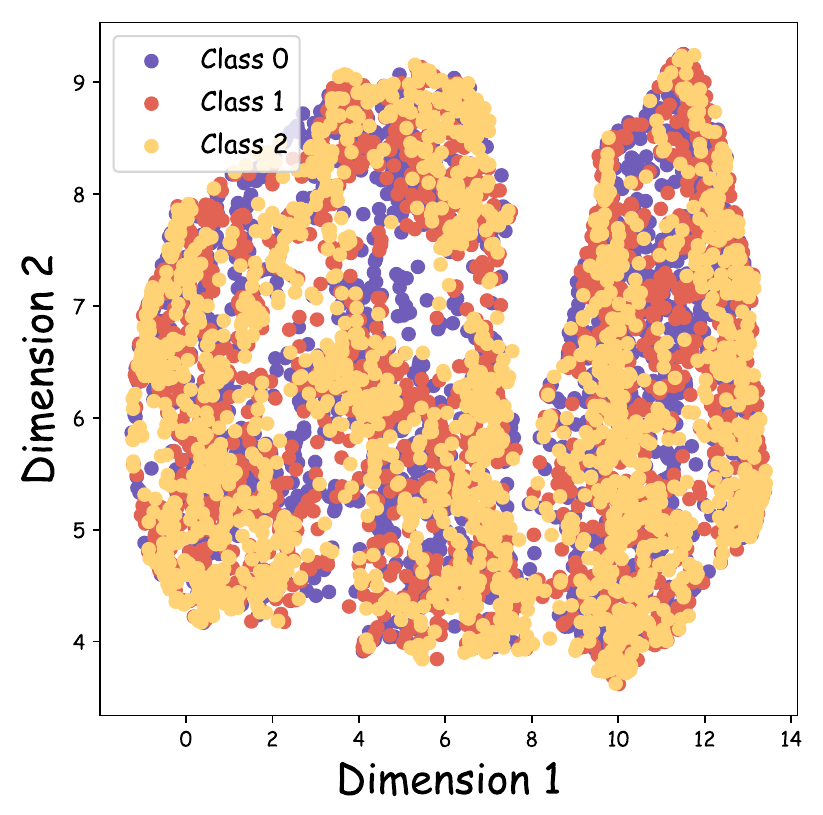}}
    \hfill
\vspace{-0.1in}
\caption{\textbf{CLIP representation of contrastive training data}. For each task, we use the CLIP model to extract its representations and apply UMAP for dimensionality reduction. We observe that the contrastive data is nearly inseparable, which presents a challenge for training the classifier.}
\vspace{-0.15in}
\label{fig:more_contrastive_clip}
\end{figure*}
\paragraph{NT-Xent Loss.} NT-Xent Loss (Normalized Temperature-scaled Cross Entropy Loss) \cite{sohn2016improved} is commonly used in contrastive learning to measure the similarity between positive pairs (similar samples) and distinguish them from negative pairs (dissimilar samples). 
\begin{align}
    \mathcal{L}_{\text{NT-Xent}}(i, j) = -\log \left( \frac{\exp(\text{sim}(\mathbf{z}_i, \mathbf{z}_j)/\tau)}{\sum_{k=1}^{2N} \mathbf{1}_{[k \neq i]} \exp(\text{sim}(\mathbf{z}_i, \mathbf{z}_k)/\tau)} \right),
\end{align}
Where \( \mathbf{z}_i \) and \( \mathbf{z}_j \) are the embeddings of the \(i\)-th and \(j\)-th samples, \( \text{sim}(\mathbf{z}_i, \mathbf{z}_j) = \frac{\mathbf{z}_i^\top \mathbf{z}_j}{\|\mathbf{z}_i\| \|\mathbf{z}_j\|} \) is the cosine similarity and \( \tau \) is the temperature parameter that scales the similarity which  we set $\tau = 0.5$ in our experiments.
\subsection{Details of Filtered DDPM}
Filtered DDPM is a more straightforward strategy that uses a classifier trained on raw images to filter DDPM generations, keeping only samples predicted to satisfy fine-grained rules.
\paragraph{Training Details and Results.}Based on the contrastive data constructed in \cref{fig:more_contrastive}, we split the training and test data in an 80:20 ratio and directly train a three-way classifier in the raw image space, using MLP, ResNet-8, and U-Net architectures. 
\begin{wrapfigure}{r}{0.25\textwidth}
\begin{center}
    \includegraphics[width=0.98\linewidth]{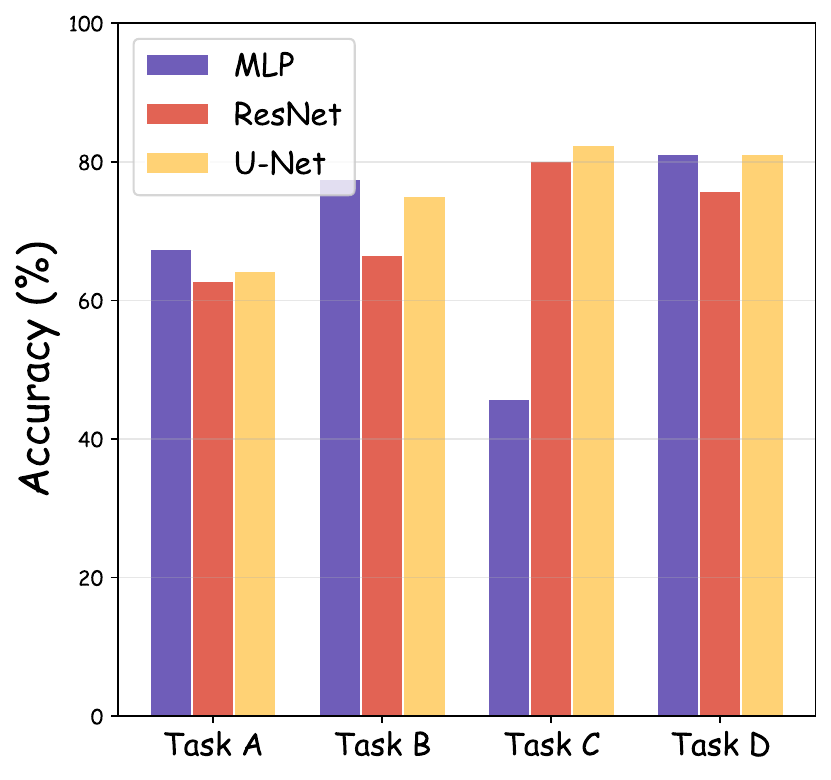}
\end{center}
\vspace{-0.2in}
\caption{Test accuracy of different model architectures on contrastive datasets.} 
\vspace{-0.5in}
\label{fig:test_acc} 
\end{wrapfigure}
These models are trained for $100$ epochs with a learning rate of $3e-4$. As shown in \cref{fig:test_acc}, the classifiers achieve accuracy between $60\%$ and $80\%$. While they outperform classifiers trained for guided diffusion due to the noise-free setting, they still fail to achieve $100\%$ accuracy, even for these simple synthesis tasks. Additionally, \cref{fig:more_contrastive_clip} shows the representations extracted by CLIP \cite{radford2021learning} for each synthetic task, followed by dimensionality reduction using UMAP \cite{mcinnes2018umap}. We observe that the data from different categories in the contrastive data is difficult to distinguish, which presents a challenge for training the classifier. Based on test accuracy, we use the trained MLP model to filter DDPM generations for Tasks A and B, keeping only samples predicted as Class 1 (satisfying fine-grained rules). For Tasks C and D, we use the U-Net model for filtering.




\end{document}